\documentclass{article}

\usepackage{microtype}
\usepackage{graphicx}
\usepackage{subcaption}
\usepackage{booktabs} 
\usepackage[english]{babel}
\usepackage[autostyle, english = american]{csquotes}
\MakeOuterQuote{"}
\usepackage{hyperref}
\usepackage{enumitem}

\usepackage[preprint]{icml2026}

\usepackage{amsmath}
\usepackage{amssymb}
\usepackage{mathtools}
\usepackage{amsthm}
\usepackage{booktabs}
\usepackage{siunitx}
\usepackage{soul}
\usepackage{enumitem}
\usepackage{graphicx}
\usepackage{overpic}

\usepackage[capitalize,noabbrev]{cleveref}

\theoremstyle{plain}
\newtheorem{theorem}{Theorem}[section]
\newtheorem{proposition}[theorem]{Proposition}
\newtheorem{lemma}[theorem]{Lemma}
\newtheorem{claim}[theorem]{Claim}

\theoremstyle{definition}

\theoremstyle{remark}
\newtheorem{remark}[theorem]{Remark}
\newcommand{\bb}{\mathbb}

\usepackage[textsize=tiny]{todonotes}

\icmltitlerunning{The Cost of Learning Under Multiple Change Points}

\begin{document}

\onecolumn
  \icmltitle{The Cost of Learning Under Multiple Change Points}



  \icmlsetsymbol{equal}{*}

\begin{center}
    \begin{tabular*}{0.85\textwidth}{@{\extracolsep{\fill}}ccc}
      \textbf{Tomer Gafni} & \textbf{Garud Iyengar} & \textbf{Assaf Zeevi} \\
      Columbia University & Columbia University & Columbia University \\
    \end{tabular*}
  \end{center}

  \icmlcorrespondingauthor{Tomer Gafni}{tbg2124@columbia.edu.}

  \icmlkeywords{Machine Learning, ICML}

  \vskip 0.3in



\printAffiliationsAndNotice{A version of this work has been accepted for publication in the Proceedings of the 43rd International Conference on Machine Learning (ICML 2026), Seoul, South Korea}  

\begin{abstract}
We consider an online learning problem in environments with multiple change points. 
In contrast to the single change point problem that is widely studied using classical "high confidence" detection schemes, the multiple change point environment  presents new learning-theoretic and algorithmic challenges. Specifically, we show that classical methods may exhibit catastrophic failure (high regret) due to a phenomenon we refer to as  \textit{endogenous confounding}. To overcome this, we propose a new class of learning algorithms dubbed Anytime Tracking CUSUM (ATC).  These are horizon-free online algorithms
that implement a selective detection principle, balancing the need to  ignore "small" (hard-to-detect) shifts, while reacting ``quickly'' to significant ones. We prove that the performance of a properly tuned ATC algorithm is nearly minimax-optimal;  its regret is guaranteed to closely match a novel information-theoretic  lower bound on the achievable performance of any learning algorithm in the multiple change point problem. Experiments on synthetic as well as  real-world data validate the aforementioned theoretical findings. 
\end{abstract}

\section{Introduction} \label{sec:introduction}

\subsection{Background}\label{sseq:background}

Online learning studies sequential decision-making problems in which a learner observes a stream of noisy data and selects actions to optimize some objective (typically, minimize regret)   \cite{cesa2006prediction}.
In many real-world settings, the data-generating process is non-stationary and often manifests as  distribution shifts at unknown points over the problem horizon \cite{hartland2006multi}. A simple instance of this problem is where the mean undergoes abrupt changes;  identifying the latter is the subject of a vast literature on {\it change point detection} \cite{siegmund1995using, liang2022non}.  

\begin{figure}  \centering
\includegraphics[width=0.7\linewidth]{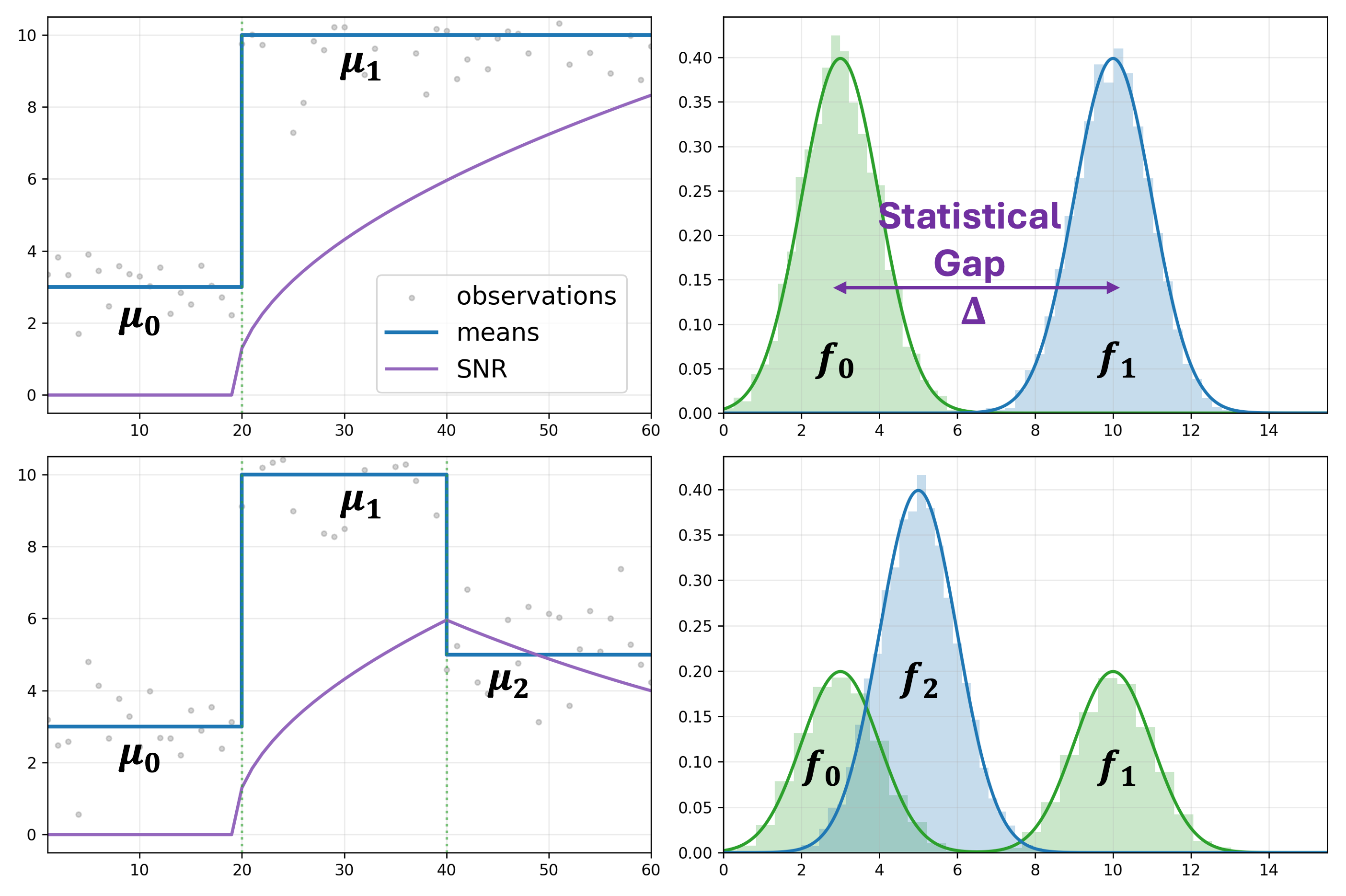} \caption{Effect of endogenous confounding on detection signal-to-noise ratio (SNR).  
Top: After a change at $t=20$, accumulating post-change samples increases
the SNR of the detection statistic. 
Bottom: After the second change at $t=40$, failing to discard outdated
samples from $f_0$ causes the reference statistic to be computed from a
mixture of $f_0$ and $f_1$ (right), reducing statistical separability with
respect to $\mu_2$, thereby degrading the SNR for detecting $f_2$.} 
\vspace{-0.3cm}
 \label{fig:intro} \end{figure}

This paper adopts an online learning perspective to study the fundamental trade-offs in the multiple change point setting. While the single-change problem is well understood
\cite{basseville1993detection}, multiple changes introduce new
difficulties, and existing extensions remain limited and typically rely on
detectability assumptions such as minimum spacing between changes or
minimum jump sizes \cite{maillard2019sequential,yu2023note}. 
Such difficulties are particularly pronounced in non-parametric settings,
where the reference distribution used for testing is not known a priori and must be estimated from past samples.

To illustrate some of the challenges facing the learner in such problems, consider first the {\it single} 
change point setting. Standard learning and detection methods (e.g., $\delta$-PAC) focus on  quickly identifying with high probability that a change has occurred; this is akin to the best arm identification under fixed confidence in the multi-armed bandit setting. 
When the pre- and post-change distributions are separated by a "statistical gap" (e.g., difference in mean) $\Delta$, the sample complexity to reach  confidence level $1-\delta$ typically scales as  $\Delta^{-2}\log(1/\delta)$
\cite{garivier2016optimal}. 

A salient property of the single change point problem is that once a change has taken place, the post-change distribution remains fixed, so the more time "passes" after the change point has manifested, the greater the likelihood of identifying that the distribution has shifted. In contrast, in the multiple change point problem,  if a detection algorithm fails to identify a change,  then from that point on the reference distribution is no longer correctly specified.  Consequently, additional data that is collected after said change-point may act as ``noise'', confounding the ability to detect the next change point.   This artifact may cascade into subsequent detections, progressively degrading the power of the learning algorithm, and ultimately causing a breakdown in detection performance.

We illustrate this phenomenon in the  mean-shift setting
(Fig.~\ref{fig:intro}).
With a single change (top panel), samples collected before the change are informative as they provide a clean reference for distinguishing the post-change distribution $f_1$ from the pre-change distribution $f_0$, e.g., via estimating the mean $\mu_0$.
However, after a change to $f_2$ (bottom panel), these same samples degrade detection: when incorporated into the reference statistics, they induce a mixture of $f_0$ and $f_1$ that reduces  separability for detecting $f_2$.
This constitutes a form of \emph{endogenous confounding}: the learner's own failures to detect changes and discard outdated data determine the reference statistics used subsequently,  possibly exacerbating the difficulty of detecting the next change.

To the best of our knowledge, \cite{yu2023note} are among the first to point out that, absent highly restrictive assumptions, methods that focus on high-confidence detection do not extend meaningfully to
multiple change-point settings. In the present paper we offer a slightly different perspective: rather than imposing such assumptions, we propose a learning-theoretic perspective that assigns a cost to incorrect detections, and evaluates performance via cumulative regret. The latter naturally encapsulates the fundamental costs of learning in the presence of multiple change points. This includes  detection delays, which depend on the intrinsic difficulty of each shift, false alarms, and, importantly, the endogenous confounding induced by missed detections, a phenomenon that is ruled out by the high-confidence assumptions typically imposed in the change-point literature.

We study this model through an online tracking problem under a dynamic regret formulation (i.e., where the oracle dynamically tracks the true underlying  statistical model at all points in time), which serves as a canonical setting for capturing these considerations. A learner sequentially observes noisy sub-Gaussian samples
$\{X_t\}_{t=1}^T$ whose mean $\mu_t$ evolves in a piecewise-constant 
manner with $S$  change points over a  horizon $T$. The mean sequence $\{\mu_t\}$ and the parameters $T$ and $S$ are
\textit{unknown} to the learner. 
At each time step, the learner predicts the mean $\hat{\mu}_t$, and
performance is measured by the expected cumulative squared error 
\(
\mathbb{E}\!\left[\sum_{t=2}^T (\hat{\mu}_t - \mu_t)^2\right],
\)
corresponding to dynamic regret with respect to the time-varying mean
sequence~\cite{jadbabaie2015online}. 
Extant work on learning under abrupt changes typically imposes detectability assumptions \cite{maillard2019sequential, yu2023note,
  huang2025sequential}. In contrast, we impose no assumptions on minimum shift magnitude or spacing between change points; instead, we frame the problem in a \emph{minimax} (worst-case) regret framework. In this setting, missed detections may be unavoidable, and the resulting confounding poses both analytical and algorithmic challenges.

This formulation arises in a wide range of applications, including demand
tracking for real-time resource allocation in online platforms (e.g.,
ride-hailing), where abrupt shifts may be caused by exogenous
environmental shocks (e.g., extreme weather events), multi-period  control problems, and online data compression.  We further discuss
concrete application scenarios and extensions to higher-dimensional
problems in later sections. 

\subsection{Main contributions}\label{ssec:contributions}
Minimizing dynamic regret in our setting requires managing 
the trade-off between {\it adaptivity} and {\it stability}: rapidly discarding past data after genuine shifts are detected reduces bias; but overly reactive behavior (discarding informative data) increases variance. 
Endogenous confounding adds a new dimension to this tension: decisions made under one regime influence  the statistical landscape faced by future detection and estimation tasks. 
Our central insight is that \emph{attempting to detect every change is neither necessary nor desirable.}
We show that the regret cost associated with missed detections is not uniform - "small" mean shifts or shifts that last for a short duration do not incur a high cost. 
The challenge is to appropriately classify said changes. Our algorithm employs a time-varying detection threshold that effectively executes this classification: it detects sufficiently large shifts quickly  and incurs minimal cost due to missed changes. 
In particular, changes whose magnitude remains below the statistical resolution induced by the uncertainty of the current reference estimate need not be detected, as their contribution to the regret remains controlled.

Our main contributions are summarized below.

\textbf{An anytime tracking algorithm.} We propose the \emph{Anytime
  Tracking CUSUM (ATC)} algorithm (Section \ref{sec:algorithm}) that tracks the evolving mean, without knowledge of the horizon~$T$ or the number of changes~$S$. The key design feature here is a fully data-driven adaptive restart with a selective detection mechanism, which enables stability (few
unnecessary restarts), adaptivity (fast detection when shifts are genuine), and robustness to missed detections. 

\textbf{Endogenous confounding.}
We quantify the cost of missed detections, causing outdated data to act as noise, and affecting future detection power and regret performance. The key to this analysis  is  bounding the induced reduction in signal to noise ratio (Proposition~\ref{lem:confounding}). 

\textbf{Logarithmic upper bound on regret.}
We prove a non-asymptotic upper bound on the dynamic regret
(Theorem~\ref{th:upper-bound}) showing that, even in worst-case
environments where shifts may be statistically undetectable, the dynamic
regret grows at most on the order of $O(\sigma^2 (S+1) \log T)$ up to
problem-dependent constants. 

\textbf{A lower bound on achievable performance and minimax optimality.}
We complement the upper bound with a novel information-theoretic lower bound (Theorem~\ref{th:lower-bound}) of order $\Omega\big(\sigma^2 (S+1) \log(T/(S+1))\big)$, showing that ATC is nearly minimax optimal.

We validate the theory with extensive experiments on both synthetic and real data (Section~\ref{sec:simulations}). 

\subsection{Related literature}\label{ssec:related-literature}
We discuss related work from two complementary perspectives.

\textbf{Online learning in non-stationary environments.}
Early formulations of non-stationary online learning were developed under adversarial models, where performance is measured relative to the best fixed comparator in hindsight~\cite{auer2002nonstochastic,zinkevich2003online}.
In many cases, the dynamic regret is a stronger performance criterion \cite{jadbabaie2015online} and has been extensively considered in recent years~\cite{zhao2024adaptivity,qian2024efficient}, with squared loss naturally arising, as in online non-parametric regression~\cite{baby2019online}.
Without additional structure, obtaining a bound on dynamic regret is not possible; however, with suitable regularity of the comparator sequence~\cite{besbes2015non}, sublinear regret guarantees are achievable.
A widely studied form of regularity is piecewise stationarity, which provides a structured statistical observation model and has been investigated in various online learning formulations~\cite{chen2022adaptive,hou2024almost}.
\citet{huang2025stability} develop an adaptive window-selection principle for statistical learning under gradual non-stationarity via quasi-stationary segmentation. In contrast, our focus is on abrupt multiple change points and the effect of missed detections in restart-based procedures.
Another particularly relevant line of work is multi-armed bandits in piecewise-stationary environments, where the learner must also account for partial feedback and action selection.
Existing approaches in this setting typically fall into two broad classes.
\emph{Passively adaptive strategies} discard past observations through discounting or sliding-window schemes~\cite{garivier2011upper}.
\emph{Active adaptive strategies} combine a base learner with explicit restarts triggered by an online change-detection module~\cite{liu2018change, cao2019nearly,besson2022efficient}.
These methods rely on high-probability detection guarantees, typically derived under detectability assumptions, as the key ingredient in the regret analysis.
Our formulation isolates the statistical change-point component
and studies performance in the absence of any detectability assumptions.
This perspective may help inform the design of active adaptive methods for piecewise-stationary bandits without such assumptions.

\textbf{Quickest change detection (QCD).}
QCD studies abrupt distribution shifts with the goal of minimizing detection delay under constraints on false-alarm probability.
Classical foundations go back to Page’s cumulative-sum (CUSUM) procedure~\cite{page1954continuous}, with minimax formulations introduced in~\cite{lorden1971procedures}; see~\cite{poor2009quickest,xie2021sequential} for surveys and monographs.
Much of the literature addresses a single change point under a known pre-change distribution, with  extensions allowing unknown pre- and post-change distributions~\cite{lai2010sequential}.
Computationally efficient CUSUM-type procedures have also been developed for unknown post-change parameters, e.g., using sliding-window estimates within recursive CUSUM updates~\cite{xie2023window}.
Extensions to multiple change points are studied in~\cite{maillard2019sequential,yu2023note}. These works also rely on separability conditions to guarantee detectability of each change. To the best of our knowledge, a minimax formulation of online QCD with multiple change points, without  detectability assumptions, has not been studied.
Our procedure is also distinct from joint   detection-and-estimation
formulations~\cite{moustakides2012joint,chen2013optimal}, where
estimation of a parameter or state is part of the terminal inferential
task. 
In 
contrast, we use the squared tracking error 
criterion 
to quantify the learning cost induced by detection delays, missed detections, and confounding.

We finally note a vast literature on offline multiple change-point detection, as well as Bayesian and Markovian formulations of related problems.
While these works share conceptual similarities in modeling regime changes, their settings, analytical goals, and methodological approaches differ from those considered here.

\section{Problem formulation}\label{sec:problem-formulation}
We study online tracking of a system mean parameter that undergoes abrupt shifts over time.
The learner observes a sequence of real-valued random variables $\{X_1, \ldots, X_T\}$.
The mean shifts are induced by an unknown deterministic sequence of $S$ change points
\(
1=\tau_0 < \tau_1 < \cdots < \tau_S < \tau_{S+1}=T+1,
\)
such that the observations $\{X_t\}_{t=1}^T$ are independent, and for each $j\in\{0,\ldots,S\}$, the segment
\(
\mathcal{I}_j := [\tau_j,\tau_{j+1})
\)
consists of independent and identically distributed samples with mean $\mathbb{E}[X_t] = \mu_j$, where $\mu_j$ is unknown.
For convenience, we define the time-varying mean 
$\mu_t := \mu_j$  whenever  $t\in \mathcal{I}_j$,
so that $\mu_t = \mathbb{E}[X_t]$ for all $t$.
We assume that the centered variables $(X_t - \mu_t)$ are sub-Gaussian with known variance proxy
$\sigma^2$ \footnote{That is, there exists a known constant $\sigma^2 > 0$ such that, for all 
$t \ge 1$ and all $\lambda \in \mathbb{R}$,
\(
\mathbb{E}\big[e^{\lambda (X_t - \mu_t)}\big]
\;\le\;
\exp\!\big(\tfrac{\sigma^2 \lambda^2}{2}\big).
\)}. 
As is standard in online learning to ensure finite regret bounds~\cite{hazan2007logarithmic}, we assume 
that there exists an unknown constant $M \in (0,\infty)$ such that 
\begin{equation} \label{eq:diam}
\Delta_{\mathrm{diam}}
\;:= \;
\max_{0\le u,v\le S} |\mu_u-\mu_v|
\;\le\; M.
\end{equation}
Figure~\ref{fig:algorithm} (top) illustrates a typical environment of this form.

Let $\mathcal{F}_t := \sigma(X_1,\ldots, X_t)$ be the natural filtration, and let $\Pi$ denote a class of (possibly randomized) policies, where each $\pi\in\Pi$
outputs a prediction $\hat\mu_t^\pi$ at time $t$ that is measurable with respect to $\mathcal{F}_{t-1}$ (and the policy's internal randomness).
An environment $\nu$ is specified by the change points and segment means,
i.e., $\nu = (\{\tau_j\}_{j=1}^{S}, \{\mu_j\}_{j=0}^S)$, and we write $\mathbb{E}_{\pi,\nu}$
for expectation under this data-generating process and the policy's internal randomness.
We evaluate $\pi$ in environment $\nu$ by its expected cumulative squared error (a dynamic regret criterion):
\begin{equation} \label{eq:regret}
\mathcal{R}_T(\pi,\nu) = \mathbb{E}_{\pi,\nu}\Big[\sum_{t=2}^T (\hat\mu_t^\pi-\mu_t)^2\Big].
\end{equation}
For a policy $\pi \in \Pi$ and an environment class $\mathcal{E}$, define the worst-case regret
\(
\mathcal{R}_T(\pi;\mathcal{E}) \;:= \; \sup_{\nu \in \mathcal{E}} \mathcal{R}_T(\pi,\nu).
\)
The \emph{minimax} regret over $\mathcal{E}$ is
\begin{equation} \label{eq:minimax-regret}
\mathcal{R}_T^\star(\mathcal{E}) \;:= \; \inf_{\pi \in \Pi} \mathcal{R}_T(\pi;\mathcal{E})
\;=\;
\inf_{\pi \in \Pi}\ \sup_{\nu \in \mathcal{E}} \mathcal{R}_T(\pi,\nu).
\end{equation}
Let $\mathcal{E}_{S,T}(\sigma^2,M)$ denote the class of piecewise-stationary mean environments
with at most $S$ change points over horizon $T$, sub-Gaussian noise with proxy $\sigma^2$,
and satisfying Eq.~\eqref{eq:diam}.
Throughout, we take $\mathcal E = \mathcal{E}_{S,T}(\sigma^2,M)$.

\subsection{Illustrative Examples of the Model}\label{ssec:examples}
The piecewise-stationary mean-tracking formulation arises in a broad range of domains, including  large-scale online platforms (e.g., ride-hailing, food delivery, and e-commerce) where demand
is tracked for real-time resource allocation~\cite{chen2021short}, and data-center resource monitoring~\cite{maghakian2019online}. Closely related problems also arise in other research domains, such as universal compression of piecewise-stationary sources \cite{shamir2002low} and adaptive design in multiperiod control \cite{lai1979adaptive, jedra2022minimal}.
We next present two illustrative examples.

\textbf{Example 1 (Ride-hailing).}
Consider a ride-hailing platform (e.g., Uber/Lyft) that tracks local demand in a fixed geographic zone over time slots $t=1,\ldots,T$ (e.g., 5-minute intervals) to support real-time driver positioning and dispatch.
Let $\lambda_t$ denote the latent request intensity (demand rate) in that zone, which can change over time due to exogenous factors such as weather, public events, or transit disruptions \cite{ xu2025understanding}.
A standard abstraction is that $\lambda_t$ is piecewise constant with unknown change points $\{\tau_j\}$.

In time slot $t$ (of length $\delta$), let $N_t$ be the number of incoming ride requests and define the observation
$X_t := N_t/\delta$.
Then $\mathbb{E}[X_t]=\lambda_t$, so identifying $\mu_t \equiv \lambda_t$ reduces demand tracking to our mean-tracking formulation.
Operationally, the platform uses an online estimate $\hat\lambda_t$ to set supply targets (e.g., how many drivers to reposition into the zone).
Under smoothness/curvature conditions on a convex mismatch cost that penalizes under- and over-supply (e.g., via waiting time, lost requests, and driver idle time), the incremental performance loss from using $\hat\lambda_t$ instead of $\lambda_t$ is locally quadratic in the estimation error and therefore scales as $(\hat\lambda_t-\lambda_t)^2$.
Consequently, $\mathbb{E}[\sum_{t=2}^T (\hat\lambda_t-\lambda_t)^2]$ serves as a meaningful proxy for cumulative operational loss.

\textbf{Example 2 (Piecewise-stationary source coding).}
Consider a streaming compressor that losslessly encodes a binary sequence
$X_t \in \{0,1\}$ whose distribution changes abruptly, e.g., a sensor toggling between operating modes \cite{chittam2018universal}.
In each stationary segment, symbols are independent with $\mathbb{P}(X_t=1)=\theta_t$, where the parameter $\theta_t$ is piecewise constant with unknown change points.
A strongly sequential encoder (e.g., arithmetic coding with online probability assignment) maintains an estimate $\hat\theta_t$ and incurs the per-symbol ideal codelength: 
\(
\ell_t \;=\; -\log\!\big(\hat\theta_t^{X_t}(1-\hat\theta_t)^{1-X_t}\big).
\)
The expected excess codelength relative to the true (unknown) model equals the Kullback–Leibler (KL) divergence between two Bernoulli distributions with success probabilities $\theta_t$ and $\hat\theta_t$, which is locally quadratic in $|\hat\theta_t-\theta_t|$ when $\theta_t$ is bounded away from $0$ and $1$.
Thus, controlling $\mathbb{E}[\sum_{t=2}^T (\hat\theta_t-\theta_t)^2]$ serves as a meaningful proxy for controlling cumulative redundancy.
Missing a change causes the estimator to mix samples across segments, resulting in a mismatched probability model and degraded compression performance.

\begin{figure}
    \centering
    \includegraphics[width=0.7\linewidth]{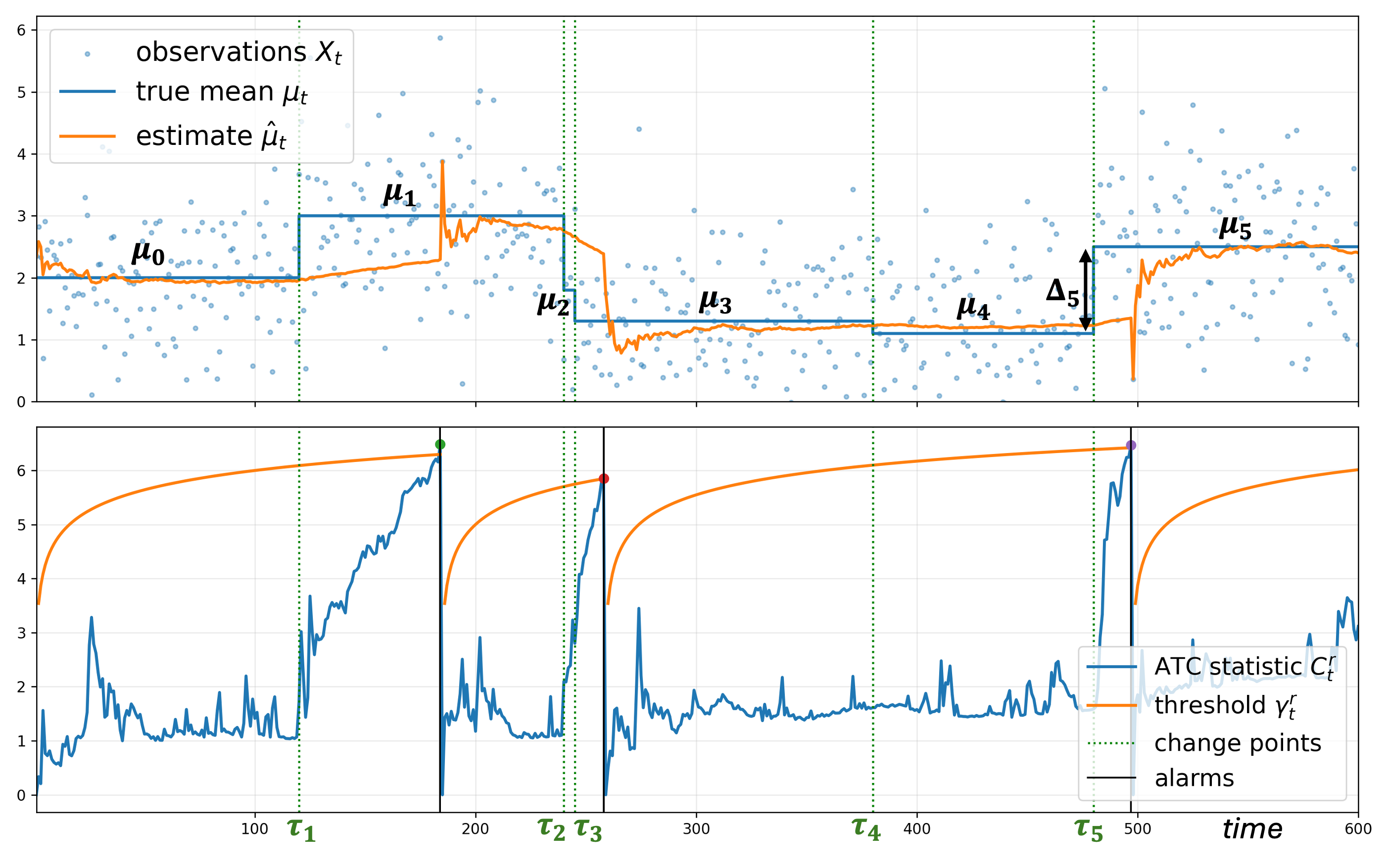}
  \caption{
  Example of an online tracking instance with multiple change points.
  The top panel shows the underlying piecewise-stationary environment,
  while the bottom panel illustrates the evolution of the detection statistic, the decision thresholds, and the alarms raised by the algorithm. 
  }
  \label{fig:algorithm} \vspace{-0.3cm}
\end{figure}

\section{The Anytime Tracking CUSUM (ATC) algorithm}\label{sec:algorithm}
In this section, we present the
Anytime Tracking CUSUM (ATC) algorithm.
ATC is an anytime online algorithm that only requires knowledge of the sub-Gaussian proxy $\sigma$. In particular, it  does \emph{not} require prior knowledge of the horizon $T$ or the number of change points $S$. 
We describe the structure of ATC in
Section~\ref{ssec:algorithm-structure}, analyze the stability–adaptivity trade-off in 
Section~\ref{ssec:algorithm-analysis}, and quantify the endogenous confounding effect of missed detections in Section~\ref{ssec:confounding}.

\subsection{Algorithm structure}\label{ssec:algorithm-structure}
\paragraph{Detection phase:} ATC uses a CUSUM-style statistic to determine whether a change has occurred. At time $t$, the algorithm has access to observations up to time $t-1$ and maintains a restart time $r <t$, corresponding to the most recent detected change. 
For each ``split point'' $k$ with $r < k<t$, define the two block averages
\[
  \bar X_{r:k-1} \;:=\; \frac{1}{k-r}\sum_{i=r}^{k-1} X_i,
  \qquad
  \bar X_{k:t-1} \;:=\; \frac{1}{t-k}\sum_{i=k}^{t-1} X_i.
\]
Inspired by generalized likelihood ratio (GLR) tests for unknown means~\cite{lai2010sequential}, we use the standardized two-sample
mean-difference statistic 
\begin{equation}
\label{eq:D_statistic}
\hat D_{k,t}^r
\;:=\;
\frac{1}{\sigma} \sqrt{\frac{(k-r)(t-k)}{(t-r)}}\;
\Big|\bar X_{r:k-1} - \bar X_{k:t-1}\Big|,
\end{equation}
to define the \emph{ATC statistic} 
\begin{equation}\label{eq:ATC-statistic}
C_t^r \;\coloneqq\; \max_{r < k < t} \hat D^r_{k,t}.
\end{equation}
Intuitively, $\hat D_{k,t}^r$ compares samples before and after a
candidate split point $k$, and the scan over $k$ searches for the
strongest evidence of a change since the last restart. 

The corresponding \emph{ATC test} that determines the alarm and restart time (stopping time) is
\begin{align}\label{eq:ATC-test}
N_G^r 
\;\coloneqq\; \inf\Big\{t > r:\ C_t^r \ge \gamma_t^r\Big\},
\end{align}
with the  time-varying threshold
\begin{equation}\label{eq:threshold}
\gamma_t^r
\;:=\;
\sqrt{
6 \log (t-r)
+
2\log (1/\alpha_r)
+
2 \log(\pi^2/3)
},
\end{equation}
where $\alpha_r  := \frac{6}{\pi^2}\frac{\alpha}{r^2}$ for $\alpha \in(0,1)$ \footnote{$\alpha$ is a user-specified tuning parameter. Our logarithmic regret bound holds for any fixed $\alpha \in (0,1)$ independent of $T$ and $S$.}.
The term \(6\log(t-r)\) provides uniform control over the scan statistic across all times \(t\) and split points \(k\) within a fixed restart interval, while the summable allocation \(\alpha_r\) ensures that this control holds uniformly over the possible restart times of the algorithm.
After an alarm at time $t$, ATC restarts by setting $r\gets t-1$.

\paragraph{Prediction phase.}
At each time $t > r$, ATC outputs the running average since the last restart $r$,
\begin{equation}\label{eq:ATC-output}
    \hat \mu_t^r \;=\; \frac{1}{t-r} \sum_{i=r}^{t-1} X_i.
\end{equation}
Ideally, we want an alarm to occur immediately after each mean shift to eliminate bias from outdated samples, while no alarms should occur during stationary periods, allowing the algorithm to retain all data and thereby reduce variance.

The detection statistic of ATC can be viewed as a generalization of the
classical CUSUM procedure~\cite{lorden1971procedures} to a fully online
setting with multiple change points and with unknown pre- and post-change distributions. Algorithm~\ref{alg:ATC} summarizes the proposed
procedure, and Figure~\ref{fig:algorithm} illustrates a representative run
of ATC. The bottom panel plots the detector $C_t^r$ together with the 
threshold $\gamma_t^r$. 
Detected changes correspond to sharp increases of $C_t^r$ that cross the
threshold, after which $\hat\mu_t^r$ rapidly re-centers around the new
mean.  In contrast, some changes do not trigger a restart (missed
detections), reflecting ATC’s selective detection principle. 

\subsection{Balancing the stability-adaptivity trade-off}\label{ssec:algorithm-analysis}
ATC raises an alarm when the scan statistic $C_t^r$ exceeds the
time-varying threshold $\gamma_t^r$. 
To understand how this rule balances stability and adaptivity, it is
helpful to consider the population analogue of the two-sample statistic: 
\begin{align}\label{eq:D_population}
D_{k,t}^r
\;:=\;
\frac{1}{\sigma}\sqrt{\frac{(k-r)(t-k)}{(t-r)}}\,
\big|\bar\mu_{r:k-1}-\bar\mu_{k:t-1}\big|,
\end{align}
where $\bar \mu_{m,n} := \frac{1}{n-m+1}\sum_{i=m}^n \mu_i$.
We define the SNR governing detection as the population analogue of  $C_t^r$:
\begin{equation}\label{eq:SNR_def}
\mathrm{SNR}(t;r)
\;:=\;
\Big(\max_{r<k<t} D_{k,t}^r\Big)^2.
\end{equation}
This quantity characterizes the power of the ATC test at time $t$, given the last restart time $r$. In the Gaussian case, it is closely related to the KL divergence.
\paragraph{Stability.}
When no change occurs during $[r,t)$,  the mean is constant and
$D_{k,t}^r=0$ for all $k$. 
The logarithmic growth of $\gamma_t^r$ guarantees uniform concentration of
$\hat D_{k,t}^r$ around its population value over all split points $k$,
ensuring that false alarms, and hence unnecessary restarts, occur with low
probability. 
This behavior is illustrated by stationary segments in Figure~\ref{fig:algorithm}, where $C_t^r$ fluctuates below the threshold.
\paragraph{Adaptivity.}
Consider a change at time $\tau_j$. In the ideal case where the previous change was detected and the algorithm restarted at $\tau_{j-1}$ (matching the single-change setting), the SNR is attained at the aligned split $k^\star=\tau_j$ and equals
\begin{equation}
\label{eq:statistic_SNR}
\mathrm{SNR}_j^\star(t) := \mathrm{SNR}_j(t;\tau_{j-1})
\;=\;
\frac{(\tau_j-\tau_{j-1})(t-\tau_j)}{t-\tau_{j-1}}\;\frac{\Delta_j^2}{\sigma^2},
\end{equation}
with $\Delta_j := |\mu_j-\mu_{j-1}|$. As expected, Eq.~\eqref{eq:statistic_SNR} implies that larger shifts induce higher SNR. As post-change samples accumulate, the SNR increases, allowing sufficiently large shifts to cross the threshold within a logarithmic delay. 
At the same time, ATC remains \emph{selective} in the sense that it does not require every shift to be detected before the next change: shifts whose accumulated evidence remains below the threshold may remain undetected to preserve stability, and their effect is accounted for through the confounding analysis discussed below.
This behavior is visible in Figure~\ref{fig:algorithm}, where $C_t^r$
rises sharply and triggers a restart for large shifts, while smaller or short-lived shifts, such as those at $\tau_2$ and $\tau_4$, are missed. 

\subsection{SNR degradation due to missed detection}\label{ssec:confounding}
When a shift is missed, ATC does not restart, and subsequent detection
statistics are computed using data spanning multiple regimes. 
As a result, future tests are no longer performed against the nominal pre-change mean, but against an \emph{effective pre-change mean} formed by
a mixture of past regimes (Figure~\ref{fig:intro}). 
Concretely, suppose the $j$-th change occurs at time $\tau_j$. We are at time $t > \tau_j$, and the preceding changes $\tau_i, \ldots, \tau_{j-1}$ were missed. In this case, the segment since the last restart (say segment $i$) contains multiple stationary regimes with means $\mu_i,\ldots,\mu_{j-1}$ and corresponding lengths $n_i,\ldots,n_{j-1}$. 
The reference pre-change mean used by the test is  the length-weighted average,
\begin{align}\label{eq:effective_mean}
\mu_{\mathrm{pre}}^{\mathrm{eff}}(r,j)
\;:=\;
\frac{\sum_{\ell=i}^{j-1} n_\ell \mu_\ell}{\sum_{\ell=i}^{j-1} n_\ell}.
\end{align}
Detection of  $\tau_j$ is now governed by an effective mean gap
\(
\Delta_j^{\mathrm{eff}}(r)
\;:=\;
\bigl|\mu_{\mathrm{pre}}^{\mathrm{eff}}(r,j)-\mu_j\bigr|.
\)
This gap may be substantially smaller than the nominal gap obtained if outdated samples were discarded.
To quantify this effect, define the corresponding effective SNR for detecting $\tau_j$ at time $t$ as
\begin{equation}
\label{eq:effective_SNR}
\mathrm{SNR}_j^{\mathrm{eff}}(t;r)
\;=\;
\frac{(\tau_j-r)(t-\tau_j)}{t-r}\;\frac{(\Delta_j^{\mathrm{eff}}(r))^2}{\sigma^2}.
\end{equation}
Recall the example  in Figure~\ref{fig:intro}.  
The gap $\Delta_2$ between $\mu_2$ and $\mu_1$ is large and easily detectable. However, missing the $\mu_0 \rightarrow \mu_1$ change substantially reduces the effective gap $\Delta_2^{\mathrm{eff}}$ since the reference becomes a mixture of $\mu_0$ and $\mu_1$. This leads to a significant SNR degradation, and may cause the shift to $\mu_2$ to be missed, even though it would be detected had $\mu_1$ been identified. This SNR reduction may further cause cascading missed detections.
The following proposition shows that, under ATC, the SNR reduction is controlled.

\begin{proposition}[\emph{Controlled SNR degradation}]\label{lem:confounding}
Fix a deterministic restart index \(r\).
Then there exists a universal constant \(C>0\) such that, with probability
at least \(1-\alpha_r\), the following holds for every change index \(j\in\{1,\ldots,S\}\) satisfying \(r<\tau_{j-1}<\tau_j\),
and every \(t\in\{\tau_j+1,\ldots,\tau_{j+1}-1\}\): if the ATC detector
has not raised an alarm by time \(t\), i.e.,
\(N_G^r>t\), then
\begin{align} \label{eq:lem-confounding}
\hspace{-0.38cm}\left(
\mathrm{SNR}_j^\star(t)
-
\mathrm{SNR}_j^{\mathrm{eff}}(t;r)
\right)_+
\le
C\log\left(\frac{\tau_j-r+1}{\alpha_r}\right).
\end{align}
\end{proposition}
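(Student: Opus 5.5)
The plan is to work on the high-probability event on which the ATC threshold controls the noise uniformly, and to turn the ``no alarm'' information into deterministic constraints on the population statistics $D^r_{k,s}$. First I would establish the concentration event $\mathcal{G}_r$. The deviations $\hat D^r_{k,s}-D^r_{k,s}$ are standardized two-sample sub-Gaussian averages. A union bound over $s>r$ and over $r<k<s$, using the summable allocation built into $\gamma^r_s$ (the $6\log(s-r)$ and $2\log(\pi^2/3)$ terms), should give $\mathbb P(\mathcal G_r)\ge 1-\alpha_r$, where
\[
\mathcal G_r:=\bigl\{|\hat D^r_{k,s}-D^r_{k,s}|\le \gamma^r_s \ \text{for all } s>r,\ r<k<s\bigr\}.
\]
On $\mathcal G_r\cap\{N^r_G>t\}$ we have $\hat D^r_{k,s}<\gamma^r_s$ for every $s\le t$ and every $k$. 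Hence $D^r_{k,s}\le 2\gamma^r_s$, i.e.\ $(D^r_{k,s})^2\le C\log\bigl((s-r)/\alpha_r\bigr)$. The key choice is to evaluate this at the \emph{earlier} time $s=\tau_j$ (which is $\le t$, so still pre-alarm) and the split $k=\tau_{j-1}$. This makes the logarithm depend on $\tau_j-r$, as in the statement.

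Second, I would translate this into a bound on the bias of the effective reference. Write $a=\tau_j-\tau_{j-1}$, $b=\tau_j-r\ge a$ and $m=t-\tau_j$. Let $\mu_{\rm old}$ be the average mean on $[r,\tau_{j-1})$, and set $\delta:=\mu^{\rm eff}_{\rm pre}(r,j)-\mu_{j-1}=\tfrac{b-a}{b}(\mu_{\rm old}-\mu_{j-1})$. The bound at $(k,s)=(\tau_{j-1},\tau_j)$ reads
\[
\frac{(b-a)a}{b}\,\frac{(\mu_{\rm old}-\mu_{j-1})^2}{\sigma^2}\le L,\qquad L:=C\log\!\Big(\frac{\tau_j-r+1}{\alpha_r}\Big).
\]
Since $a\delta^2=a\tfrac{(b-a)^2}{b^2}(\mu_{\rm old}-\mu_{j-1})^2\le \tfrac{(b-a)a}{b}(\mu_{\rm old}-\mu_{j-1})^2$, this gives $a\delta^2/\sigma^2\le L$. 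Then I would use $\Delta_j\le \Delta^{\rm eff}_j(r)+|\delta|$ together with monotonicity, $\tfrac{am}{a+m}\le \tfrac{bm}{b+m}$ and $\tfrac{am}{a+m}\le a$. These yield
\[
\sqrt{\mathrm{SNR}^\star_j(t)}\le \sqrt{\mathrm{SNR}^{\rm eff}_j(t;r)}+\sqrt{L}.
\]

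Third, I need to pass from square roots to the difference of squares. With $x=\sqrt{\mathrm{SNR}^\star_j}$ and $y=\sqrt{\mathrm{SNR}^{\rm eff}_j}$ we get $(x^2-y^2)_+\le 2y\sqrt L+L$. This is the step I expect to be the main obstacle, because the cross term $y\sqrt L$ must also be bounded by $O(L)$. The natural tool is that $y=D^r_{\tau_j,t}$ exactly, so the no-alarm condition at time $t$ gives $y^2\lesssim\log\bigl((t-r)/\alpha_r\bigr)$. However, this brings in $t-r$ rather than $\tau_j-r$. I would first try a sharper, exact computation of $x^2-y^2$ in which the $m$-dependence cancels. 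Writing $\Delta_j^{\rm eff}=\Delta_j\pm\delta$ and expanding, the mixed term is $2\tfrac{bm}{b+m}\Delta_j\delta$. It can be compared against $\bigl(\tfrac{am}{a+m}-\tfrac{bm}{b+m}\bigr)\Delta_j^2\le0$ by completing the square in $\Delta_j$. That would give $(x^2-y^2)_+\le \sup_{\Delta}\{\dots\}\lesssim \tfrac{ab}{b-a}\,\delta^2/\sigma^2\cdot\tfrac{m}{\cdot}$, which should reduce to $\tfrac{(b-a)a}{b}(\mu_{\rm old}-\mu_{j-1})^2/\sigma^2\le L$, uniformly in $m$. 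If that maximization fails in some regime (e.g.\ $m\ll a$), I would fall back on the time-$t$ no-alarm bound and handle the extra logarithm by noting it is absorbed when $t-r$ is polynomial in $\tau_j-r$. Otherwise I would refine by applying the no-alarm bound at the intermediate time $s=\tau_j+\min(m,b)$.
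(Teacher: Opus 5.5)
Your proof is correct, and once you commit to the exact computation in your third step it is the paper's proof. You apply the no-alarm condition on $\mathcal G_r$ at time $\tau_j$ with split $\tau_{j-1}$ to get $(D^r_{\tau_{j-1},\tau_j})^2\le L$, and completing the square in the new shift (the paper's Claim~\ref{claim:snr-reduction-two-changes}) gives $(\mathrm{SNR}^\star_j-\mathrm{SNR}^{\mathrm{eff}}_j)_+\le \tfrac{ab}{b-a}\delta^2/\sigma^2=(D^r_{\tau_{j-1},\tau_j})^2$ uniformly in $m$. There is no leftover factor of $m$, because $AB/(B-A)=ab/(b-a)$ for $A=\tfrac{am}{a+m}$ and $B=\tfrac{bm}{b+m}$, so the maximization never fails and the square-root detour and the fallbacks are unnecessary.
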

The proof is given in Appendix~\ref{proof:confounding}.
Proposition~\ref{lem:confounding} shows that, under the ATC detection rule, the degradation in effective SNR caused by outdated samples remains controlled, growing at most logarithmically with the elapsed time since the last restart. Intuitively, persistent contamination can occur only when the statistical evidence for the missed change is itself weak. Consequently, the effective SNR cannot deteriorate arbitrarily fast. This property is central to the analysis, since it implies that ATC can still "recover" and detect subsequent changes.

\paragraph{Computational considerations.}
ATC requires evaluating the test for each candidate split
\(k \in \{r+1,\ldots,t-1\}\), i.e., on the order of \((t-r)\) candidates per time step.
With standard maintenance of cumulative (prefix) sums over the current segment, each candidate evaluation can be carried out in \(O(1)\) time, yielding \(O(t-r)\) per-step complexity.
To improve computational efficiency in practice, the maximization can be accelerated by restricting \(k\) to a multiscale  grid~\cite{lai2010sequential}.
For ATC, we consider geometric offsets \(d \in \{1,\lceil b\rceil,\lceil b^2\rceil,\ldots\}, b>1\) and include the candidates \(k = r + d\) and \(k = t - d\), yielding at most
\(2\lceil \log_b(t-r)\rceil + 1 = O(\log(t-r))\) candidates per time step.
As indicated in our simulations, such an approximation preserves the
logarithmic scaling of ATC, though it affects finite-sample behavior. 
A systematic study of computationally optimized algorithms is left for future work.

\paragraph{High-dimensional extension.} ATC extends to
vector-valued observations $X_t\in\bb{R}^d$ by replacing scalar averages
with vector averages and replacing the absolute mean-difference in the
statistic by an $\ell_2$-norm: $\hat
D_{k,t}^r=\frac{1}{\sigma}\sqrt{\frac{(k-r)(t-k)}{t-r}}\|\bar
X_{r:k-1}-\bar X_{k:t-1}\|_2$. The restart logic is unchanged, but each candidate evaluation now costs $O(d)$ time. The anytime threshold is identical to the scalar case, up to an additional $\sqrt{d}$ term accounting for high-dimensional concentration.  Full
details are given in App.~\ref{app:ATC-vector}. 

\begin{algorithm}[tb]
\caption{Anytime Tracking CUSUM (ATC)}
\label{alg:ATC}
\begin{algorithmic}[1]
\STATE \textbf{Inputs:} sub-Gaussian proxy $\sigma$, error budget $\alpha\in(0,1)$
\STATE $G_0 \gets 0$; observe $X_1$; $G_1 \gets X_1$
\STATE $r \gets 1$; $\alpha_r \gets \frac{6\alpha}{\pi^2 r^2}$

\FOR{$t = 2,3,\ldots$}
    \IF{$t \ge r+2$}
        \STATE $\gamma_t^r \gets \sqrt{6\log(t-r) + 2\log(\tfrac{1}{\alpha_r}) + 2\log(\tfrac{\pi^2}{3})}$
        \STATE $\hat D_{k,t}^r \gets \sqrt{\frac{(k-r)(t-k)}{\sigma(t-r)}} |\frac{G_{k-1}-G_{r-1}}{k-r} - \frac{G_{t-1}-G_{k-1}}{t-k}|$
        \STATE $C_t^r \gets \max_{r < k < t} \hat D_{k,t}^r$
        \IF{$C_t^r \ge \gamma_t^r$}
            \STATE $r \gets t-1$; $\alpha_r \gets \frac{6\alpha}{\pi^2 r^2}$
        \ENDIF
    \ENDIF
    \STATE \textbf{predict} $\hat\mu_t \gets \frac{G_{t-1}-G_{r-1}}{t-r}$
    \STATE observe $X_t$; $G_t \gets G_{t-1}+X_t$
\ENDFOR
\end{algorithmic} 
\end{algorithm} \vspace{-0.2cm}

\section{Regret Analysis}\label{sec:regret-analysis}
\subsection{Upper bound}\label{ssec:upper-bound}

\begin{theorem}[\emph{Regret upper bound for ATC}] \label{th:upper-bound}
    Fix $\alpha \in (0,1)$.  
    For any environment $\nu \in
    \mathcal{E}_{S,T}(\sigma^2,M)$, the regret of ATC, given in Algorithm~\ref{alg:ATC}, satisfies
    \begin{align}\label{eq:ATC-regret-upper-bound}
        \mathcal{R}_T^{\text{ATC}} &\leq C_{V} \ \sigma^2(S+1+\alpha) \left(1+\log T\right) \notag\\&+ C_B \ \sigma^2 S\log (T/\alpha) +  C_MM^2 ( \alpha+S),
    \end{align}
    where $C_V, C_B$ and $C_M$ are universal positive constants. 
  \end{theorem}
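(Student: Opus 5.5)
We condition on a good event and decompose the squared error of the running-average predictor into variance and bias contributions. The good event is defined through uniform concentration: for every restart index $r$ (deterministic, then union-bounded via the summable allocation $\alpha_r=\frac{6\alpha}{\pi^2 r^2}$), all the statistics $\hat D^r_{k,t}$ stay within $\gamma_t^r/2$-type tubes of their population counterparts $D^r_{k,t}$, uniformly over $r<k<t$. Likewise, the running averages $\hat\mu_t^r$ stay within $\sigma\sqrt{c\log((t-r)/\alpha_r)/(t-r)}$ of $\bar\mu_{r:t-1}$. Sub-Gaussian maximal inequalities together with the $6\log(t-r)$ term give this with total failure probability $O(\alpha)$. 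On the complement, we bound the per-step loss by taking the $M^2$ bias plus the noise variance. Restarts only occur at alarms, and the number of segments is controlled, so the failure contribution yields the $C_M M^2\alpha$ term and the $\alpha$ inside $C_V\sigma^2(S+1+\alpha)$.

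On the good event, we first show there are no false alarms inside a stationary stretch started at a restart, since there $D^r\equiv 0$ and $\hat D^r<\gamma^r$. Hence the number of restarts is at most $S$, and each restart occurs at or after a true change. For each epoch $[r,N_G^r)$ we write
\[
\hat\mu_t^r-\mu_t=(\hat\mu_t^r-\bar\mu_{r:t-1})+(\bar\mu_{r:t-1}-\mu_t).
\]
The squared noise term, summed over an epoch, contributes $\sum_{m\le T}\sigma^2/m\lesssim\sigma^2(1+\log T)$ in expectation. There are at most $S+1$ epochs, which gives the $C_V$ term. We use the expectation bound $\mathbb E(\hat\mu^r_t-\bar\mu)^2\le\sigma^2/(t-r)$ here, handling the randomness of $r$ by conditioning on the restart time and using that the running average after $r$ is independent of $\mathcal F_{r-1}$.

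The bias term is the heart of the proof. At time $t$ in the epoch, with current segment $j$ (so $t\in\mathcal I_j$), we have $|\bar\mu_{r:t-1}-\mu_j|=\frac{\tau_j-r}{t-r}\Delta_j^{\mathrm{eff}}(r)$, up to the within-window part which is handled identically. The no-alarm condition $N_G^r>t$ combined with concentration gives $\mathrm{SNR}^{\mathrm{eff}}_j(t;r)\le C\log((t-r)/\alpha_r)$. Therefore
\[
(t-r)^{-2}(\tau_j-r)^2(\Delta^{\mathrm{eff}}_j)^2\le \frac{\sigma^2 C\log((t-r)/\alpha_r)}{t-\tau_j}\cdot\frac{\tau_j-r}{t-r}.
\]
Summing over $t\in\mathcal I_j$ with $t-\tau_j\ge1$ gives $O(\sigma^2\log(T/\alpha_r))$ per change point, and $\log(1/\alpha_r)\lesssim\log(T/\alpha)$ since $r\le T$. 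This yields $C_B\sigma^2 S\log(T/\alpha)$. The boundary step $t=\tau_j$ (and the one step delay from restarting at $t-1$) costs at most $M^2$ per change, which gives the $C_M M^2 S$ term. Proposition~\ref{lem:confounding} is used to relate missed-change epochs: when $\tau_{j-1}$ was missed, the bias inherited from earlier regimes is bounded through the effective gap rather than the nominal one. It also guarantees that after a missed change the effective SNR lags the ideal one by only $O(\log)$, so bias from a segment spanning several missed regimes still telescopes into one $\log$ per change.

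The main obstacle is the bias accounting when several changes are missed consecutively within one epoch. There the mean $\bar\mu_{r:t-1}$ mixes many regimes, and the no-alarm inequality must be applied at the best split $k$ (not necessarily $\tau_j$) so that the contributions sum to $O(\log)$ per change rather than compounding. I would handle this by bounding the bias at time $t$ by $\max_k D^r_{k,t}$-type quantities via the identity $|\bar\mu_{r:t-1}-\mu_t|\le\frac{k-r}{t-r}|\bar\mu_{r:k-1}-\bar\mu_{k:t-1}|+|\bar\mu_{k:t-1}-\mu_t|$ with $k=\tau_j$, recursing on the second term within the current segment. The resulting terms are then charged to the change points, using Proposition~\ref{lem:confounding} to control the degradation.
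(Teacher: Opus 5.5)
Your argument has a genuine gap in the bias summation, which loses a $\log T$ factor. There are also two smaller gaps, in the failure event and in the variance step.

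\textbf{Bias summation.} You apply the no-alarm constraint separately at every $t$, which gives
\[
\Bigl(\tfrac{\tau_j-r}{t-r}\Bigr)^2(\Delta_j^{\mathrm{eff}})^2\le \frac{C\sigma^2\log((t-r)/\alpha_r)}{t-\tau_j}\cdot\frac{\tau_j-r}{t-r}.
\]
With $a=\tau_j-r$ and $b=t-\tau_j$, summing over $b=1,\dots,B$ gives
\[
\sum_{b=1}^{B}\frac{a}{b(a+b)}=\sum_{b=1}^{B}\Bigl(\frac1b-\frac1{a+b}\Bigr)\approx\log\frac{aB}{a+B},
\]
which is of order $\log T$ when $a\asymp B\asymp T$. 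So your chain yields $O(\sigma^2\log T\,\log(T/\alpha))$ per change, i.e.\ an $S\log^2T$ bias term, not $C_B\sigma^2S\log(T/\alpha)$. This is a real looseness of your bound: take a missed change with $a\asymp B\asymp T$ and $(\Delta_j^{\mathrm{eff}})^2\asymp\sigma^2\log T/T$. There the true bias is only $O(\sigma^2\log T)$, but your pointwise bound sums to $\log^2T$.

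What you are missing is that $\Delta_j^{\mathrm{eff}}$ does not depend on $t$. Moreover, the coefficient $ab/(a+b)$ in $\mathrm{SNR}_j^{\mathrm{eff}}$ increases in $b$, so the constraint at the last time dominates all earlier ones. The paper therefore proceeds in two steps:
\begin{itemize}
\item It first sums the deterministic weights, using $\sum_{b=0}^{B}(a/(a+b))^2\le 1+\frac{aB}{a+B}$.
\item It then uses the constraint only once, at $t_{\max}$, the last $t\in\mathcal I_j$ at which $H_t(r(j))$ holds. There $\frac{aB}{a+B}(\Delta_j^{\mathrm{eff}})^2=\sigma^2(D^{r(j)}_{\tau_j,t_{\max}})^2<4\sigma^2(\gamma^{r(j)}_{t_{\max}})^2$.
\end{itemize}
This gives $M^2+C\sigma^2\log(T/\alpha)$ per change. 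Your ``main obstacle'' is not actually one. For $t\in\mathcal I_j$ one has $\bar\mu_{\tau_j:t-1}=\mu_t$, so your recursion is vacuous. The split $k=\tau_j$ gives exactly $(D^r_{\tau_j,t})^2=\mathrm{SNR}_j^{\mathrm{eff}}(t;r)$ however many earlier changes were missed. Each change is therefore charged on its own, and the paper does not invoke Proposition~\ref{lem:confounding} directly in this proof.

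\textbf{Failure event.} A single global good event $G$ with $\mathbb P(G^c)=O(\alpha)$ does not give the $C_MM^2\alpha$ term. Bounding the per-step loss by $M^2$ on $G^c$ yields $\mathbb E[\text{loss}\,\mathbf 1_{G^c}]\lesssim TM^2\alpha$. The number of restarts on $G^c$ is also uncontrolled, so the $\sigma^2(S+1+\alpha)$ term does not follow either. The paper relies on the fact that the union-bound computation in Lemma~\ref{ATC-lemma:confidence} controls a \emph{sum} of probabilities. Pathwise, on $\{N_G^{r(j)}>t\}$, either the deviation event $\mathcal E_t(r(j))$ occurs at that single time, costing at most $M^2$ at that step only, or $H_t(r(j))$ holds and the deterministic argument above applies. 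Then
\[
\mathbf 1\{\mathcal E^{\mathrm{abs}}_t(r(j))\}\le\sum_{r<t}\mathbf 1\{\mathcal E^{\mathrm{abs}}_t(r)\}
\quad\text{and}\quad
\sum_t\sum_r\mathbb P(\mathcal E^{\mathrm{abs}}_t(r))\le\sum_r\alpha_r\le\alpha,
\]
so the expected number of deviation \emph{times} is at most $\alpha$. Similarly, $\mathbb E[N_{\mathrm{FA}}]\le\sum_r\mathbb P(\mathsf{FA}_r)\le\alpha$ gives $\mathbb E[K]\le S+1+\alpha$ without any good event.

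\textbf{Variance step.} Your independence claim is wrong as stated. The restart is $r=N_G-1$, so $\{r_m=r\}\in\mathcal F_r$. The sample $X_r$, which enters the new running average, is the one that triggered the alarm. Independence from $\mathcal F_{r-1}$ therefore does not give $\mathbb E[(\hat\mu_t^r-\bar\mu_t^r)^2]\le\sigma^2/(t-r)$. The paper conditions on $\mathcal F_r$ to get $Z_r^2/s^2+\sigma^2/s$. It then bounds $\mathbb E\sum_m Z_{r_m}^2$ by truncation at $u=2\sigma^2\log(eT)$, at a cost of $O(\sigma^2\log T)$ per block. The rate survives, but you need that argument.

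Finally, the union bound does not deliver $\gamma_t^r/2$-tubes for this threshold. The $\gamma_t^r$-tubes of Lemma~\ref{ATC-lemma:confidence} suffice for everything you use.
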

  The proof is given in App.~\ref{ATC-th-proof:regret-upper-bound}.   
A novel aspect of the analysis lies in its treatment of missed detections: rather than ruling them out, the analysis exploits the characterization of the resulting SNR degradation, formalized in Proposition~\ref{lem:confounding}, to control the induced contamination. For fixed $\alpha\in(0,1)$ and $M$, Th.~\ref{th:upper-bound} implies $\mathcal{R}_T^{\text{ATC}} = O\left(\sigma^2(S+1)\log T +M^2S\right)$.

\paragraph*{Proof sketch.}
The proof is based on five lemmas in
App.~\ref{ATC-th-proof:regret-upper-bound}; we summarize the main ideas. 

\smallskip\noindent\textbf{Step 1: Regret decomposition.}
Lemma~\ref{app:lem-ATC-regret-decomoposition} shows that the dynamic regret decomposes as 
$\mathcal R_T^{\mathrm{ATC}}=\mathcal R_T^{\mathrm{var}}+\mathcal
R_T^{\mathrm{bias}}$,  
where $\mathcal R_T^{\mathrm{var}}$ corresponds to the  cumulative
estimation variance and $\mathcal R_T^{\mathrm{bias}}$ corresponds to the
cumulative bias incurred after a change and only until the next alarm. 

\smallskip\noindent\textbf{Step 2: Confidence bound.}
Lemma~\ref{ATC-lemma:confidence} establishes that, for each deterministic
restart index \(r\), the deviation
\(|\widehat D_{k,t}^r-D_{k,t}^r|\) is uniformly controlled over all
split points \(k\) and times \(t\) with total budget \(\alpha_r\).
Since the restart times generated by ATC are random, whenever a deviation event is evaluated at a realized restart time, we upper bound its indicator pathwise by the sum of the corresponding events over all deterministic restart indices \(r\). Summing these probabilities and using the allocation \(\sum_r \alpha_r\le \alpha\) controls the expected contribution of such deviations.

\smallskip\noindent\textbf{Step 3: Bounding the bias term.}
Lemma~\ref{ATC-lemma:bias-upper-bound} controls the bias term $\mathcal R_T^{\mathrm{bias}}$ by separating two contributions:\\
(i) \emph{deviation events} where the empirical statistic underestimates its population counterpart. By Step 2, the sum of the probabilities of these events, over all relevant times and realized restart indices, is at most \(\alpha\). Since each such event contributes at most \(M^2\), their total contribution to the
expected bias regret is at most \(M^2\alpha\). (third term
in Eq.~\eqref{eq:ATC-regret-upper-bound}). \\
(ii) a \emph{deterministic drift} term governed by the population
statistic:  If ATC detects the change, then
$\max_k D_{k,t}^r$ crosses the threshold $\gamma_t^r$ within
$O(\log(T/\alpha))$ steps, so the detection delay contributes only logarithmic bias regret. 
If ATC does not detect a change, we show that the resulting contribution to regret is at most logarithmic. Moreover, using the argument underlying Proposition~\ref{lem:confounding}, we show that the contamination carried into subsequent segments is likewise capped.
As a result, the analysis can be repeated at the next change using the same detectable/undetectable dichotomy, yielding an overall bias term of order $\sigma^2 S\log(T/\alpha)$ (second term in Eq.~\eqref{eq:ATC-regret-upper-bound}). 

\smallskip\noindent\textbf{Step 4: Bounding the variance term.}
Lemma~\ref{ATC-lem:expected-FA} uses Step~2 to show that the expected number of false alarms is at most $\alpha$, and therefore the expected number of restarts $K$ is upper bounded by $\mathbb E[K]\le S+1+\alpha$. Lemma~\ref{ATC-lemma:variance-upper-bound} shows that within a restart block of length $L$,  summing the per-step variance over the block yields a contribution of order $\sigma^2(1+\log L)$, where $L\leq T$. Summing over all $K$ blocks and using the bound on $\mathbb{E}[K]$ yields a bound  of order
$\sigma^2(S+1+\alpha)\bigl(1+\log(T)\bigr)$
(first term in Eq.~\eqref{eq:ATC-regret-upper-bound}).

\subsection{Lower bound}\label{ssec:lower-bound}
\begin{theorem}[\emph{Regret lower bound}]\label{th:lower-bound}
Assume $M \geq c_0 \sigma$. There exists a universal constant $c>0$ such that for all $T\ge 3$ and $1\le S< T$,
\begin{equation}\label{eq:lower-bound}
\mathcal{R}_T^\star(\mathcal{E})
\;\ge\;
c\,\sigma^2\, (S+1)\left(1+ \log\Bigl(\Bigl \lfloor\frac{T}{S+1}\Bigr \rfloor +1\Bigr)\right).
\end{equation}
\end{theorem}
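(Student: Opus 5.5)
The plan is to reduce to $S+1$ independent stationary blocks of length $L:=\lfloor T/(S+1)\rfloor$, each carrying a fresh unknown mean, and to prove a per-block lower bound of order $\sigma^2(1+\log(L+1))$ via a Bayesian argument.

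\textbf{Construction.} Place change points at $\tau_j = 1 + jL$ for $j=1,\ldots,S$. The last block absorbs the remainder of the horizon, which only increases regret. Draw the segment means $\mu_0,\ldots,\mu_S$ i.i.d.\ from a prior $\mathcal{N}(m,\rho^2)$, with $\rho$ of order $\sigma$, truncated (or conditioned) so that all means lie in an interval of length $M$. The noise is Gaussian $\mathcal{N}(0,\sigma^2)$, which is sub-Gaussian with proxy $\sigma^2$. The assumption $M\ge c_0\sigma$ makes the truncation harmless up to constants: the prior mass outside the interval is a fixed fraction, and the resulting Bayes risk changes only by a constant factor. Since the minimax regret is at least the Bayes regret under any prior supported on $\mathcal{E}$, it suffices to lower bound the Bayes regret.

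\textbf{Per-block Bayes risk.} The means are independent across blocks and the change points are known, so observations from other blocks carry no information about $\mu_j$. Hence for $t$ in block $j$, at position $n=t-\tau_j$ (that is, $n$ samples from the block already observed), the Bayes-optimal prediction is the posterior mean given those $n$ samples. In the untruncated Gaussian case its risk is
\[
\Bigl(\tfrac{1}{\rho^2}+\tfrac{n}{\sigma^2}\Bigr)^{-1}=\frac{\sigma^2}{n+\sigma^2/\rho^2}.
\]
With the truncated prior, I would instead invoke the Van Trees inequality using a smooth prior on the interval, such as a $\cos^2$ density. Its Fisher information is of order $1/M^2\lesssim 1/\sigma^2$, again because $M\ge c_0\sigma$. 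This gives a risk of at least $c\,\sigma^2/(n+1)$ at every step, for every estimator. Summing over $n=0,\ldots,L-1$, and remembering that the time index starts at $t=2$ (so the first block loses one term), block $j$ contributes at least $c\,\sigma^2\sum_{n=1}^{L-1}1/(n+1)$. The $n=0$ term at each change point is itself of order $\sigma^2$, which supplies the additive "$1+$". Altogether the per-block contribution is $c\,\sigma^2(1+\log(L+1))$ for $L\ge 1$. The case $L=1$ is covered by the constant term, since each fresh block start costs $\Omega(\rho^2)=\Omega(\sigma^2)$.

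\textbf{Assembling and main obstacle.} Summing over the $S+1$ blocks gives
\[
\mathcal R_T^\star\ge c\,\sigma^2(S+1)\bigl(1+\log(\lfloor T/(S+1)\rfloor+1)\bigr).
\]
The main technical care lies in two places. The first is making the Van Trees / truncated-prior step rigorous with a constant independent of $M$ once $M\ge c_0\sigma$; to avoid any $M$-dependence I would take the prior support of width $\min(M,\sigma)$ scaled to be of order $\sigma$. The second is handling randomized policies, which is immediate because Bayes risk lower bounds apply to any measurable predictor, internal randomness included.
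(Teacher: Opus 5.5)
Your proposal is correct and is essentially the paper's own argument for the variance lower bound (Lemma~\ref{lem:lower-variance}). The steps match: a fixed equal-length segmentation; i.i.d.\ $\cos^2$ priors on an interval of width of order $M$ or $\sigma$, so the prior Fisher information is $O(1/\sigma^2)$ when $M\ge c_0\sigma$; independence across segments, which reduces each prediction to $n$ in-segment samples; van Trees giving $\sigma^2/(n+a)$ per step; and a harmonic sum over each segment. The paper also proves a detection-delay (bias) lower bound of order $\sigma^2 S\log(T/S)$ and takes the maximum of the two, but that part is not needed for the displayed rate, so leaving it out is fine.
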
 
The proof is given in App.~\ref{ATC-th-proof:regret-lower-bound}. The linear dependence on the number of changes $S$ and the logarithmic dependence on $T$ in Eq.~\eqref{eq:lower-bound} reflect an inherent cost of
non-stationarity: each change induces an unavoidable logarithmic regret due to the time required to accumulate sufficient evidence for detection and to re-estimate the segment mean.
The condition $M \geq c_0 \sigma$ is a non-degeneracy condition ensuring that the model class contains mean shifts of order the noise scale.
Th.~\ref{th:lower-bound} implies $\mathcal{R}_T^\star(\mathcal{E}) = \Omega\left(\sigma^2(S+1) \log(T/(S+1))\right)$.

\paragraph*{Proof sketch.}
We lower bound the minimax regret by isolating two unavoidable contributions that mirror the upper bound: (i) a bias cost incurred after changes, and (ii) an estimation-variance cost.
To lower bound the bias from a single change of magnitude~$\Delta$, we partition the horizon into $m$ windows of length~$\ell$, inducing an $m$-ary hypothesis test over the change location. The window length must balance two effects: it should be large enough that missing the change incurs substantial regret, yet short enough to prevent sufficient information accumulation to distinguish a change from the no-change instance. Using a data-processing inequality, we show that any policy with controlled false alarms cannot reliably adapt to the change unless $\ell \gtrsim \tfrac{\sigma^2}{\Delta^2}\log T$. Since delayed adaptation incurs squared-loss regret of order~$\Delta^2$ per step, this yields a lower bound of order~$\Omega(\sigma^2\log T)$. Repeating this argument over $S$ disjoint blocks of length~$\Theta(T/S)$ gives a cumulative bias regret of order~$\Omega(\sigma^2 S\log(T/S))$.
For the variance term, we reduce to sequential mean estimation on each stationary segment. A Bayes–minimax argument yields a lower bound of~$\Omega\!\left(\sigma^2\sum_{j=0}^S \log(|\mathcal I_j|+1)\right)$, which is maximized by an equal-length partition, giving~$\Omega(\sigma^2 (S+1)\log(T/(S+1)))$.

\subsection{Discussion}\label{ssec:discussion}

\begin{itemize}[leftmargin=*]
    \item Comparing Theorems~\ref{th:upper-bound} and~\ref{th:lower-bound} reveals a gap of order $\log (S)$ in the leading term between the upper and lower bounds.
    We conjecture that if the learner has prior knowledge of the horizon $T$ and the number of changes $S$, then a regret of order
    $O\!\big(\sigma^2 S \log(T/S)\big)$ is achievable, for example by restarting the estimator and test statistic after at most $T/(S+1)$ rounds.
    Whether such a bound is achievable without knowledge of $T$ and $S$ remains open. 
    \item The lower bound admits a clean characterization in terms of the density of changes.
    Specifically, it scales as $\Omega\!\big(\sigma^2\, S\log(T/S)\big)$, which is sublinear whenever the number of changes satisfies $S=o(T)$, and becomes linear in the regime $S=\Theta(T)$. This suggests that the information-theoretic cost of non-stationarity remains sublinear for any vanishing change density $S/T$.  
    \item  The logarithmic dependence on \(T\) is tied to the squared-loss criterion, which is natural for mean estimation.
    For alternative losses, the rate can be different: for instance, under \(L_1\)
    loss, one can prove a minimax lower bound of \(\Omega(\sqrt{ST})\)
    (details omitted).
    Establishing regret guarantees under different loss functions is left for
    future work.
    \item In $\bb{R}^d$, the regret decomposition remains the same, but the concentration bound introduces an additional $\sqrt d$ term in the threshold and the within-block estimation variance scales as $d\sigma^2/(t-r)$. Consequently, the variance part scales as $O\!\big(d\sigma^2(S+1)\log(T)\big)$ and the bias part gains an additive $O(\sigma^2 S\,d)$ term, yielding an overall dynamic regret bound that matches the scalar logarithmic dependence on $T$ up to natural $d$-dependent factors. More details are given in App.~\ref{app:ATC-vector}.  \vspace{-0.3cm}

\end{itemize}

\section{Numerical results}\label{sec:simulations}
We evaluate the proposed ATC algorithm on both synthetic  and real-world data. 
Full implementation details, along with additional comprehensive experiments, are provided in App.~\ref{app:extended_simulations}.\footnote{The implementation used in this paper is publicly available at
\url{https://github.com/gafnito/cost-of-learning-multiple-change-points}.}

\textbf{Synthetic experiments}. Our primary goal is to empirically validate the theoretical regret scaling in Th.~\ref{th:upper-bound} and to examine ATC’s behavior under abrupt, unstructured mean shifts. We consider a piecewise-stationary environment with a fixed number of $S=5$ changes  and vary the horizon $T$. Figure~\ref{fig:atc_three_panels} summarizes the results.
Fig.~\ref{fig:atc_three_panels}(a) shows a representative realization with $T=1200$, and Fig.~\ref{fig:atc_three_panels}(b) shows the cumulative regret on this environment. Each change induces a transient regret increase due to bias from aggregating samples across regimes. When a change is missed (e.g., the second and third shifts), the estimator operates on mixed-regime data, inducing endogenous confounding and consequently increasing the regret in that segment. Despite missed detections, the information loss remains controlled and does not prevent subsequent detection (near $t \approx 900$), consistent with the theory.
Fig.~\ref{fig:atc_three_panels}(c) reports cumulative regret averaged over $5000$ Monte Carlo runs as a function of $\log T$ for ATC and its computationally efficient variant (Section~\ref{ssec:confounding}). In both cases, regret grows linearly with $\log T$, confirming the theoretical scaling. The curves differ by a constant offset, indicating that the approximation preserves asymptotic behavior up to constant factors.

Additional simulation results are deferred to App.~\ref{app:extended_simulations}.
These experiments examine: 
(i) dense-change regimes, characterizing the change-point densities for which sublinear regret is no longer observed, in line with the theory (App.~\ref{sssec:dense}); 
(ii) long-horizon behavior, highlighting the anytime nature of ATC and showing that constant thresholds may be preferable in some finite known-horizon regimes, whereas the ATC threshold becomes advantageous when the horizon is long or unknown (App.~\ref{sssec:time-varying}); 
(iii) sensitivity to variance misspecification (App.~\ref{sssec:variance}); and 
(iv) adversarial environments, illustrating the segment lengths and jump magnitudes that lead to large regret (App.~\ref{sssec:adversarial}).

\textbf{Real-world data: NAB benchmark.}
To complement the synthetic experiments, we evaluate ATC on a benchmark derived from the Numenta Anomaly Benchmark (NAB) dataset~\cite{lavin2015evaluating}, which records CPU usage from Amazon Web Services (AWS) by measuring average utilization across a cluster (Fig.~\ref{fig:NAB}(a)). Non-stationarity may arise from software upgrades and configuration changes that occur at arbitrary times and alter system behavior. Accurate tracking is operationally important, since sustained high utilization triggers machine allocation, while low utilization leads to de-allocation.
We compare ATC against sliding-window and discounted-mean baselines, with parameters tuned offline (see App.~\ref{app:extended_simulations}). Fig.~\ref{fig:NAB}(b) shows that ATC achieves the lowest cumulative regret by explicitly detecting change points and restarting, thereby preventing old-regime samples from biasing the estimate. In contrast, the baselines
adapt more slowly and incur large regret spikes after shifts, most notably following the large jump near the end. Overall, explicit resets enable faster adaptation and improved robustness to abrupt regime changes. \vspace{-0.2cm}

\begin{figure}[t]
  \centering

  \begin{minipage}{0.49\linewidth}
    \centering
    \begin{overpic}[width=\linewidth]{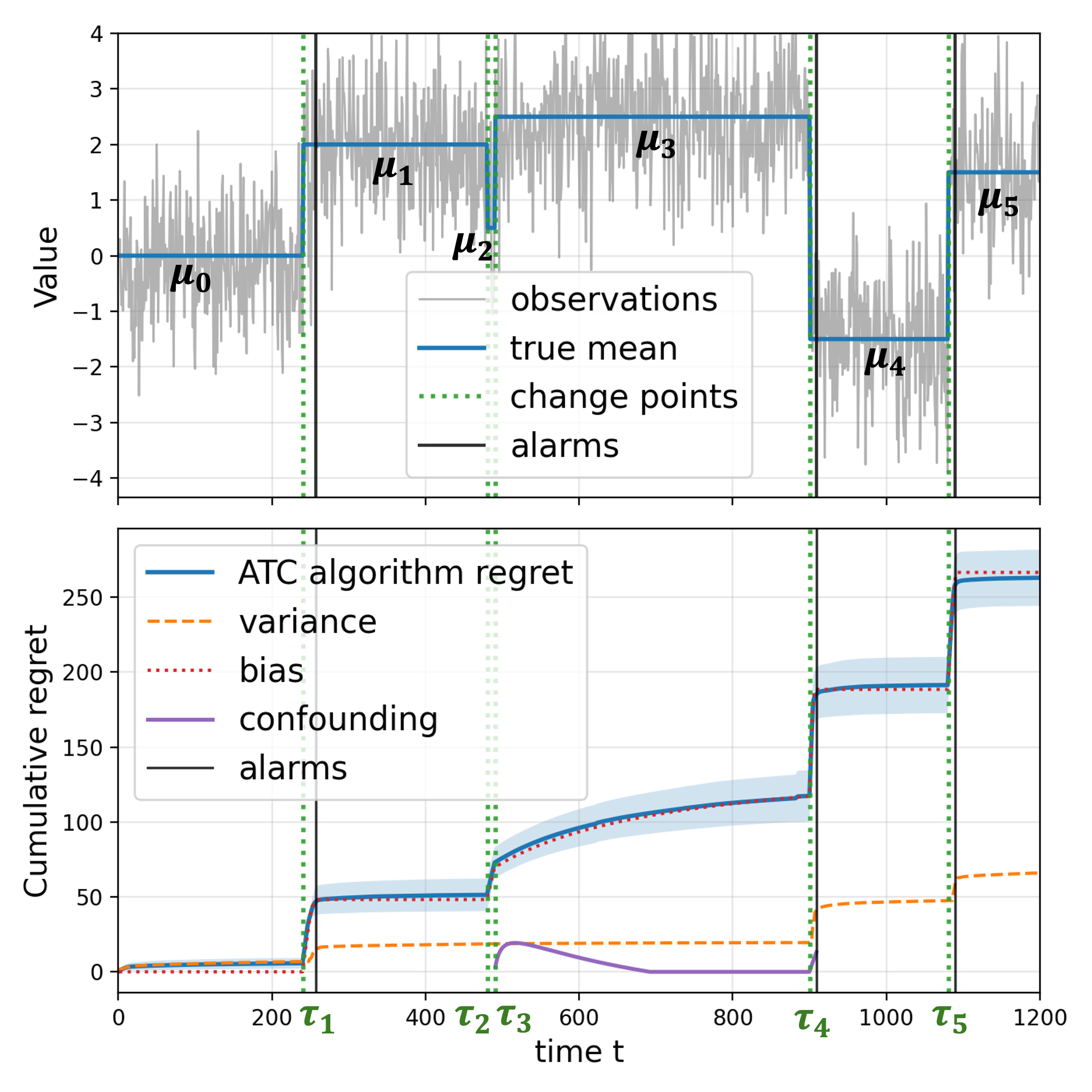}
      \put(0,95){\small\bfseries (a)}
      \put(0,48){\small\bfseries (b)}
    \end{overpic}

    \vspace{0.15cm}

    \begin{overpic}[width=\linewidth]{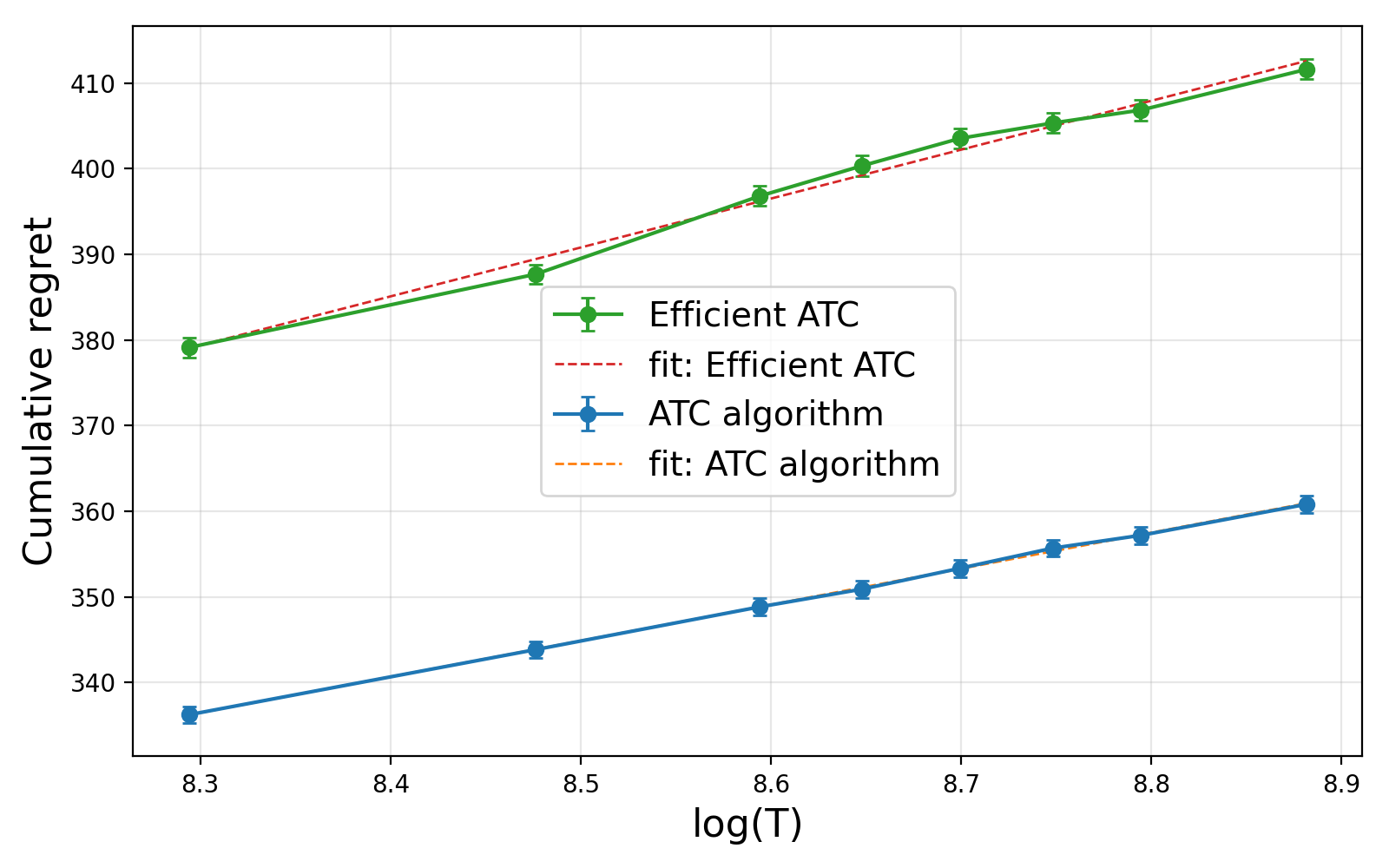}
      \put(0,57){\small\bfseries (c)}
    \end{overpic}

    \caption{Synthetic environment and regret scaling for ATC.}
    \label{fig:atc_three_panels}
  \end{minipage}
  \hfill
  \begin{minipage}{0.49\linewidth}
    \centering
    \begin{overpic}[width=\linewidth]{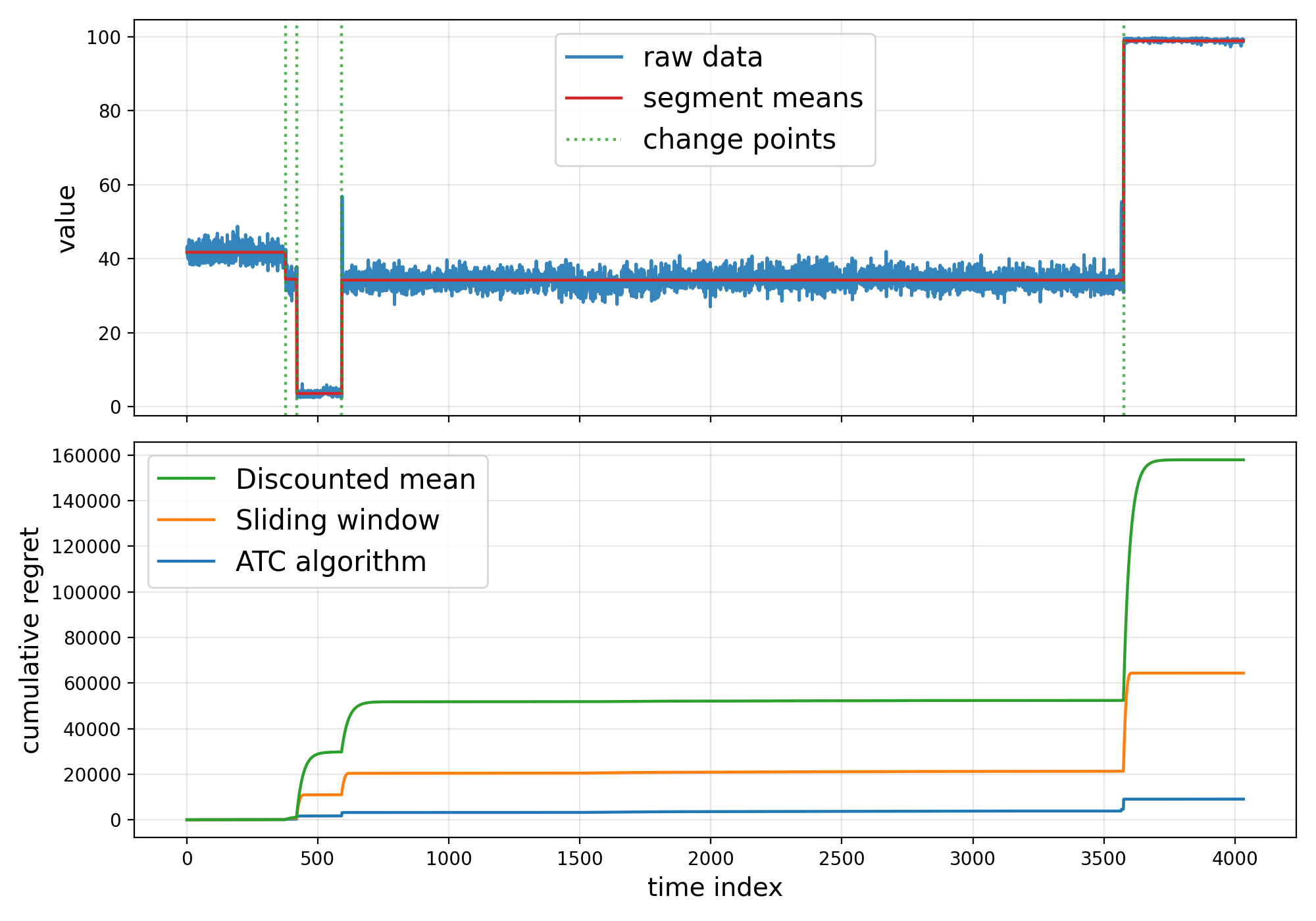}
      \put(-1,65){\small\bfseries (a)}
      \put(-1,33){\small\bfseries (b)}
    \end{overpic}

    \caption{Cumulative regret on the NAB CPU dataset.}
    \label{fig:NAB}
  \end{minipage}
\end{figure}

\section{Conclusion and Future Work}\label{sec:conclusions}
We present a learning-theoretic formulation for multiple change-point
environments, which gives rise to an under-explored phenomenon: unavoidable
missed detections induce endogenous confounding, in which the reference
distribution used for detection becomes a mixture of past regimes. We
formulate this through a regret-minimization framework for online
tracking, without imposing lower bounds on separability or spacing between
changes. We quantify the cost of confounding via a reduction in the
detector SNR, which requires a novel analysis exploiting the mixture
structure of the learner’s statistic and motivates a time-varying
detection threshold that is robust to missed detections. Leveraging these
insights, our proposed ATC algorithm achieves nearly minimax-optimal
regret without knowledge of the horizon or the number of changes.  

There are several natural  extensions of this work.
The mean-tracking model was chosen as a canonical setting that isolates
the statistical difficulty imposed by multiple change points themselves;
nevertheless, the same perspective may extend to broader observation models.  
For example, in exponential-family models, an ATC-type procedure could
track the expectation parameter through empirical sufficient
statistics, with the analysis depending on concentration and local curvature properties of the model. Similar ideas may also extend to regression or
dynamical settings with time-varying parameters and temporally dependent observations.
Another important direction is to relax the assumption that the variance proxy is known, by incorporating online variance calibration into both the estimator and the detection threshold.  
Finally, gradually changing environments raise a related but distinct bias-variance tradeoff: past samples remain useful while the accumulated drift over a window is below the statistical estimation scale, but become harmful once this drift dominates the variance reduction from pooling.
Developing regret guarantees for these extensions, as well as for broader decision-making settings, is an interesting direction for future work. 

\bibliography{online_tracking}
\bibliographystyle{icml2026}

\newpage
\appendix
\onecolumn
\section{Appendix}

\subsection{Extended Simulations} \label{app:extended_simulations}
We present additional simulation results that further illustrate the behavior of ATC under challenging non-stationary environments and clarify the tradeoffs underlying its design. We first describe the full details to reproduce the simulations presented in Section~\ref{sec:simulations}.

\subsubsection{Implementation details for main simulations.} \label{app:Implementation}

\paragraph{Synthetic experiments.}
We describe here  the components required to reproduce the experiments in Fig.~\ref{fig:atc_three_panels}.
The observation sequence satisfies $X_t=\mu_t+\sigma Z_t$ with $\sigma=1$ and $Z_t\sim\mathcal N(0,1)$ i.i.d.
The mean process has $S=5$ changes placed at fixed fractions of the horizon:
$\tau_1=\lfloor 0.2T\rfloor+1$, $\tau_2=\lfloor 0.4T\rfloor+1$,
$\tau_3=\tau_2+10$, $\tau_4=\lfloor 0.75T\rfloor+1$, and
$\tau_5=\lfloor 0.9T\rfloor+1$, with $\tau_0=1$ and $\tau_{6}=T+1$.
The third segment has fixed length $10$, creating a short-lived regime designed
to induce missed detections and illustrate endogenous confounding.
The corresponding segment means are
$(0,2,0.5,2.5,-1.5,1.5)$, as visualized in Fig.~\ref{fig:atc_three_panels}(a). We use $\alpha = 0.05$ throughout.

Figure~\ref{fig:atc_three_panels}(b) shows the cumulative squared-error
$\sum_{s=1}^t(\hat\mu_s-\mu_s)^2$ for a single realization with $T=1200$. 
For Fig.~\ref{fig:atc_three_panels}(c), we estimate the expected regret
$\mathcal R_T=\sum_{t=1}^T(\hat\mu_t-\mu_t)^2$ by averaging over $5000$ independent
Monte Carlo replications for each $T\in\{600,1200,2400,4800,7000,9000\}$, and we
report $95\%$ confidence intervals ($\pm1.96$ standard errors).
For the efficient variant of ATC we set $b=2$ (Section~\ref{ssec:confounding}).

\paragraph{NAB benchmark.}
We use the NAB time series \textit{ec2\_cpu\_utilization\_ac20cd.csv} from the NAB dataset \cite{lavin2015evaluating}, which includes AWS server metrics as collected by the Amazon CloudWatch service, and extract the CPU utilization values from the second CSV column, yielding a sequence
$X_1,\dots,X_T$ of length $T$. To define a reproducible reference target for regret evaluation, we convert a fixed,
manually specified list of change-point indices $\mathcal T=\{377,420,592,3575\}$ into a
piecewise-constant mean sequence $\mu_t$ by segmenting the series at
$\tau_0=1$, $\tau_1,\dots,\tau_S\in\mathcal T$, and $\tau_{S+1}=T+1$, and setting
\[
\mu_t \;=\; \frac{1}{\tau_{j+1}-\tau_j}\sum_{s=\tau_j}^{\tau_{j+1}-1} X_s,
\qquad \text{for } t\in[\tau_j,\tau_{j+1}).
\]
For a fair comparison, baseline parameters and $\sigma$ were tuned offline on a separate collection of environments and then fixed when evaluating on the specific NAB time series shown in the figure.
The sliding-window baseline uses a fixed window length $W=30$:
\[
\hat\mu_t^{\mathrm{SW}}
\;=\;
\frac{1}{\min\{W,t\}}\sum_{i=\max\{1,t-W+1\}}^{t-1} X_i,
\]
and the discounted-mean baseline uses an exponentially weighted average with
discount factor $\rho=0.98$:
\[
\hat\mu_t^{\mathrm{DM}}
\;=\;
\frac{\sum_{i=1}^{t-1} \rho^{\,t-i} X_i}{\sum_{i=1}^{t-1} \rho^{\,t-i}}.
\]
For ATC we use the same thresholding rule as in the synthetic experiments, with user-level
$\alpha=0.05$ and $\sigma=1$. Regret traces in Panel~\ref{fig:NAB}(b) are computed for each algorithm
from its estimate $\hat\mu_t$ and the fixed reference $\mu_t$ above.

\subsubsection{Regret under dense change points.} \label{sssec:dense}
We examine how the density of changes affects regret by allowing the number of change points $S$ to grow with the horizon $T$.
Theorem~\ref{th:upper-bound} predicts a transition around $S=\Theta(T/\log T)$.
When $S$ is of this order, the upper bound permits regret linear in $T$, whereas if $S=o(T/\log T)$ the regret is expected to remain sublinear (see Section~\ref{ssec:discussion}).
Figure~\ref{fig:dense} illustrates this transition by comparing a dense regime
$S(T)\asymp T/\log T$ to a slightly sparser scaling $S(T)=T^{0.95}/\log T$.
Consistent with the theory, the dense regime exhibits near-linear growth in $T$,
whereas the sparser regime remains sublinear over the simulated horizons.

\paragraph{Implementation details.}
Both scenarios in Figure~\ref{fig:dense}  use the same observation model as in the synthetic experiments,
$X_t=\mu_t+\sigma Z_t$ with $Z_t\sim\mathcal N(0,1)$ i.i.d. and $\sigma=1$,
but the number of change points $S$ grows with the horizon $T$ and the jump size is
calibrated to the (shrinking) segment length, where 
\[T \in \{1200,2400,4800,7000,9000,12000,15000,18000,20000,22000,25000\}.
\]
For each $T$, change points are placed  evenly  by partitioning $\{1,\dots,T\}$ into $S{+}1$ consecutive segments of lengths differing by at most $1$
(i.e., $\tau_0=1$, $\tau_{S+1}=T+1$, and $\tau_{j+1}-\tau_j \in \{\lfloor a\rfloor,\lceil a\rceil\}$ where
$a=T/(S+1)$). The mean alternates between two levels with amplitude $\Delta(T)$:
\[
\mu_j \in \{0,\Delta(T)\},\qquad \mu_j =
\begin{cases}
0, & j \text{ even},\\
\Delta(T), & j \text{ odd},
\end{cases}
\qquad j=0,1,\dots,S,
\]
so every change has magnitude $\Delta(T)$.
The dense regime uses 
\[
S(T)=\Big\lfloor \tfrac{T}{\log T}\Big\rfloor,
\]
whereas in the sparser regime we set
\[
S(T)=\Big\lfloor \tfrac{T^{\delta}}{\log T}\Big\rfloor,
\qquad \delta=0.95.
\]
In both cases the jump amplitude is calibrated as
\[
\Delta(T)\;=\;\sqrt{\frac{160\,\sigma^2\log a}{a}},\qquad a=\frac{T}{S(T)+1},
\]
which keeps the per-segment signal-to-noise ratio at the detection boundary as segment lengths shrink.
For each $T$, we run $1000$ independent Monte Carlo replications, and evaluate ATC  with $\alpha=0.05$ and $\sigma=1$. 
Figure~\ref{fig:dense} plots the Monte Carlo mean of $\mathcal R_T$ against $T$  with $95\%$ confidence intervals ($\pm1.96$ standard errors).

\subsubsection{Comparison with passive algorithms.}\label{sssec:passive}
We further compare ATC in the same synthetic environment as in Section~\ref{sec:simulations}
to the passive baselines as in App.~\ref{app:Implementation}.
Fig.~\ref{fig:passive} shows that ATC achieves the lowest cumulative regret by explicitly detecting change points and restarting, whereas the baselines
adapt more slowly. Overall, explicit resets enable faster adaptation and improved robustness to abrupt regime changes.

\paragraph{Implementation details.}
All experiments use the same synthetic environment as in Section~\ref{sec:simulations}, with a fixed number of $S=5$ change points and
horizon values
\[
T \in \{600,1200,2400,3000,3800,4800,5500,6300,7000,8000,9000\}.
\]
We report cumulative squared-error regret averaged over $1000$ Monte Carlo replications.

\begin{figure}[t]
    \centering
    \begin{subfigure}[t]{0.48\linewidth}
        \centering
        \includegraphics[height=5cm]{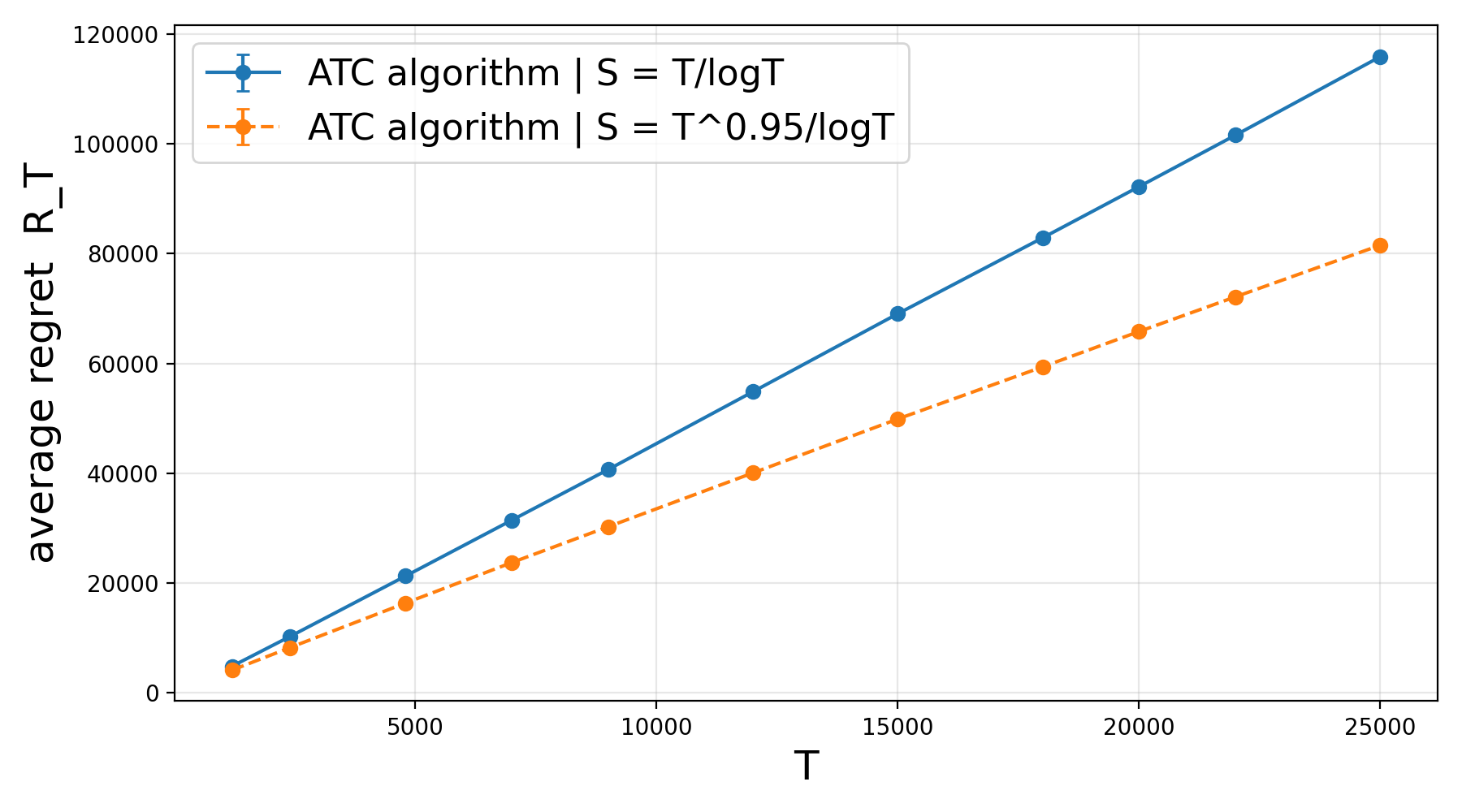}
        \caption{Cumulative regret as a function of the horizon $T$ in a dense change-point regime.}
        \label{fig:dense}
    \end{subfigure}
    \hfill
    \begin{subfigure}[t]{0.48\linewidth}
        \centering
        \includegraphics[height=5cm]{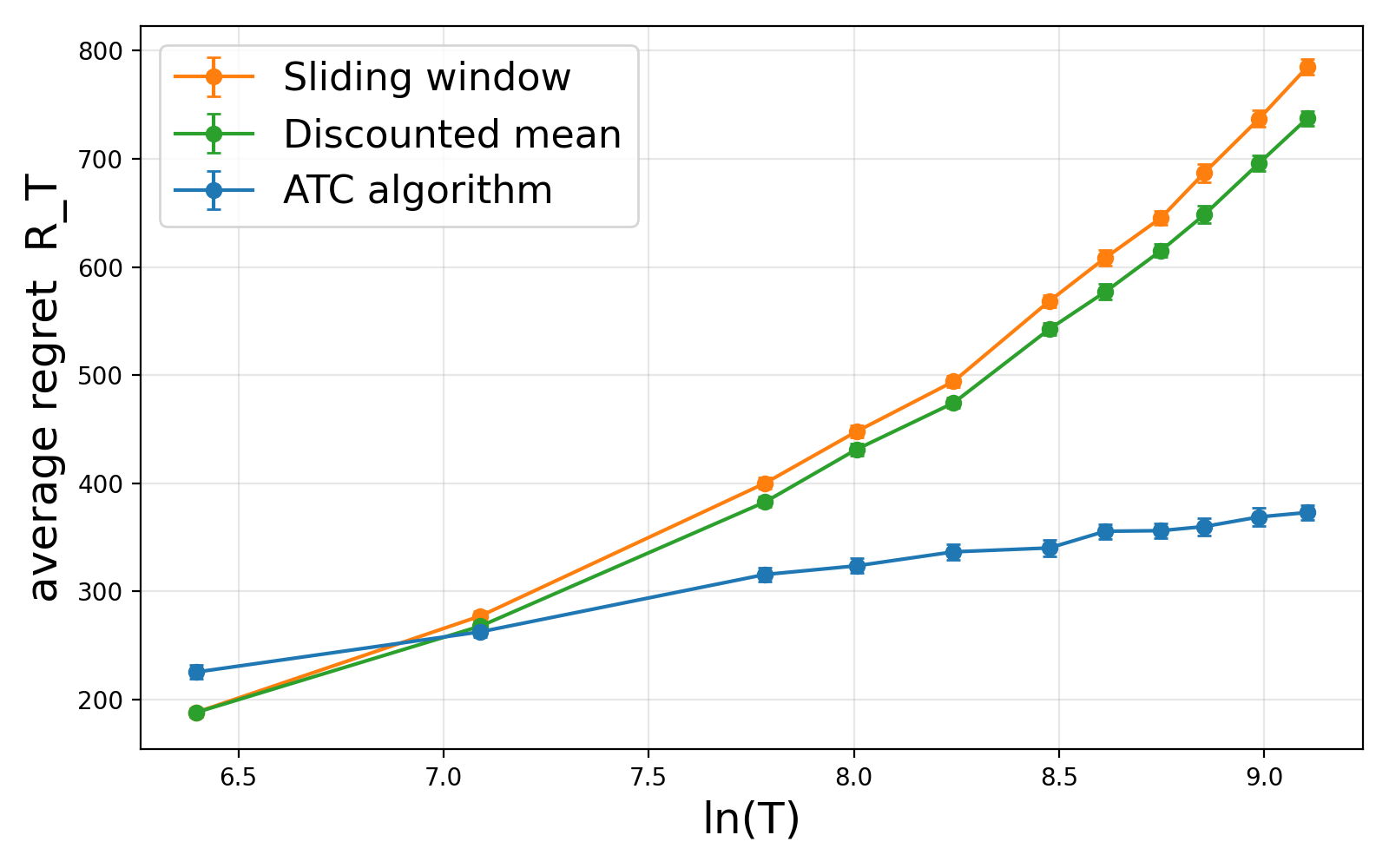}
        \caption{Cumulative regret comparison between ATC and passive baseline algorithms.}
        \label{fig:passive}
    \end{subfigure}
    \caption{Dense change points and passive algorithms.}
\end{figure}

\subsubsection{Time-varying versus constant thresholds.}\label{sssec:time-varying}
We now isolate the role of the time-varying threshold in the anytime setting by comparing ATC with several constant-threshold detectors.
Specifically, we consider three fixed thresholds of the form $\gamma = c\sigma$, including the aggressive choice $\gamma=4.81\sigma$ suggested by finite-horizon tuning, as well as more conservative alternatives.
Figure~\ref{fig:sim_const} summarizes the comparison in terms of average regret (left) and false-alarm (FA) rates (right) as a function of the horizon $T$.

As shown in Figure~\ref{fig:const_regret}, the threshold $\gamma=4.81\sigma$ achieves the lowest regret for small and moderate horizons.
This behavior is explained by its high detection rate: early in the horizon, rapid alarms lead to short detection delays and reduced bias after each change.
However, this improved adaptivity comes at the cost of an increasing false-alarm rate, shown in Figure~\ref{fig:const_fa}.
In particular, the FA rate of $\gamma=4.81\sigma$ grows steadily with $T$. In contrast, ATC maintains a near-zero FA rate uniformly over the horizon by increasing its threshold over time, illustrating its anytime guarantee.

As $T$ increases, the accumulation of false alarms under $\gamma=4.81\sigma$ leads to excessive restarts and therefore to information loss (since discarding samples increases the estimator variance), which ultimately degrades performance and causes its regret to surpass that of ATC.
In contrast, ATC adaptively raises its threshold over time, preserving a controlled false-alarm rate while still detecting sufficiently strong changes.

\paragraph{Implementation details.}
Figure~\ref{fig:sim_const} is generated using a single-change synthetic environment
designed to emphasize early--time behavior.  For each run we simulate observations
$X_t=\mu_t+\sigma Z_t$ with $Z_t\sim\mathcal N(0,1)$ i.i.d.\ and $\sigma=1$, where the
mean is piecewise-constant with a single change point at
\[
\tau_1 = 50,\qquad \mu_t =
\begin{cases}
0, & 1\le t<\tau_1,\\
0.75, & \tau_1 \le t \le T,
\end{cases}
\]
so $S=1$ and the jump magnitude is $0.75$, and
\[
T \in \{5000, 10000, 15000, 20000, 30000  \}.
\]
We  average all reported quantities over $1000$ Monte Carlo replications.

\begin{figure}[t]
    \centering
    \begin{subfigure}[t]{0.48\linewidth}
        \centering
        \includegraphics[width=\linewidth]{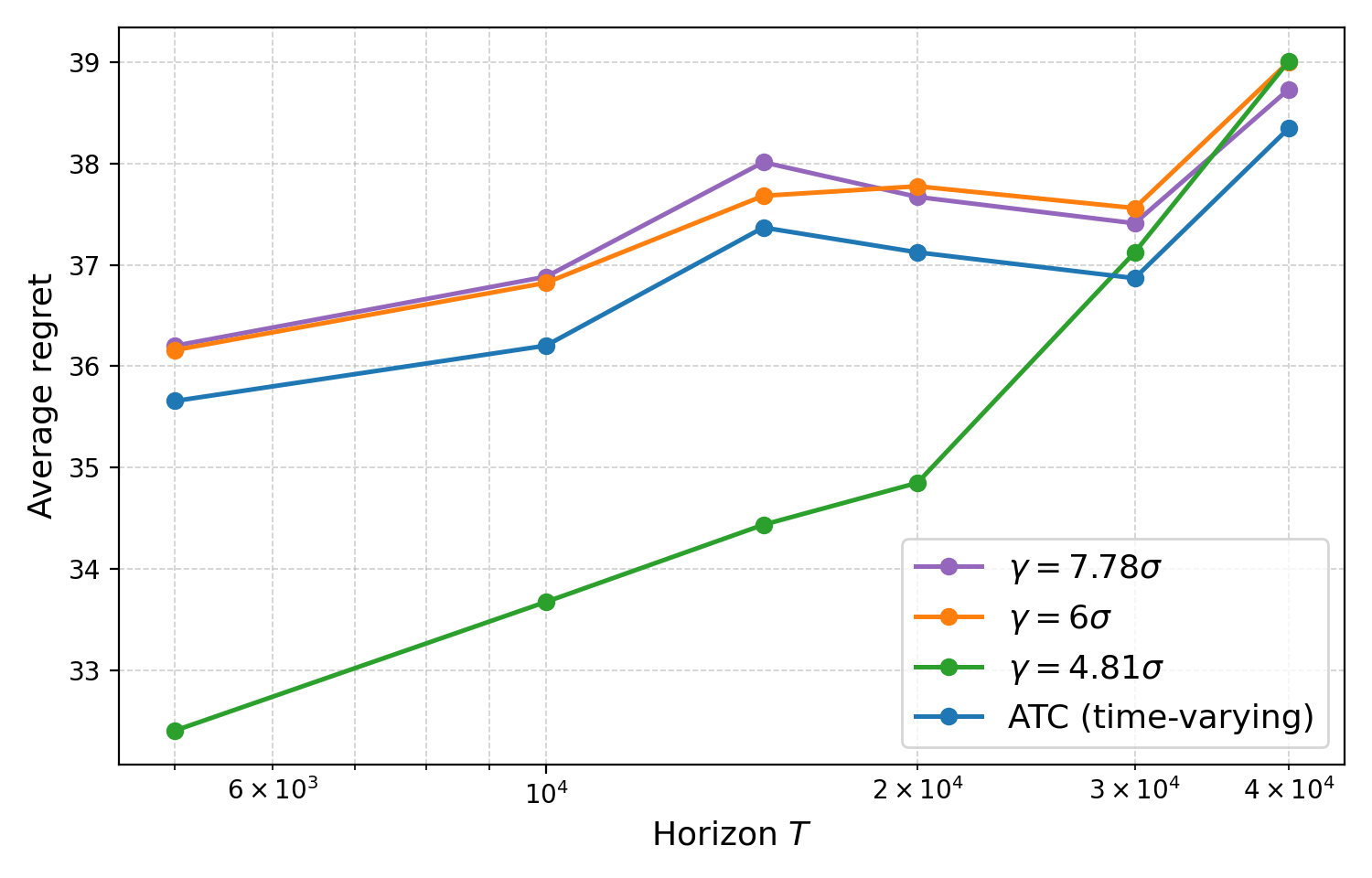}
        \caption{Average regret as a function of the horizon $T$ for ATC and constant thresholds.}
        \label{fig:const_regret}
    \end{subfigure}
    \hfill
    \begin{subfigure}[t]{0.48\linewidth}
        \centering
        \includegraphics[width=\linewidth]{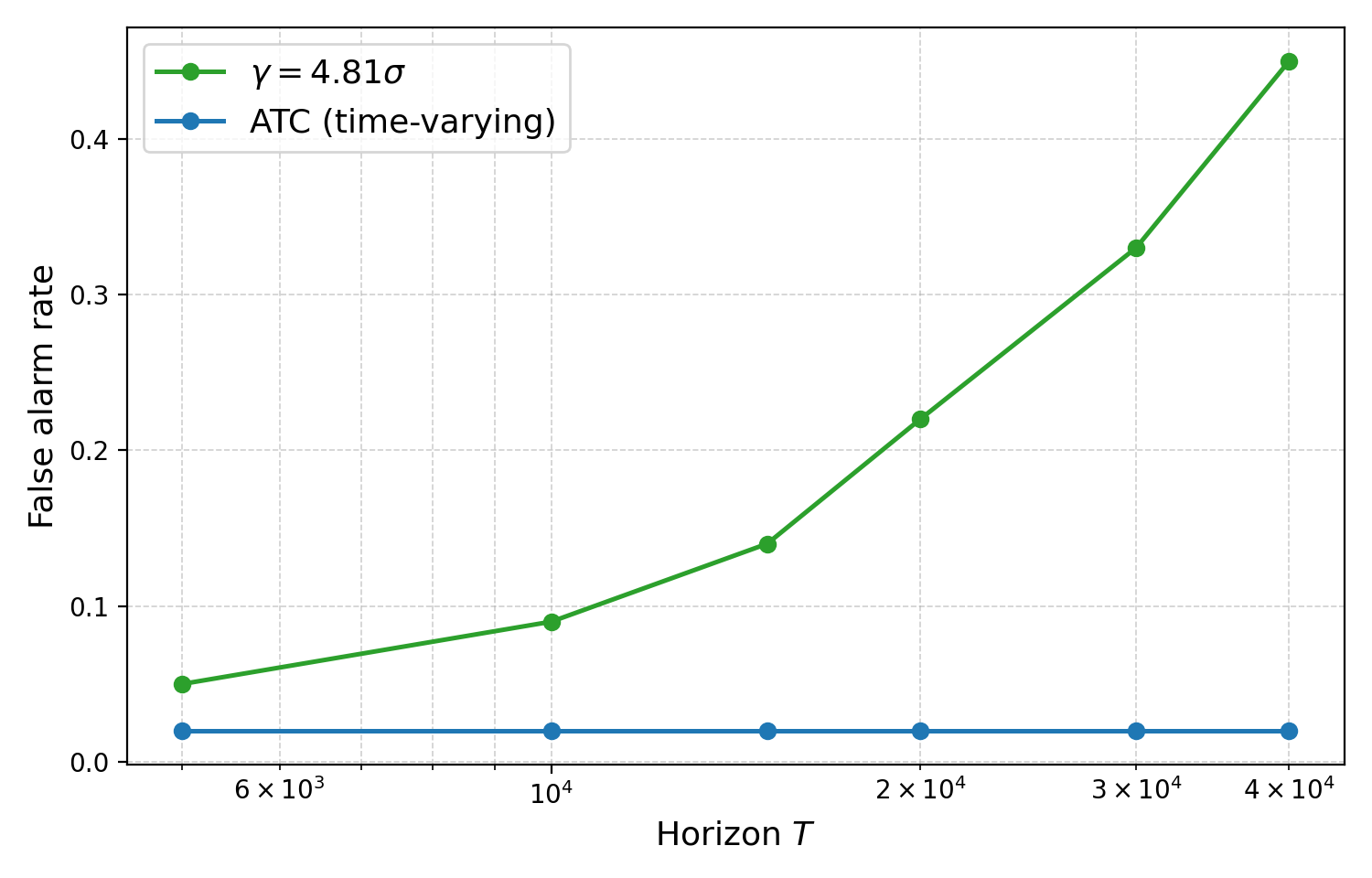}
        \caption{False-alarm rate as a function of the horizon $T$. Only ATC and $\gamma=4.81\sigma$ are shown, as the remaining constant thresholds are visually indistinguishable from ATC.}
        \label{fig:const_fa}
    \end{subfigure}
    \caption{Comparison between ATC and constant-threshold detectors in the anytime setting.}
    \label{fig:sim_const}
\end{figure}

\subsubsection{Real-world data: Variance-proxy misspecification.}\label{sssec:variance}
We examine the sensitivity of ATC to misspecification of the sub-Gaussian noise
parameter $\sigma$.
Since the noise variance is not known a priori for real-world data, we evaluate
ATC under several fixed choices of $\sigma$, including $\sigma=1$, which is
estimated offline from the data and yields the best empirical performance.
As shown in Figure~\ref{fig:nab_regret}, when $\sigma\in\{1,4\}$, ATC reliably
tracks the evolving mean and incurs only transient bias following each change.
In contrast, under the underestimated choice $\sigma=0.5$, the algorithm triggers
an excessive number of false alarms, leading to frequent restarts and diverging
cumulative regret.
This highlights the sensitivity of change detection to variance misspecification
in real data.
Developing adaptive or robust mechanisms to mitigate the impact of variance
misspecification is an interesting direction for future work.

\begin{figure}[t]
    \centering
    \begin{subfigure}[t]{0.48\linewidth}
        \centering
        \includegraphics[width=\linewidth]{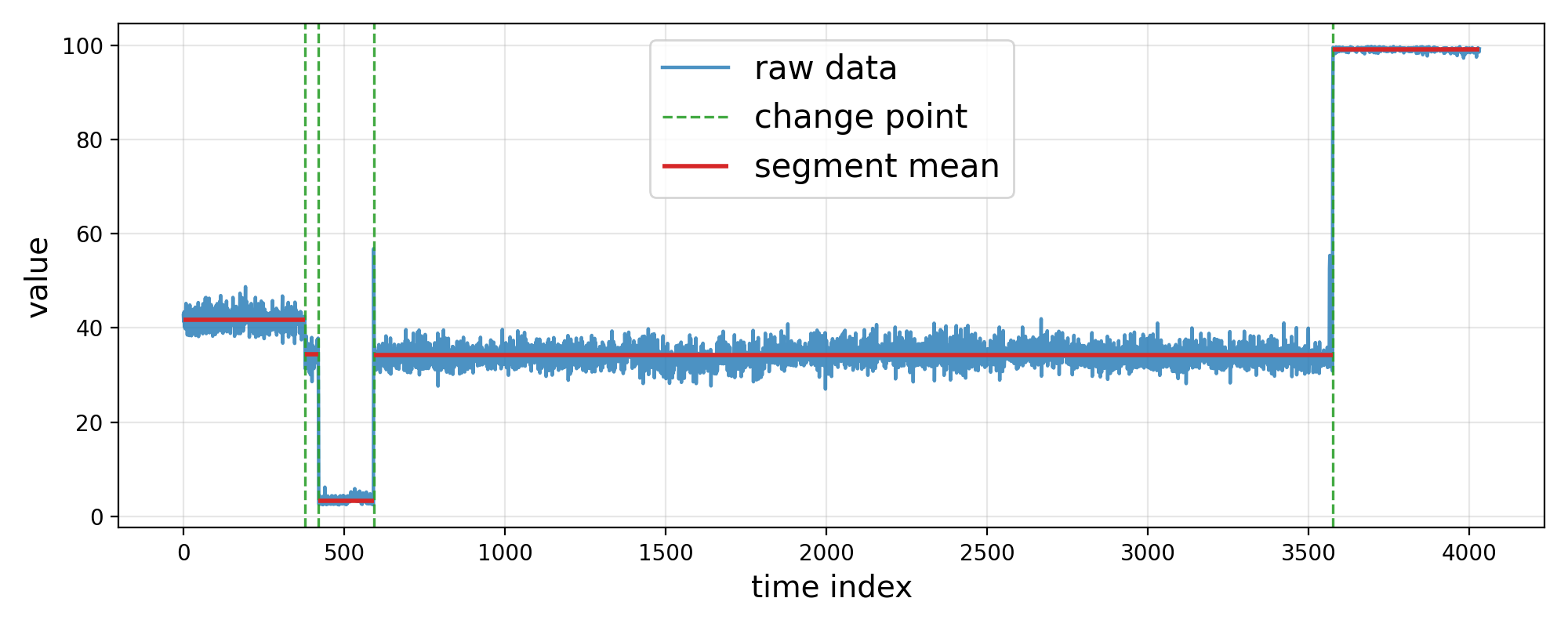}
                \caption{CPU usage time series with  change points.}
        \label{fig:nab_env}
    \end{subfigure}
    \hfill
    \begin{subfigure}[t]{0.48\linewidth}
        \centering
        \includegraphics[width=\linewidth]{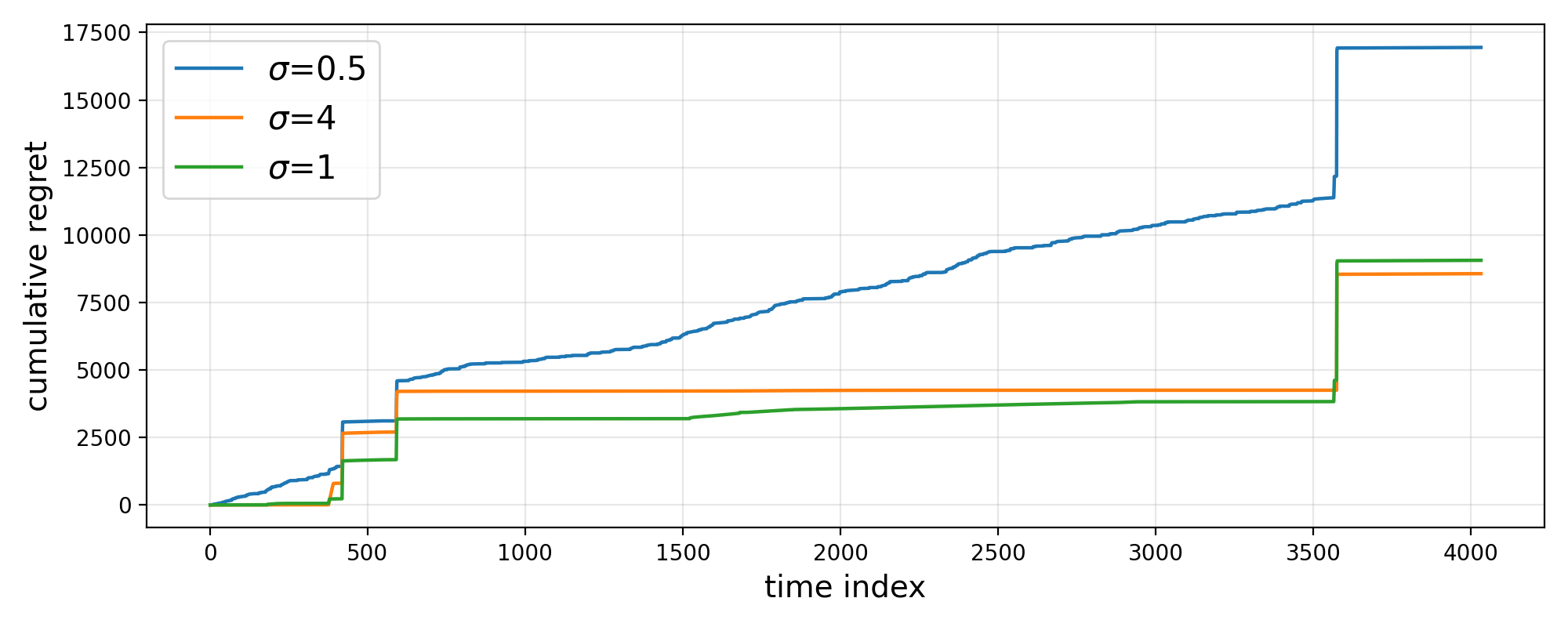}
        \caption{Cumulative regret of ATC for different noise variance choices.}
        \label{fig:nab_regret}
    \end{subfigure}
    \caption{ATC on real-world data from the NAB benchmark (AWS CPU usage).}
    \label{fig:nab}
\end{figure}

\subsubsection{Adversarial environment.} \label{sssec:adversarial}
We construct a deliberately ``hard'' (adversarial) environment for ATC by placing change points at equal spacing
$a \approx T/(S+1)$ and alternating the mean across segments by a gap $\Delta$.
This construction exposes a fundamental tradeoff in regret.
On the one hand, smaller values of $\Delta$ reduce the instantaneous loss incurred after a change, but lead to longer detection delays.
On the other hand, larger values of $\Delta$ are detected more rapidly, but induce larger losses before the restart.

A simple expectation-level calculation highlights the most challenging regime.
Let $a$ denote the number of pre-change samples and $b_t$ the number of post-change samples observed since the shift.
Then the cumulative regret accumulated up to time $t$ after the change satisfies
\[
R(t)\;\le\;\min\Big\{a\Delta^2,\; \sigma^2\big(\gamma_t^r\big)^2\Big\},
\]
where the first term reflects the intrinsic ceiling $R(t)\le a\Delta^2$, while the second term arises from the drift--threshold comparison: before an alarm is triggered, the effective information
$\frac{ab_t}{a+b_t}\frac{\Delta^2}{\sigma^2}$ cannot exceed $(\gamma_t^r)^2$.
This deterministic upper bound is maximized near the boundary
$a\Delta^2 \asymp \sigma^2(\gamma_t^r)^2$,
suggesting that the most adverse shifts are those calibrated close to this regime.
In our simulations, we instantiate this stress test by setting
\[
\Delta^2 \;=\; c\,\frac{\sigma^2 \log a}{a}.
\]
The constant $c$ controls proximity to the boundary: smaller values push the shifts toward missed detection, whereas larger values make detection easier but increase the instantaneous loss.

The resulting environment is illustrated in Figure~\ref{fig:sim_adversarial_env}.
Figure~\ref{fig:adversarial_results} compares the performance of ATC on this adversarial construction with the milder environment shown in Figure~\ref{fig:atc_three_panels}.
As predicted by the theoretical analysis, both environments exhibit logarithmic regret growth.
However, the adversarial instance incurs noticeably higher regret, since each change is deliberately placed near the deterministic detection boundary, where the bound
$\min\{a\Delta^2,\sigma^2(\gamma_t^r)^2\}$ is largest.
In this regime, detection is delayed long enough for bias to accumulate over a non-negligible fraction of each segment, leading to sustained information dilution prior to each restart.
As a result, each change contributes close to its maximal admissible regret, and these contributions aggregate across the $S$ changes.
By contrast, in the milder environment, many changes are detected earlier and therefore contribute substantially less to the cumulative regret.

\paragraph{Implementation details.}
The adversarial experiment fixes the number of change points to $S=5$ and places them evenly along the horizon,
with segment length $a\approx T/(S+1)$. For each 
\[
T \in \{600,1200,2400,4800,7000,9000\},
\]
we construct a piecewise-constant mean sequence with alternating levels
$\mu_j\in\{0,\Delta\}$ across segments with $\Delta$ chosen as described above.
The constant $c$ controls the difficulty of detection: the non-adversarial experiment corresponds to $c=1000$,
whereas the adversarial setting uses $c=160$, which optimizes the tradeoff described above.
For each $(T,c)$ pair we generate $1000$ independent Monte Carlo replications.

All runs use ATC with the time-varying threshold $\gamma_t^r$ and confidence parameter $\alpha=0.05$.
Figure~\ref{fig:adversarial_results} reports the Monte Carlo average of $\mathcal R_T$ over $1000$ runs as a function of $T$ for the two choices of $c$.

\begin{figure}[t]
    \centering
    \begin{subfigure}[t]{0.48\linewidth}
        \centering
        \includegraphics[height=4.2cm]{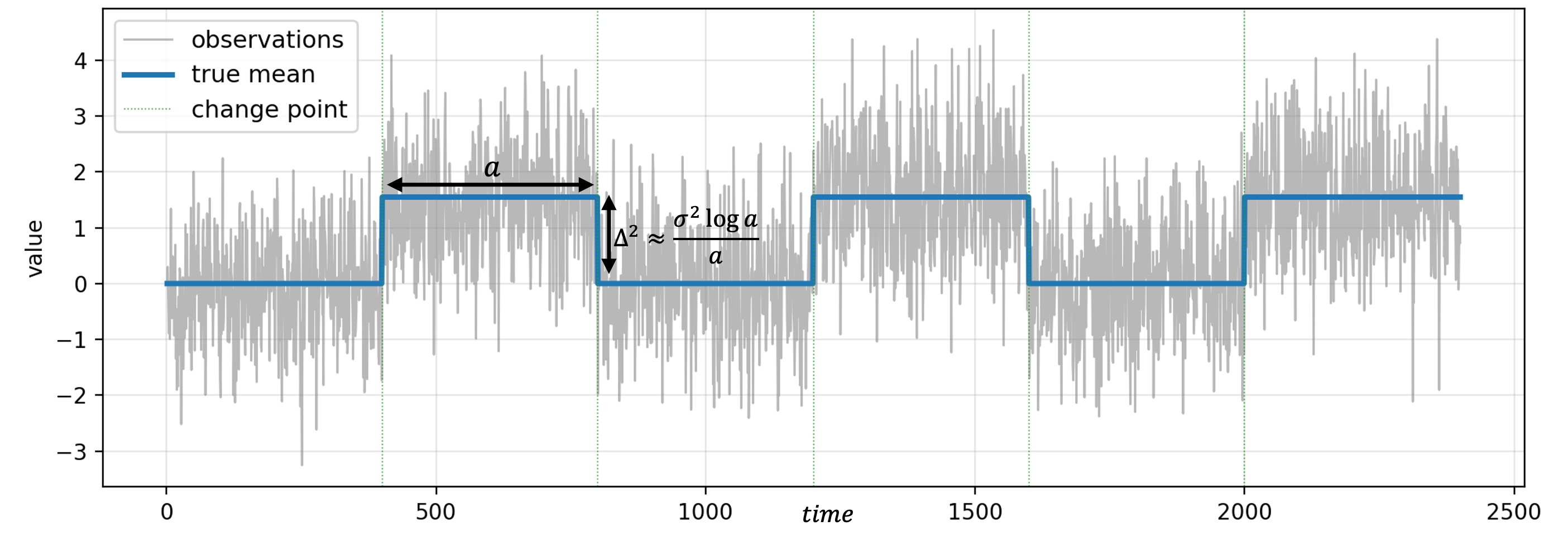}
        \caption{Adversarial environment, where the magnitude of the shift is chosen to maximize the regret.}
        \label{fig:sim_adversarial_env}
    \end{subfigure}
    \hfill
    \begin{subfigure}[t]{0.48\linewidth}
        \centering
        \includegraphics[height=4.2cm]{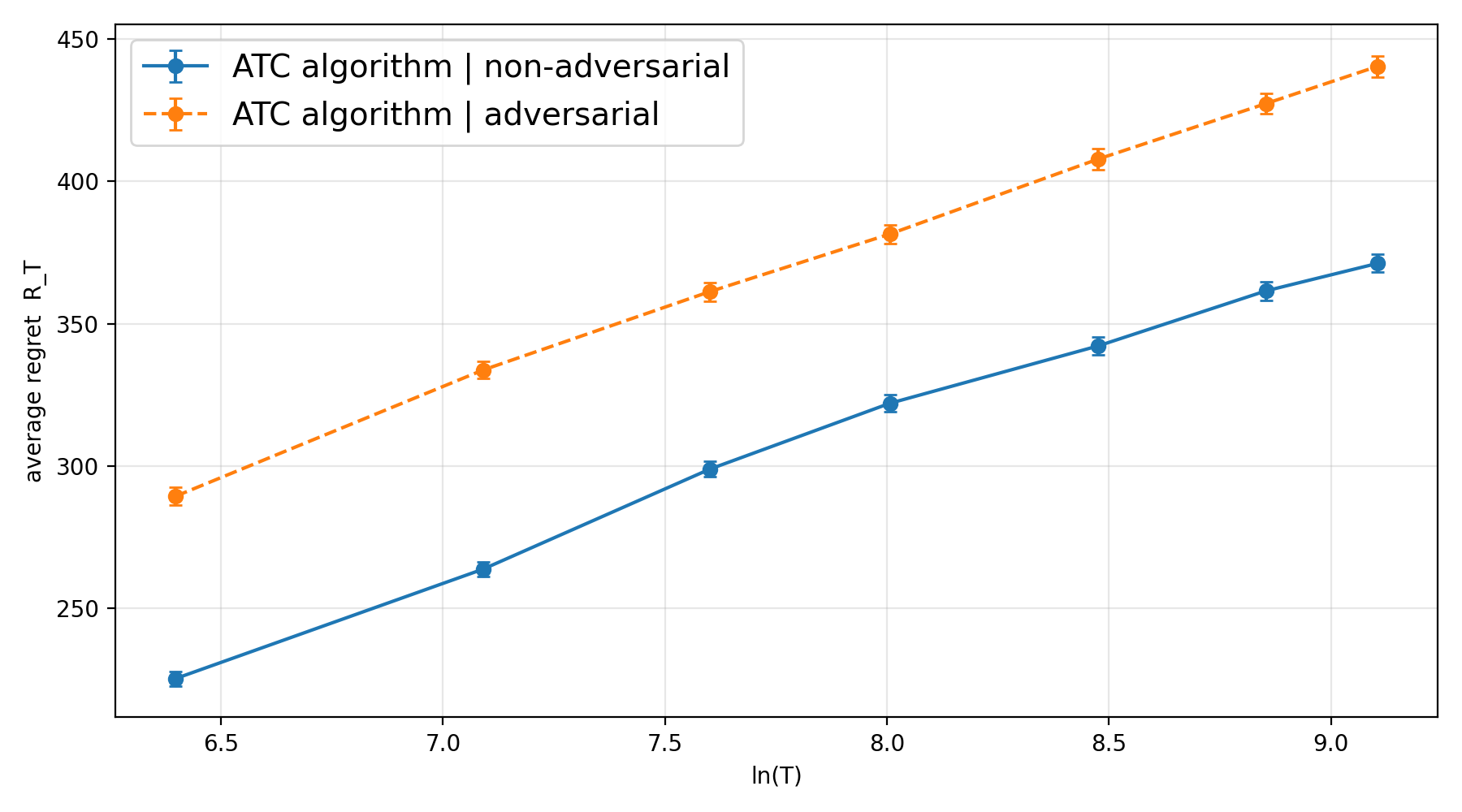}
        \caption{Comparison between the cumulative regret of ATC under adversarial and non-adversarial environments.}
        \label{fig:adversarial_results}
    \end{subfigure}
    \caption{Adversarial environment.}
    \label{fig:sim_adversarial}
\end{figure}

\subsubsection{Regret Scaling in the Number of Changes}
Finally, we empirically validate the linear dependence of the regret on the number of change points $S$, as established in Theorem~\ref{th:upper-bound}. As shown in Figure~\ref{fig:linear_S}, the cumulative regret scales linearly with $S$, matching the predicted order and supporting the tightness of the bound. We use the same observation model as in the previous experiments, with $\sigma = 1$ and $\alpha = 0.05$. The number of change points is varied from $2$ to $20$ in increments of $2$, while the horizon is fixed at $T=1000$. All results are averaged over $1000$ Monte Carlo runs.

\begin{figure}
    \centering
    \includegraphics[width=0.5\linewidth]{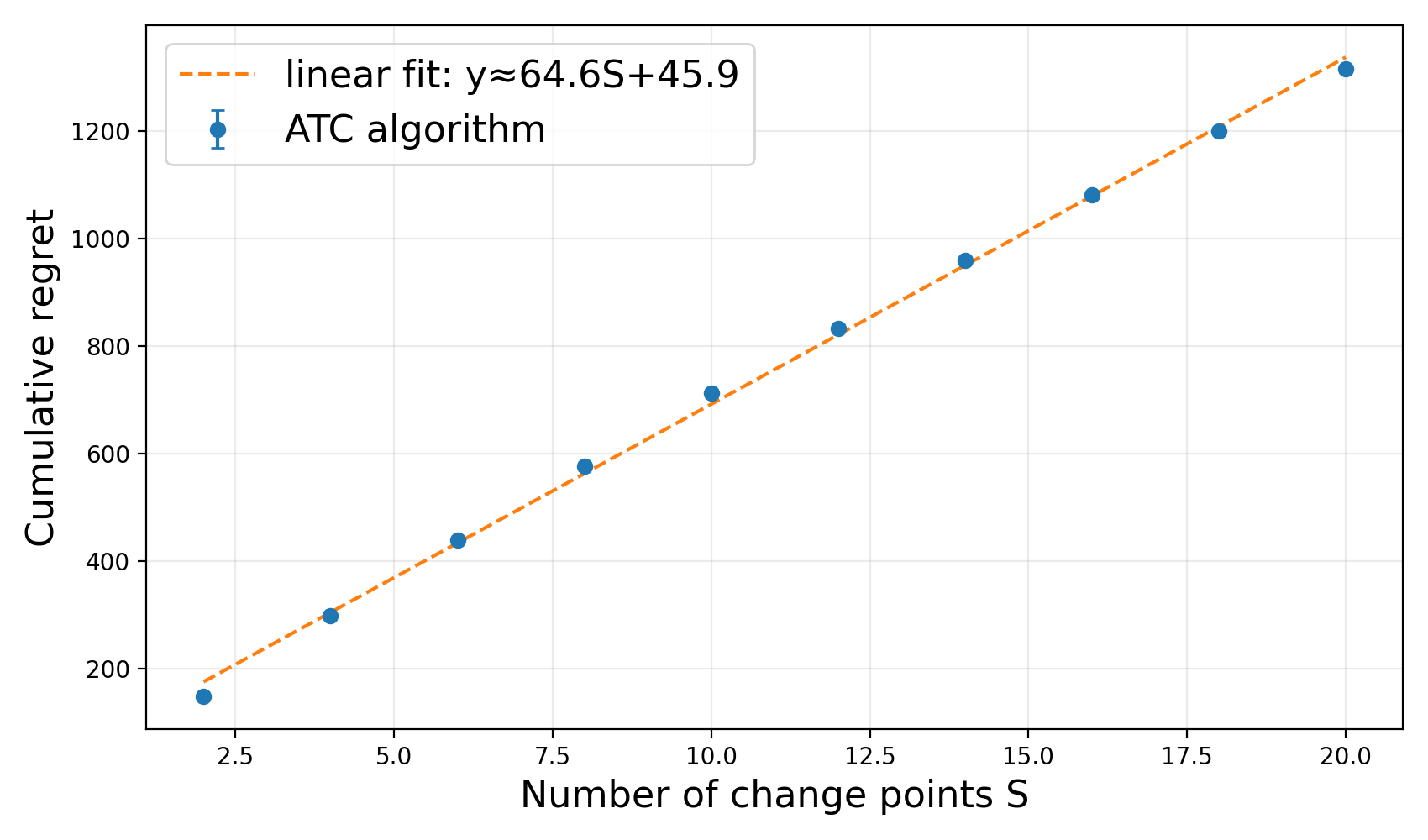}
    \caption{Empirical scaling of cumulative regret with the number of change points $S$.}
    \label{fig:linear_S}
\end{figure}

\subsection{Notations} \label{app:ATC-notations}
Recall that $N_G^r$ in Eq.~\eqref{eq:ATC-test} denotes the alarm time (i.e., stopping time) of the ATC test when started at time~$r$. After each alarm, the algorithm restarts both the threshold and the estimator by setting $r \leftarrow t-1$. We denote the total number of alarms to be $K$.
Starting from $r_0 := 1$, we define the (random) sequence of restarts recursively by
\begin{align}
r_{m+1} \;:=\; (N_G^{r_m}-1) \wedge T, 
\quad m = 0,1,\dots,
\end{align}
and stop at the first $K$ such that $r_K = T$.
For each restart $r_m$ with $m = 0,\dots,K-1$, define the corresponding block
\begin{equation}
    B_{m} \;:=\; \bigl\{t \in \{2,\dots,T\} :\ r_m < t \le r_{m+1}\bigr\}.
\end{equation}
By construction, the random sets $\{B_{m}\}_{m=0}^{K-1}$ are disjoint and form a partition of $\{2,\dots,T\}$: for every $t \in \{2,\dots,T\}$ there is a unique $m$ such that $t \in B_{m}$.
Let $r(j)$ denote the last alarm before $\tau_j$, i.e.,
\begin{align}\label{eq:last-alarm}
    r(j) = \max \{r_m: r_m<\tau_j \}.
\end{align}


Figure~\ref{fig:notations} illustrates the notation used in the construction of the upper bound proof.

\begin{figure}[ht]
  \centering
  \includegraphics[width=\linewidth]{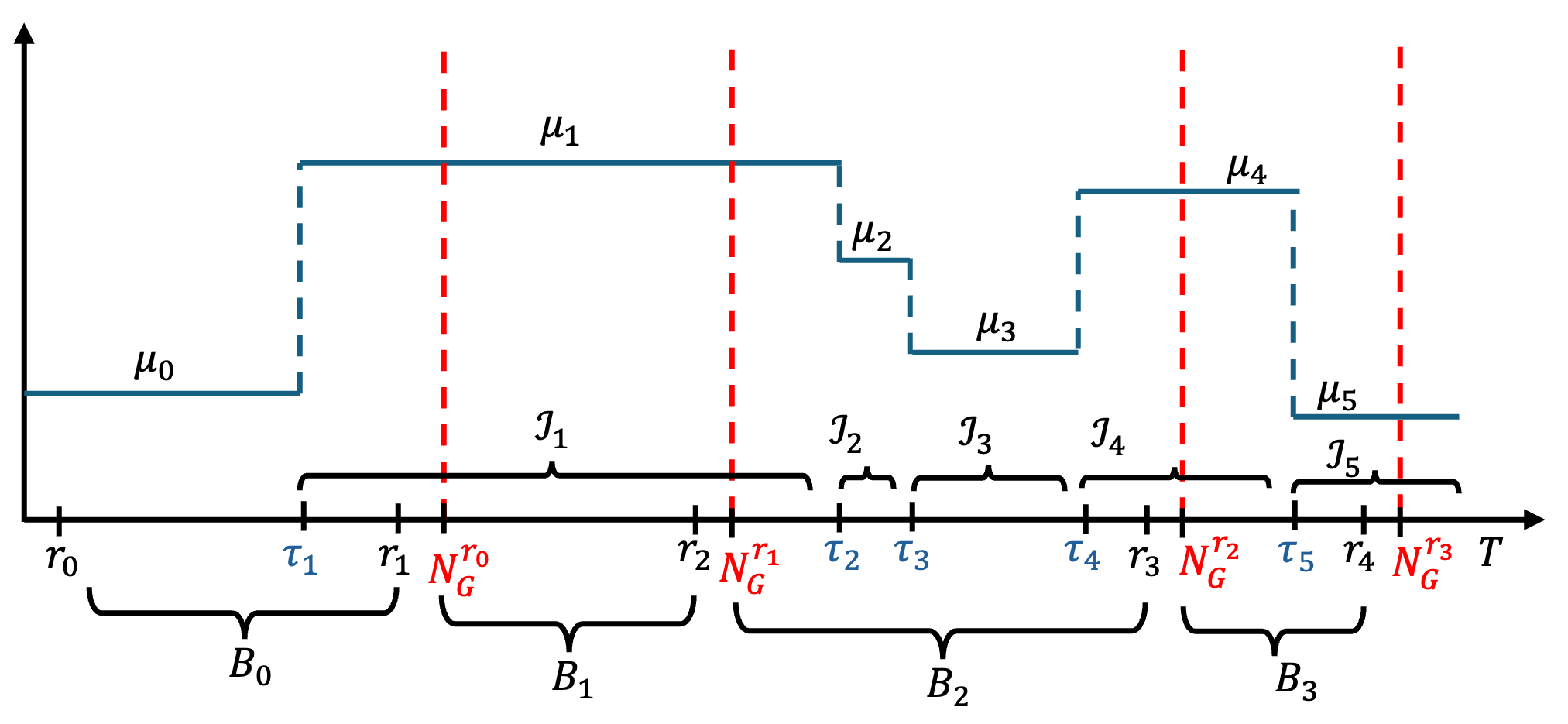}
  \caption{Schematic Illustration of the block and restart structure used in the upper-bound analysis. The blue solid line shows the underlying piecewise-stationary mean $\mu_t$  with true change points $\{\tau_j\}$. Red dashed lines denote alarm times $\{N_G^r\}$ at which the algorithm restarts, inducing blocks 
$B_m $. A block may contain several true change points. $\mathcal{I}_j = [\tau_j, \tau_{j+1})$ as defined in Section~\ref{sec:problem-formulation}. }
  \label{fig:notations}
\end{figure}

\subsection{Proof of Theorem~\ref{th:upper-bound}} \label{ATC-th-proof:regret-upper-bound}

In this section we prove the regret upper bound for ATC using the notations introduced in App.~\ref{app:ATC-notations}.
The argument proceeds in five steps. 
First, we decompose the regret into a global variance term and a bias term, aligned with the block structure induced by the alarms of the algorithm.
Second, we prove a fixed-restart anytime deviation inequality for the GLR statistic \(\widehat D^r_{k,t}\) in Eq.~\eqref{eq:D_statistic}. This inequality is used through its summable probability budget: in the regret analysis,
events evaluated at random restart times are bounded pathwise by sums over deterministic restart indices, and their probabilities are then summed. This controls  the contribution of rare deviation events.
We use this result to bound the bias regret and the expected number of false alarms, which is then leveraged to bound the variance regret.
When there is no ambiguity, we write $\mathcal{R}_T \equiv \mathcal{R}_T^{\text{ATC}}$. Since the estimate at the first time step is unspecified, we evaluate the regret starting from $t=2$.

\begin{lemma}[\emph{Regret upper bound decomposition}] \label{app:lem-ATC-regret-decomoposition}
The regret can be decomposed as
\begin{align} 
    \mathcal{R}_T
    &\le
    2\,\mathbb{E}\Bigg[\sum_{m=0}^{K-1} \ \sum_{t \in B_m} \bigl(\hat\mu_t^{r_m} - \bar\mu_t^{r_m}\bigr)^2\Bigg] \notag \\[-2pt]
    &\quad+\;
    2\,\mathbb{E}\Bigg[\sum_{j=1}^S  
        \bigl(\Delta_j^{(\mathrm{eff})}\bigr)^2 
        \sum_{t =\tau_j}^{\tau_{j+1}-1}
        \left(\frac{a_L^{(j)}}{a_L^{(j)}+ b_t}\right)^2  \mathbf{1} \{N_G^{r(j)} >t \}\Bigg] \nonumber \\
    &\eqqcolon\;
    \mathcal{R}_T^{\mathrm{var}}
    \;+\;
    \mathcal{R}_T^{\mathrm{bias}}, 
    \label{app:eq-ATC-regret-decomoposition}
\end{align}
with $K$ and $r(j)$ defined in App.~\ref{app:ATC-notations},  $\bar\mu_t^{r_m}$ defined in Eq.~\eqref{ATC-eq:bias},  $a_L^{(j)}$ and $b_t$ defined in Eq.~\eqref{ATC-eq:effective-samples} and $\Delta_j^{\text{(eff)}}$ defined in Eq.~\eqref{eq:effective_gap}.
\end{lemma}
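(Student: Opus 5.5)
Within each block, I will compare the ATC prediction to the average of the true means over the same window, and charge everything else to bias. Define, for $t\in B_m$, the conditional mean of the estimator
\[
\bar\mu_t^{r_m} := \frac{1}{t-r_m}\sum_{i=r_m}^{t-1}\mu_i ,
\]
which is the natural object in Eq.~\eqref{ATC-eq:bias}. Then for every $t$ I write
\[
\hat\mu_t-\mu_t=(\hat\mu_t^{r_m}-\bar\mu_t^{r_m})+(\bar\mu_t^{r_m}-\mu_t)
\]
and apply $(a+b)^2\le 2a^2+2b^2$. Since $\{B_m\}_{m=0}^{K-1}$ partitions $\{2,\dots,T\}$, summing the first squared term over $t$ and taking expectations gives exactly $\mathcal R_T^{\mathrm{var}}$. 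The remaining work is to bound $2\,\mathbb E\sum_t(\bar\mu_t^{r_m}-\mu_t)^2$ pathwise by the bias expression.

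For the bias I reorganize the sum by true segments rather than by blocks. Fix $t\in\mathcal I_j$, so $\mu_t=\mu_j$, and let $r_m$ be the active restart at $t$. There are two cases.
\begin{itemize}
\item If $r_m\ge\tau_j$, the whole window $[r_m,t)$ lies in $\mathcal I_j$, so $\bar\mu_t^{r_m}=\mu_j$ and the bias vanishes. This covers $j=0$ and every time after the first alarm following $\tau_j$.
\item If $r_m<\tau_j$, then $r_m$ is the last alarm before $\tau_j$, i.e.\ $r_m=r(j)$ as in Eq.~\eqref{eq:last-alarm}, and no alarm has occurred up to $t$, i.e.\ $N_G^{r(j)}>t$.
\end{itemize}
In the second case I split the window into the $a_L^{(j)}:=\tau_j-r(j)$ pre-change samples and the $b_t:=t-\tau_j$ post-change samples (adjusting by one according to the indexing convention in Eq.~\eqref{ATC-eq:effective-samples}). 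The window average is then a convex combination,
\[
\bar\mu_t^{r(j)}=\frac{a_L^{(j)}\mu_{\mathrm{pre}}^{\mathrm{eff}}(r(j),j)+b_t\mu_j}{a_L^{(j)}+b_t},
\]
where $\mu_{\mathrm{pre}}^{\mathrm{eff}}$ is the length-weighted mixture of Eq.~\eqref{eq:effective_mean}. Consequently
\[
\bar\mu_t^{r(j)}-\mu_j=\frac{a_L^{(j)}}{a_L^{(j)}+b_t}\bigl(\mu_{\mathrm{pre}}^{\mathrm{eff}}-\mu_j\bigr),
\]
and its square is $\bigl(\Delta_j^{(\mathrm{eff})}\bigr)^2\bigl(a_L^{(j)}/(a_L^{(j)}+b_t)\bigr)^2$.

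Putting the two cases together, for $t\in\mathcal I_j$ with $j\ge1$ the squared bias is at most this quantity times $\mathbf 1\{N_G^{r(j)}>t\}$, and it is zero otherwise. Summing over $t\in[\tau_j,\tau_{j+1})$ and over $j$, then taking expectations, gives $\mathcal R_T^{\mathrm{bias}}$.

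The main obstacle is bookkeeping rather than any probabilistic argument. Because the restarts are random, $r(j)$, $a_L^{(j)}$ and $\Delta_j^{(\mathrm{eff})}$ are all random, so the identity must be applied pathwise before expectations are taken. The off-by-one conventions also need care: restarts set $r\gets t-1$ and the window is $[r,t)$, so the alignment of $a_L^{(j)}$ and $b_t$ (including the case $t=\tau_j$, where $b_t=0$) must be checked against the block definition. If there is slack, it is absorbed by the inequality direction.
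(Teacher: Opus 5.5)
Your proposal is correct and follows essentially the same route as the paper: you split each squared error around the block mean $\bar\mu_t^{r_m}$ using $(a+b)^2\le 2a^2+2b^2$, then reorganize the bias by true segments. There the bias vanishes once a restart at or after $\tau_j$ is active; otherwise, on $\{N_G^{r(j)}>t\}$, the convex-combination identity makes it equal to $\bigl(a_L^{(j)}/(a_L^{(j)}+b_t)\bigr)^2\bigl(\Delta_j^{(\mathrm{eff})}\bigr)^2$. No off-by-one adjustment is actually needed: the window $\{r(j),\dots,t-1\}$ splits exactly into $\tau_j-r(j)$ pre-change and $t-\tau_j$ post-change indices, so the bias identity holds with equality.
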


The proof is given in App.~\ref{app:proofs-lem-ATC-regret-decomposition}.
The term $\mathcal{R}_T^{\mathrm{var}}$ is a global variance-like term that collects all noise contributions around the block means.
The term $\mathcal{R}_T^{\mathrm{bias}}$ captures the error due to distribution shifts: it is nonzero only on blocks containing true change points and only before the algorithm has raised an alarm and restarted the estimator.

\medskip

The next lemma provides a uniform high-probability bound on the deviation between the empirical statistic $\hat D_{k,t}^r$ and its population version $D_{k,t}^r$ defined in Eq.~\eqref{eq:D_population}.

\begin{lemma}[\emph{High‑probability confidence bound}]\label{ATC-lemma:confidence}
For each $r,t\in[T]$ with $t>r+1$, let $\gamma_t^r$ be defined as in Eq.~\eqref{eq:threshold}.
Then, for each fixed $r$ we have
\begin{equation}\label{eq:RGC-confidence-bound-event}
    \bb P \left( \exists\, k,t \in [T],\ r < k<t :
        \bigl|\hat D_{k,t}^r - D_{k,t}^r \bigr| \,\geq\, \gamma_t^r \right)
    \;\leq\; \alpha_r.
\end{equation}
\end{lemma}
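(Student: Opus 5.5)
Fix $r$ and a pair $(k,t)$ with $r<k<t$. By the reverse triangle inequality,
\[
\bigl|\hat D_{k,t}^r - D_{k,t}^r\bigr| \le \frac{1}{\sigma}\sqrt{\frac{(k-r)(t-k)}{t-r}}\,\bigl|Z_{k,t}^r\bigr|,
\qquad
Z_{k,t}^r := (\bar X_{r:k-1}-\bar\mu_{r:k-1})-(\bar X_{k:t-1}-\bar\mu_{k:t-1}).
\]
So it suffices to control the centered, standardized contrast. The variable $Z_{k,t}^r$ is a linear combination of the independent centered sub-Gaussian noises $X_i-\mu_i$. The weights are $1/(k-r)$ on the first block and $-1/(t-k)$ on the second. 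The squared weights sum to $\frac{1}{k-r}+\frac{1}{t-k}=\frac{t-r}{(k-r)(t-k)}$.

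Hence $W_{k,t}^r := \frac{1}{\sigma}\sqrt{\frac{(k-r)(t-k)}{t-r}}\,Z_{k,t}^r$ is sub-Gaussian with variance proxy $1$, independently of $k,t$ and of the means. The Chernoff bound then gives, for every $x>0$,
\[
\mathbb P\bigl(|W_{k,t}^r|\ge x\bigr)\le 2e^{-x^2/2}.
\]
The one-sample case is equally clean, so no dependence on the unknown $\mu$'s enters.

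Next, take a union bound over all admissible pairs with $x=\gamma_t^r$. Plugging in the threshold \eqref{eq:threshold},
\[
2e^{-(\gamma_t^r)^2/2} = 2\,(t-r)^{-3}\,\alpha_r\,\Bigl(\frac{3}{\pi^2}\Bigr) = \frac{6}{\pi^2}\,\frac{\alpha_r}{(t-r)^3}.
\]
For fixed $t$ there are $t-r-1<t-r$ split points $k$, so the contribution at time $t$ is at most $\frac{6}{\pi^2}\alpha_r (t-r)^{-2}$. Summing over $t>r+1$, i.e.\ over $n=t-r\ge 2$, uses $\sum_{n\ge1} n^{-2}=\pi^2/6$. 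The total is therefore at most $\alpha_r$, which is \eqref{eq:RGC-confidence-bound-event}. The sum may even be extended over all $t>r$, not only $t\le T$, giving the anytime version.

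I expect the only delicate points to be bookkeeping rather than conceptual.

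First, one must verify the constant: the $6\log(t-r)$ term must produce a cube so that, after the factor $(t-r)$ from the $k$-scan, a summable $(t-r)^{-2}$ remains. The $2\log(\pi^2/3)$ term must absorb both the factor $2$ from the two-sided tail and the normalization $6/\pi^2$.

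Second, one must check that the sub-Gaussian property holds for weighted sums with the stated proxy. This follows from independence and the MGF bound in the footnote: $\mathbb E e^{\lambda \sum a_i\xi_i}\le e^{\sigma^2\lambda^2\sum a_i^2/2}$.
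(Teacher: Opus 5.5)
Your proposal is correct and follows essentially the same route as the paper. You bound $|\hat D_{k,t}^r - D_{k,t}^r|$ by the standardized centered contrast via the reverse triangle inequality, show that this contrast is $1$-sub-Gaussian because its squared weights sum to $1/\sigma^2$, and union-bound over the split points and over $t$; the threshold is chosen so that each time $t$ contributes at most $\frac{6}{\pi^2}\alpha_r(t-r)^{-2}$, with the same constant bookkeeping and the same anytime extension to all $t>r$.
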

The proof is given in App.~\ref{ATC-app:confidence-proof}.
Lemma~\ref{ATC-lemma:confidence} provides an upper bound of \(\alpha_r\) on the deviation probability for each deterministic restart index \(r\), uniformly over all split points \(k\) and times \(t\). In the regret analysis below, we use this result through the resulting summable
probability budget. In particular, when a bad event is evaluated at a random restart time, we dominate its indicator by the sum of the same events over all deterministic restart indices and then sum the corresponding probabilities. This avoids conditioning on a random restart time and yields expectation-level control of the stochastic bias term and of the number of false alarms.

\medskip

\begin{lemma}[\emph{Bias regret upper bound}]\label{ATC-lemma:bias-upper-bound}
    Under the bounded mean diameter $\Delta_{\text{diam}} \leq M$ (Eq.~\eqref{eq:diam}), we have
\begin{equation}\label{ATC-eq:bias-upper-bound}
    \mathcal{R}_T^{\mathrm{bias}}
    \;\le\;
    2M^2 \alpha + 2M^2 S 
    \;+\;
    C_B\,\sigma^2\,S \log\Bigl(\frac{T}{\alpha}\Bigr),
\end{equation}
for some universal constant $C_B>0$.
\end{lemma}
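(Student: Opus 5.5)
We bound each summand of $\mathcal R_T^{\mathrm{bias}}$ in Lemma~\ref{app:lem-ATC-regret-decomoposition}, change point by change point. Fix $j$ and write $r=r(j)$, $a=a_L^{(j)}$ for the effective number of pre-change samples since $r$, $b_t=t-\tau_j+1$, and $\Delta=\Delta_j^{(\mathrm{eff})}$. The $j$-th summand is
\[
2\Delta^2\sum_{t=\tau_j}^{\tau_{j+1}-1}\Big(\frac{a}{a+b_t}\Big)^2\mathbf 1\{N_G^{r}>t\}.
\]
We split on the deviation event
\[
\mathcal G_r=\bigl\{\exists\,k,t:\ |\hat D^r_{k,t}-D^r_{k,t}|\ge\gamma^r_t\bigr\}
\]
of Lemma~\ref{ATC-lemma:confidence}.

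\textbf{On $\mathcal G_r$.} We bound the summand crudely. Since $\Delta\le M$ and $\sum_{b\ge1}a^2/(a+b)^2\le a$, it is at most $2M^2$ times something of order one per block. Because $r$ is random, we dominate $\mathbf 1\{\mathcal G_{r(j)}\}$ pathwise by $\sum_{r'}\mathbf 1\{\mathcal G_{r'},\,r'=r(j)\}$. Summing over $j$ and noting that each deterministic $r'$ serves as the last restart of a contiguous group of changes, the total is at most $2M^2\sum_{r'}\alpha_{r'}\le2M^2\alpha$. A careless count of how many changes share one bad restart could cost a factor; if so, we absorb the excess into the $2M^2S$ term, which also absorbs the $O(1)$ rounding term per change, namely the first post-change step, where $a^2/(a+1)^2\le1$.

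\textbf{Off $\mathcal G_r$.} If $N_G^r>t$, then
\[
\max_k D^r_{k,t}\le \max_k \hat D^r_{k,t}+\gamma^r_t<2\gamma^r_t .
\]
Choosing the aligned split $k=\tau_j$ gives
\[
\frac{a\,b_t}{a+b_t}\,\frac{\Delta^2}{\sigma^2}\le \mathrm{SNR}^{\mathrm{eff}}_j(t;r)\le 4(\gamma^r_t)^2 .
\]
Here we need the effective-gap form of the population statistic at the aligned split to match the mixture mean of Eq.~\eqref{eq:effective_mean}; this is the same computation as in Proposition~\ref{lem:confounding}. Now let $t^\ast$ be the last time before an alarm. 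Since $a^2/(a+b)^2$ is decreasing in $b$,
\[
\Delta^2\sum_{b=1}^{b_{t^\ast}}\frac{a^2}{(a+b)^2}\le\Delta^2\,\frac{a\,b_{t^\ast}}{a+b_{t^\ast}}\le4\sigma^2(\gamma^r_{t^\ast})^2 .
\]
Moreover $(\gamma^r_t)^2=6\log(t-r)+2\log(r^2\pi^2/6\alpha)+2\log(\pi^2/3)=O(\log(T/\alpha))$ because $r,t\le T$. Each change therefore contributes $O(\sigma^2\log(T/\alpha))$, and summing over $S$ changes gives the $C_B$ term.

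The case of an undetected change fits the same bound: the sum runs to $\tau_{j+1}-1$ with $N_G^r>t$ throughout, so it is still controlled by the threshold at the last time in the segment. The main obstacle is bookkeeping, not this inequality. We must make sure that the quantities $a_L^{(j)}$ and $\Delta_j^{(\mathrm{eff})}$ defined in the decomposition are exactly those entering $D^r_{k,t}$ at $k=\tau_j$ when several earlier changes were missed. Otherwise, for $t$ in $\mathcal I_j$, the segment $[\tau_j,t)$ is homogeneous but $[r,\tau_j)$ is a mixture. Since $D^r_{\tau_j,t}$ depends only on the means of the two blocks, this still gives $|\mu^{\mathrm{eff}}_{\mathrm{pre}}-\mu_j|$ with weight $(\tau_j-r)(t-\tau_j)/(t-r)$. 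If $a_L^{(j)}$ were instead defined as a discounted effective count, we would first show $a_L^{(j)}\le \tau_j-r$ and use monotonicity. Proposition~\ref{lem:confounding} is not needed for this lemma itself; it matters for propagating contamination, and that is already encoded in $\Delta^{(\mathrm{eff})}$. Combining the two cases gives Eq.~\eqref{ATC-eq:bias-upper-bound}.
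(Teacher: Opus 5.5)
\textbf{Gap in the bad-event branch.} Your good-event branch is fine and matches the paper's deterministic step: the aligned split $k=\tau_j$ gives $\frac{aB}{a+B}\Delta^2/\sigma^2<4(\gamma^r_t)^2$, plus one $M^2$ for the $b=0$ step. The problem is the other branch. On your global event $\mathcal G_r$ you have no control over when, or whether, the alarm fires. So your crude bound on the $j$-th summand is only
$2\Delta^2\sum_{b=0}^{L_j-1}a^2/(a+b)^2\lesssim 2M^2\bigl(1+\min\{a_L^{(j)},\tau_{j+1}-\tau_j\}\bigr)$.
This is of order $M^2T$ when $a_L^{(j)}$ and the segment length are both of order $T$. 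Your own estimate $\sum_{b\ge1}a^2/(a+b)^2\le a$ shows this; the sum is not ``of order one per block''. Multiplying by $\mathbb P(\mathcal G_{r'})\le\alpha_{r'}$ and summing over $r'$ gives something like $M^2\sum_{r'}\alpha_{r'}(\tau_j-r')$, which is of order $M^2\alpha T$. That is linear in $T$. The $2M^2S$ slack cannot absorb it, because that slack is only $O(M^2)$ per change.

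\textbf{The missing idea: localize the deviation event in time.} The paper uses, for each fixed $t$, the inclusion $\{N_G^{r}>t\}\subseteq H_t(r)\cup\mathcal E_t(r)$. Here $H_t(r)=\{\max_k D^r_{k,t}<2\gamma^r_t\}$ is deterministic given $r$, and $\mathcal E_t(r)$ is a deviation at that single time $t$. The stochastic part is then $\Delta^2\sum_t\mathbf 1\{\mathcal E_t(r(j))\}$, so each bad time step is charged only $M^2$.

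The crucial input is the per-$(r,t)$ union bound over $k$:
$\mathbb P(\mathcal E^{\mathrm{abs}}_t(r))\le 2(t-r)e^{-(\gamma^r_t)^2/2}=\tfrac{6}{\pi^2}\alpha_r(t-r)^{-2}$.
This is where the $6\log(t-r)$ term in the threshold is used. Three steps finish the argument:
\begin{itemize}
\item dominate $\mathbf 1\{\mathcal E^{\mathrm{abs}}_t(r(j))\}\le\sum_{r<t}\mathbf 1\{\mathcal E^{\mathrm{abs}}_t(r)\}$;
\item use that the segments $\mathcal I_j$ are disjoint in $t$;
\item sum over $(r,t)$ to get at most $\sum_r\alpha_r\le\alpha$.
\end{itemize}
Lemma~\ref{ATC-lemma:confidence} on its own would not suffice either. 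It gives probability $\alpha_r$ uniformly over $t$, and summing that over $t$ again costs a factor of $T$.

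A minor point: the decomposition uses $b_t=t-\tau_j$, not $t-\tau_j+1$.
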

The proof is given in App.~\ref{ATC-proofs-lemma:bias-upper-bound}.
The first term in Eq.~\eqref{ATC-eq:bias-upper-bound} corresponds to the contribution of  rare deviation events where the empirical statistic substantially underestimates its population version; it is controlled by the confidence level $\alpha$.
The second and third terms capture the accumulated regret before ATC detects a change and restarts its estimator: each change point contributes at most a logarithmic factor $\sigma^2 \log(T/\alpha)$ in bias regret, leading to the overall $O(\sigma^2 S \log(T/\alpha))$ dependence.

\medskip

We next show, using Lemma~\ref{ATC-lemma:confidence}, that the expected number of false alarms is upper bounded by $\alpha$. 

\begin{lemma}[\emph{Expected number of blocks}]\label{ATC-lem:expected-FA}
Let $N_{\mathrm{FA}}$ denote the total number of false alarms produced by the
ATC algorithm up to time $T$, and let $K$ be the total number of blocks
(restarts). Then
\[
\mathbb{E}[N_{\mathrm{FA}}] \;\le\; \sum_{r=1}^{\infty} \alpha_r \;\le\; \alpha,
\]
and hence
\[
\mathbb{E}[K] \;\le\; S + 1 + \alpha.
\]
\end{lemma}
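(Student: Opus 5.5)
The plan is to define a false alarm as an alarm raised at a time $t=N_G^{r}$ such that the interval $[r,t)$ contains no true change point, i.e., $\mu_i$ is constant on $\{r,\dots,t-1\}$. Every alarm that is not a false alarm is attributed to at least one change point $\tau_j\in(r,t)$. Since after such an alarm the algorithm restarts at $t-1$, the next block starts at or after $\tau_j$. So the intervals $(r_m,r_{m+1})$ for distinct true alarms are disjoint, and each contains a distinct change point. Hence the number of non-false alarms is at most $S$, giving $K\le 1+S+N_{\mathrm{FA}}$ pathwise, where the extra $1$ accounts for the initial block $r_0=1$. Taking expectations, the second claim follows from the first.

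For the first claim, I would bound $N_{\mathrm{FA}}$ pathwise by a sum over deterministic restart indices:
\[
N_{\mathrm{FA}} \le \sum_{r=1}^{T}\mathbf{1}\{r \text{ is a realized restart and } N_G^r \text{ is a false alarm}\}
\le \sum_{r=1}^{T}\mathbf{1}\{E_r\},
\]
where $E_r$ is the deviation event of Lemma~\ref{ATC-lemma:confidence} for the fixed index $r$.

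The key inclusion is that, on a false alarm from restart $r$ at time $t$, the mean is constant on $[r,t)$. Then $D_{k,t}^r=0$ for all $r<k<t$, while $C_t^r\ge\gamma_t^r$ means some $k$ has $|\hat D_{k,t}^r-D_{k,t}^r|=\hat D_{k,t}^r\ge\gamma_t^r$. That is precisely $E_r$.

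Crucially, the event "the statistic started from $r$ alarms on a change-free interval" is defined by the fixed-$r$ statistic sequence regardless of whether $r$ is actually realized as a restart. Dropping the realization indicator is therefore an upper bound and avoids conditioning on random restart times. Taking expectations gives $\mathbb{E}[N_{\mathrm{FA}}]\le\sum_r\alpha_r=\frac{6\alpha}{\pi^2}\sum_r r^{-2}=\alpha$.

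The main subtlety is the bookkeeping around the convention that the restart is set to $t-1$. First, I need to check that the same fixed $r$ cannot generate more than one false alarm. This holds because each realized restart index produces exactly one alarm $N_G^r$, and distinct restarts are strictly increasing. Second, I need to check that the restart at $t-1$ together with the change-point attribution gives at most one true alarm per change point. The potential issue is when $\tau_j=t-1$: the new block starting at $t-1$ could, in principle, detect $\tau_j$ again. To handle this, I would attribute each true alarm to the last change point in $(r,t-1]$. If that attribution leaves the next block change-free, its alarm is counted as a false alarm, and the bound only counts it there. Either way, each $\tau_j$ is charged at most once, because the next block's start $t-1\ge\tau_j$ excludes $\tau_j$ from its interior split points.

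I expect this edge-case accounting to be the only delicate step. The probabilistic part is a direct union bound via Lemma~\ref{ATC-lemma:confidence}.
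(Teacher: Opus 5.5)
Your proposal is correct and follows essentially the same route as the paper. Your notion of a false alarm from restart $r$ coincides with the paper's event $\mathsf{FA}_r=\{N_G^r\le\tau^+(r)\}$; you bound $N_{\mathrm{FA}}$ pathwise by $\sum_r \mathbf{1}\{\mathsf{FA}_r\}$ over deterministic indices, include each $\mathsf{FA}_r$ in the deviation event of Lemma~\ref{ATC-lemma:confidence} because $D_{k,t}^r\equiv 0$ there, and conclude with $K\le S+1+N_{\mathrm{FA}}$. Your edge-case bookkeeping (charging each true alarm to a change point in $(r_m,r_{m+1}]$, with these intervals disjoint) just makes rigorous the paper's one-line claim that each change point generates at most one detection.
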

The proof is given in App.~\ref{ATC-proofs-lemma:expected-FA}.
Lemma \ref{ATC-lem:expected-FA} will be used to bound the variance term in Eq.~\eqref{app:eq-ATC-regret-decomoposition}.
\begin{lemma}[\emph{Variance regret upper bound}]\label{ATC-lemma:variance-upper-bound}
There exists an absolute constant $C_{\mathrm V}>0$ such that
\begin{equation}\label{ATC-eq:variance-upper-bound}
\mathcal{R}_T^{\mathrm{var}}
\;\le\;
C_{\mathrm V}\,\sigma^2\,(S+1+\alpha)\,\bigl(1+\log T\bigr).
\end{equation}
\end{lemma}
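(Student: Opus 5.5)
We bound the variance term block by block. Within a block $B_m$ the predictor is $\hat\mu_t^{r_m}=\frac{1}{t-r_m}\sum_{i=r_m}^{t-1}X_i$, and $\bar\mu_t^{r_m}$ is its mean-analogue $\frac{1}{t-r_m}\sum_{i=r_m}^{t-1}\mu_i$. The summand is therefore the squared average of the centered noise $\varepsilon_i:=X_i-\mu_i$ over $[r_m,t-1]$:
\[
\bigl(\hat\mu_t^{r_m}-\bar\mu_t^{r_m}\bigr)^2=\Big(\frac{1}{t-r_m}\sum_{i=r_m}^{t-1}\varepsilon_i\Big)^2 .
\]
If $r_m$ were deterministic, independence and sub-Gaussianity would give expectation at most $\sigma^2/(t-r_m)$. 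Summing over $t\in B_m$ would then yield $\sigma^2\sum_{n=1}^{L_m}1/n\le\sigma^2(1+\log L_m)\le\sigma^2(1+\log T)$ per block. Summing over the $K$ blocks and invoking Lemma~\ref{ATC-lem:expected-FA} ($\mathbb E[K]\le S+1+\alpha$) would give the claim, with the factor $2$ absorbed into $C_V$.

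The obstacle is that $r_m$ and the block end $r_{m+1}$ are random stopping-type times that depend on the very noise being averaged, so $\mathbb E[\cdot]\le\sigma^2/(t-r_m)$ cannot be applied directly. Moreover, $K$ is correlated with the per-block sums. I handle this with a pathwise domination by deterministic restart indices, in the same spirit as Step~2. Every block starts at some realized restart value $r\in\{1,\dots,T\}$. I bound
\[
\sum_{m}\sum_{t\in B_m}(\cdot)^2\le\sum_{r=1}^{T}\mathbf 1\{r\text{ is a restart}\}\sup_{L\le T-r}\sum_{n=1}^{L}\Big(\frac1n\sum_{i=r}^{r+n-1}\varepsilon_i\Big)^2 .
\]
The inner quantity is monotone in $L$, so it is at most $V_r:=\sum_{n=1}^{T-r}\big(\frac1n S_n^{(r)}\big)^2$, where $S_n^{(r)}=\sum_{i=r}^{r+n-1}\varepsilon_i$. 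The difficulty is that the event that $r$ is a restart is determined by the data before $r$, namely $X_1,\dots,X_{r}$ (an alarm at time $r+1$ uses data up to $r$). The indicator is measurable in $\mathcal F_r$, but $V_r$ involves $\varepsilon_r$. I remove this overlap by splitting off the $n=1$ term, or more simply by bounding $(\frac1n S_n)^2\le 2(\varepsilon_r/n)^2+2(\frac1n\sum_{i=r+1}^{r+n-1}\varepsilon_i)^2$. The second piece is independent of $\mathcal F_r$ and has conditional expectation at most $2\sigma^2\sum_n 1/n\le 2\sigma^2(1+\log T)$. The first piece contributes at most $2\varepsilon_r^2\pi^2/6$.

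Taking expectations then gives
\[
\mathbb E\Big[\sum_r \mathbf 1\{r\text{ restart}\}\cdot 2\sigma^2(1+\log T)\Big]\le 2\sigma^2(1+\log T)\,\mathbb E[K].
\]
For the $\varepsilon_r^2$ pieces, I need $\mathbb E[\mathbf 1\{r\text{ restart}\}\varepsilon_r^2]$, which is not bounded by a product directly. I would bound it by Cauchy--Schwarz, or more cleanly by noting that the restart indicator given $\mathcal F_{r-1}$ is a function of $\varepsilon_r$. A cruder route avoids this entirely: redefine the variance term by treating the first sample of each block separately, bounding $\mathbb E[\max_{r\le T}\varepsilon_r^2]$-type terms by $O(\sigma^2\log T)$ per restart. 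Either way, the contribution is $O(\sigma^2(1+\log T))\mathbb E[K]$. Combining with Lemma~\ref{ATC-lem:expected-FA} yields $\mathcal R_T^{\mathrm{var}}\le C_V\sigma^2(S+1+\alpha)(1+\log T)$. I expect the decoupling of the restart indicator from the noise at the restart time to be the only delicate step, with everything else reducing to harmonic sums.
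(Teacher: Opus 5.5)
Your reduction is sound and follows the paper's structure. Summing over deterministic indices $r$ with the $\mathcal F_r$-measurable indicator $\mathbf 1\{r\text{ is a restart}\}$, splitting off $\varepsilon_r$, and using independence of $\varepsilon_{r+1},\varepsilon_{r+2},\dots$ from $\mathcal F_r$ is a valid substitute for the paper's conditioning on $\mathcal F_{r_m}$ at the stopping time $r_m$, and slightly more elementary. The paper avoids your factor $2$ by noting that the cross term $\varepsilon_r\sum_{u\ge1}\varepsilon_{r+u}$ has zero conditional mean, but that only affects constants.

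The gap is exactly the step you flagged. You never show
\[
\sum_{r}\mathbb E\bigl[\mathbf 1\{r\text{ is a restart}\}\,\varepsilon_r^2\bigr]\le O\bigl(\sigma^2(1+\log T)\bigr)\,\mathbb E[K]+O(\sigma^2),
\]
and none of your three suggestions establishes it as written. Let $p_r:=\mathbb P(r\text{ is a restart})$.
\begin{itemize}
\item Cauchy--Schwarz gives only $C\sigma^2\sqrt{p_r}$, and $\sum_r\sqrt{p_r}$ is not controlled by $\log T\sum_r p_r$. If the restart probability is spread as $p_r\approx 1/L$ over $L$ consecutive indices, then $\sum_r p_r\approx 1$ while $\sum_r\sqrt{p_r}\approx\sqrt L$.
\item Noting that the indicator is a function of $\varepsilon_r$ given $\mathcal F_{r-1}$ is an observation, not a bound. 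The worst such function, $\mathbf 1\{|\varepsilon_r|>c\}$, still requires a tail computation.
\item The route via $\max_{r\le T}\varepsilon_r^2$ ``per restart'' implicitly uses $\mathbb E[K\max_{r}\varepsilon_r^2]\le\mathbb E[K]\,\mathbb E[\max_{r}\varepsilon_r^2]$. This fails in general because $K$ is driven by the same noise: large noise values are precisely what trigger false alarms.
\end{itemize}

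The missing idea is a pathwise truncation that strips the restart indicators off the heavy-tailed part. Fix any deterministic $u>0$. The restart indices below $T$ are the distinct points $r_0,\dots,r_{K-1}$, so
\[
\sum_{r=1}^{T-1}\mathbf 1\{r\text{ is a restart}\}\,\varepsilon_r^2
\;\le\;\sum_{r=1}^{T-1}\mathbf 1\{r\text{ is a restart}\}\bigl(u+(\varepsilon_r^2-u)_+\bigr)
\;\le\;uK+\sum_{t=1}^{T}(\varepsilon_t^2-u)_+ .
\]
The index $r=T$ carries an empty block, since $V_T=0$, and can be dropped. The last sum no longer involves the algorithm, and the sub-Gaussian tail gives
\[
\mathbb E[(\varepsilon_t^2-u)_+]=\int_u^\infty\mathbb P(\varepsilon_t^2\ge x)\,dx\le 4\sigma^2e^{-u/(2\sigma^2)} .
\]
Choosing $u=2\sigma^2\log(eT)$ makes that sum at most $4\sigma^2/e$ in expectation. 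The restart-sample contribution is therefore at most $2\sigma^2(1+\log T)\,\mathbb E[K]+4\sigma^2/e$.

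Combined with your harmonic-sum bound and $\mathbb E[K]\le S+1+\alpha$ from Lemma~\ref{ATC-lem:expected-FA}, this closes the argument. It is precisely Step~2 of the paper's proof of this lemma. Your $\max$ route can be rescued the same way: split $\max_r\varepsilon_r^2$ at a level $u$ of order $\sigma^2\log T$ and use $K\le T$ on the excess.
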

The proof is given in App.~\ref{ATC-proofs-lemma:var-upper-bound}.
Here, within each block between two alarms, the running average noise can be controlled by a harmonic-series argument, so its variance at age $s$ is of order $\sigma^2/s$, with the selected restart sample handled separately.
Summing over a block of length $L$ yields a harmonic series of order $1+\log L$, and summing over all blocks leads to the stated bound once we control the expected number of blocks by $S+1+\alpha$.

\medskip

Combining Lemma~\ref{ATC-lemma:bias-upper-bound} and Lemma~\ref{ATC-lemma:variance-upper-bound}, we obtain the claimed regret bound in Theorem~\ref{th:upper-bound}.

\subsection{Proofs of Lemmas}\label{app:sec-ATC-proofs-of-Lemmas}
\subsubsection{Proof of Lemma \ref{app:lem-ATC-regret-decomoposition}}\label{app:proofs-lem-ATC-regret-decomposition}
\begin{proof}
    We first write the regret as the sum over the disjoint blocks induced by the algorithm
\begin{equation}
    \mathcal{R}_T 
    \;=\; 
    \mathbb{E}\Bigg[\sum_{m=0}^{K-1} \sum_{t \in B_{m}} \bigl(\hat\mu_t^{r_m} - \mu_t\bigr)^2\Bigg].
\end{equation}
Next, for any $t \in B_m$ we decompose the squared error as
\begin{align}
    \notag \bigl(\hat\mu_t^{r_m} - \mu_t\bigr)^2
    &= \Bigl[\bigl(\hat\mu_t^{r_m} - \bar\mu_t^{r_m}\bigr) 
             + \bigl(\bar\mu_t^{r_m} - \mu_t\bigr)\Bigr]^2 \\
    &\le 2\bigl(\hat\mu_t^{r_m} - \bar\mu_t^{r_m}\bigr)^2
       + 2\bigl(\bar\mu_t^{r_m} - \mu_t\bigr)^2,
\end{align}
with 
\begin{equation} \label{ATC-eq:bias}
    \bar\mu_t^{r_m} 
    \;:=\;
    \frac{1}{t - r_{m}} \sum_{i = r_m}^{t-1} \mu_i.
\end{equation}
Summing over all blocks and all times and taking expectations gives
\begin{align}
    \notag \mathcal{R}_T
    &= \mathbb{E}\Bigg[\sum_{m=0}^{K-1} \sum_{t \in B_m} \bigl(\hat\mu_t^{r_m} - \mu_t\bigr)^2\Bigg] \\
    &\le
    2\,\mathbb{E}\Bigg[\sum_{m=0}^{K-1} \sum_{t \in B_m} \bigl(\hat\mu_t^{r_m} - \bar\mu_t^{r_m}\bigr)^2\Bigg]
    \;+\;
    2\,\mathbb{E}\Bigg[\sum_{m=0}^{K-1} \sum_{t \in B_m} \bigl(\bar\mu_t^{r_m} - \mu_t\bigr)^2\Bigg].
    \label{eq:R-global-variance-bias}
\end{align}

The first term in Eq.~\eqref{eq:R-global-variance-bias} collects all variance-like contributions around the block means $\bar \mu_t^{r_m}$.  
The second term is a pure bias term, and it is nonzero only at times $t$ for which the block mean $\bar\mu_t^{r_m}$ differs from the segment mean $\mu_t$, i.e., only after a true change point within the block.

Recall that $r(j)$ denotes the last alarm before $\tau_j$.
For any $t \in [\tau_j, \tau_{j+1})$, if an alarm occurs after $\tau_j$, the subsequent restart time satisfies $N_G^{r(j)}-1 \geq \tau_j$, hence $\bar \mu_t ^{N_G^{r(j)}-1} = \mu_j$ and the bias term vanishes; consequently, the bias on segment $j$ is nonzero only on $\{N_G^{r(j)} >t \}$.

Since before the first change point the bias term is zero, we can write the second term in Eq.~\eqref{eq:R-global-variance-bias} as:
\begin{align}
2\,\mathbb{E}\Bigg[\sum_{m=0}^{K-1} \sum_{t \in B_m} \bigl(\bar\mu_t^{r_m} - \mu_t\bigr)^2\Bigg] = 2\,\mathbb{E}\Bigg[\sum_{j=1}^{S} \sum_{t=\tau_j}^{\tau_{j+1}-1} \bigl(\bar\mu_t^{r(j)} - \mu_t\bigr)^2 \cdot \mathbf{1}\{N_G^{r(j)}>t\}\Bigg].
\end{align}




On the interval $\mathcal{I}_j$, define the \emph{pre-change effective size} $a_L^{(j)}$  and the \emph{post-change size} $b_t$ to be
\begin{equation}\label{ATC-eq:effective-samples}
    a_L^{(j)} := \tau_j - r(j), 
    \qquad 
    b_t := t - \tau_j.
\end{equation}
Define the \emph{effective pre-change mean} to be 
\begin{equation} \label{eq:effective_left_mean}
    \mu_L^{(j)} := \frac{1}{a_L^{(j)}} \sum_{i=r(j)}^{\tau_j-1} \mu_i,
\end{equation}
so that $\mu_L^{(j)}$ is the average of the true means over the left (pre-change with respect to $\tau_j$) part of the block, i.e., indices $i\in\{r(j),\dots,\tau_j-1\}$.  Finally, define the \emph{effective mean gap}
\begin{equation}\label{eq:effective_gap}
    \Delta_j^{(\text{eff})}  := \bigl|\mu_L^{(j)} - \mu_j\bigr|.
\end{equation}

Therefore, for $t \in \mathcal{I}_j$ we have
\[
\bar\mu_t^{r(j)}
=
\frac{1}{a_L^{(j)}+b_t}
\biggl(
    \sum_{i=r(j)}^{\tau_j-1} \mu_i
    +
    \sum_{i=\tau_j}^{t-1} \mu_j
\biggr)
=
\frac{a_L^{(j)}\,\mu_L^{(j)} + b_t\,\mu_j}{a_L^{(j)}+b_t},
\]
and hence
\begin{equation}
    \bar\mu_t^{r(j)} - \mu_j
    =
    \frac{a_L^{(j)}}{a_L^{(j)}+b_t}\,\bigl(\mu_L^{(j)}-\mu_j\bigr),
\end{equation}
so that
\begin{equation}
\bigl(\bar\mu_t^{r(j)} - \mu_j\bigr)^2
\;=\;
\left(\frac{a_L^{(j)}}{a_L^{(j)}+ b_t}\right)^2 \bigl(\Delta_j^{(\mathrm{eff})}\bigr)^2,
\qquad t\in\mathcal{I}_j.
\end{equation}
Summing over all segments yields

\begin{align}
2\,\mathbb{E}\Bigg[\sum_{j=1}^{S} \sum_{t=\tau_j}^{\tau_{j+1}-1} \bigl(\bar\mu_t^{r(j)} - \mu_t\bigr)^2 \cdot \mathbf{1}\{N_G^{r(j)}>t\}\Bigg] = 2\,\mathbb{E}\Bigg[\sum_{j=1}^{S} \sum_{t=\tau_j}^{\tau_{j+1}-1} \left(\frac{a_L^{(j)}}{a_L^{(j)}+ b_t}\right)^2 \bigl(\Delta_j^{(\mathrm{eff})}\bigr)^2 \cdot \mathbf{1}\{N_G^{r(j)}>t\}\Bigg].
\end{align}

\end{proof}

\subsubsection{Proof of Lemma~\ref{ATC-lemma:confidence}}\label{ATC-app:confidence-proof}
\begin{proof}
Fix $r\geq 1$ and $\alpha_r\in(0,1)$.
For $r < k < t$ define
\begin{equation}
A_{k,t}^r \;:=\;
\frac{1}{\sigma}\left(\frac{t-k}{(k-r)(t-r)} \right)^{\!1/2} \; \sum_{i=r}^{k-1} (X_i - \mu_i)
\;-\;
\frac{1}{\sigma}\left(\frac{k-r}{(t-k)(t-r)}\right)^{\!1/2} \; \sum_{i=k}^{t-1} (X_i-\mu_i).
\end{equation}
By construction, $\bb E[A_{k,t}^r]=0$. 
Define $Z_i = X_i-\mu_i$. By the sub-Gaussian assumption,  for every $i$ and every $\lambda \in \bb{R}$, 
\[
\bb E \left [e^{\lambda Z_i} \right] \leq \exp\left(\frac{\sigma^2 \lambda^2}{2}\right).
\]
Write $A_{k,t}^r = \sum_{i=r}^{t-1} a_i Z_i$ with 
\begin{align*}
a_i := \frac{1}{\sigma}\begin{cases}
\left(\frac{t-k}{(k-r)(t-r)}\right)^{1/2}, & r \le i \le k-1,\\[4pt]
-\left(\frac{k-r}{(t-k)(t-r)}\right)^{1/2}, & k \le i \le t-1.
    \end{cases}
\end{align*}
Since $\{Z_i \}$ are independent we obtain 
\[
\bb E \left [e^{\lambda A_{k,t}^r} \right] = \prod_{i=r}^{t-1} \bb E\left[e^{\lambda a_i Z_i} \right] \leq \prod_{i=r}^{t-1} \exp \left(\frac{\sigma^2 (\lambda a_i)^2}{2}\right) = \exp\left(\frac{\sigma^2 \lambda^2}{2} \sum_{i=r}^{t-1} a_i^2\right).
\]
A direct computation yields
\[
\sum_{i=r}^{t-1} a_i^2
= \frac{1}{\sigma^2}(k-r)\frac{t-k}{(k-r)(t-r)} + \frac{1}{\sigma^2}(t-k)\frac{k-r}{(t-k)(t-r)}
= \frac{1}{\sigma^2}\Bigl(\frac{t-k}{t-r} + \frac{k-r}{t-r}\Bigr)
= \frac{1}{\sigma^2}.
\]
Therefore,
\(
\bb E \left [e^{\lambda A_{k,t}^r} \right] \leq  \exp(\frac{\lambda^2}{2})
\), which shows that $A_{k,t}^r$ are $1$-sub-Gaussian. Applying the Chernoff bound, for any $\xi >0$, 
\[
\bb P(A_{k,t}^r \geq \xi) \le \inf_{\lambda>0} \exp\left(-\lambda \xi + \frac{\lambda^2}{2}\right) = \exp \left(-\frac{\xi^2}{2}\right),
\]
and similarly for $\bb P(A_{k,t}^r \leq -\xi)$. Consequently for any $\xi>0$,
\begin{align}
\bb P\bigl(|A_{k,t}^r| \geq \xi\bigr) \;\leq\; 2 \exp\!\left(-\frac{\xi^2}{2}\right).
\end{align}

Next, for a fixed $r\geq 1$ and any nonnegative sequence $(\gamma_t^r)_{t>r}$ we have, by the union bound \footnote{A more refined peeling argument could sharpen the constants, but it would not change the logarithmic scaling of the threshold.},
\begin{align}
        \bb P&\Big(\exists \, k,t \in \bb N,\ r < k<t :\  |A_{k,t}^r| \geq \gamma_t^r \Big) \nonumber\\
        &\leq\sum_{t = r+2 }^\infty \sum_{k=r+1}^{t-1} \bb P \Big(|A_{k,t}^r| \geq \gamma_t^r \Big)  \
        \le 2\sum_{t=r+2}^\infty (t-r)\exp\!\left(-\frac{(\gamma_t^r)^2}{2}\right).
 \label{eq:concentration-bound}
\end{align}
Let $\tilde t := t-r \in \{1,2,\dots\}$, and write $\gamma_{\tilde t}$ for $\gamma_t^r$. To ensure that the right-hand side of Eq.~\eqref{eq:concentration-bound} is at most $\alpha_r$, it is sufficient to choose $(\gamma_{\tilde t})_{\tilde t\ge1}$ such that
\begin{equation*}
    \tilde t\cdot \exp\!\left(-\frac{\gamma_{\tilde t}^2}{2}\right)
    \;\le\;
    \tilde t ^{-2}  \frac{3 }{\pi^2} \alpha_r,
\end{equation*}
i.e.,
\begin{equation*}
    \gamma_{\tilde t}
    =  \sqrt{6 \log \tilde t + 2\log (1/\alpha_r) + 2 \log(\pi^2/3)}.
\end{equation*}
In terms of $t$, this yields
\begin{equation}
    \gamma_t^r 
    =   \sqrt{6 \log (t-r) + 2\log (1/\alpha_r) + 2 \log(\pi^2/3)}.
\end{equation}
With this choice,
\begin{align}
\nonumber \bb P\Big(\exists \, k,t \in \bb N,\ r < k<t :\  |A_{k,t}^r| \geq \gamma_t^r \Big)
&\le 2\sum_{t=r+2}^\infty (t-r)\cdot \frac{3}{\pi^2}\,\alpha_r\,(t-r)^{-3} \\
&= \frac{6}{\pi^2}\alpha_r \sum_{\tilde t=2}^\infty \tilde t^{-2}
\leq \frac{6}{\pi^2}\alpha_r \cdot \frac{\pi^2}{6}
= \alpha_r. \label{eq:concentration-bound-A}
\end{align}

Finally, relate $A_{k,t}^r$ to $\hat D_{k,t}^r$ and $D_{k,t}^r$.
Write
\[
\hat D_{k,t}^r
= \Bigl|\underbrace{\frac{1}{\sigma}\left(\frac{t-k}{(k-r)(t-r)}\right)^{\!1/2} \; \!\sum_{i=r}^{k-1} X_i
-
\frac{1}{\sigma}\left(\frac{k-r}{(t-k)(t-r)}\right)^{\!1/2}\; \!\sum_{i=k}^{t-1} X_i}_{=:B_{k,t}^r}\Bigr|,
\]
and
\[
D_{k,t}^r
= \Bigl|\underbrace{\frac{1}{\sigma}\left(\frac{t-k}{(k-r)(t-r)}\right)^{\!1/2}\; \; \!\sum_{i=r}^{k-1} \mu_i
-
\frac{1}{\sigma}\left(\frac{k-r}{(t-k)(t-r)}\right)^{\!1/2}\!\sum_{i=k}^{t-1} \mu_i}_{=:b_{k,t}^r}\Bigr|.
\]
Then
\[
B_{k,t}^r - b_{k,t}^r
= \frac{1}{\sigma}\left(\frac{t-k}{(k-r)(t-r)}\right)^{\!1/2}\; \!\sum_{i=r}^{k-1} (X_i-\mu_i)
-
\frac{1}{\sigma}\left(\frac{k-r}{(t-k)(t-r)}\right)^{\!1/2}\;  \!\sum_{i=k}^{t-1} (X_i-\mu_i)
= A_{k,t}^r.
\]
Using the inverse triangle inequality,
\[
\bigl|\hat D_{k,t}^r - D_{k,t}^r\bigr|
= \bigl||B_{k,t}^r| - |b_{k,t}^r|\bigr|
\le |B_{k,t}^r - b_{k,t}^r|
= |A_{k,t}^r|.
\]
Thus,
\begin{equation}
\bb P \left( \exists\, k,t \in [T],\ r < k<t :
        \bigl|\hat D_{k,t}^r - D_{k,t}^r \bigr| \,\geq\, \gamma_t^r \right)
\;\le\;
\bb P\left(\exists\, k,t \in \bb N,\ r < k<t :\  |A_{k,t}^r| \geq \gamma_t^r \right)
\;\le\; \alpha_r,
\end{equation}
where we used Eq.~\eqref{eq:concentration-bound-A} in the last step.
\end{proof}

\subsubsection{Proof of Lemma \ref{ATC-lemma:bias-upper-bound}}\label{ATC-proofs-lemma:bias-upper-bound}

\begin{proof}
Recall the bias term
\begin{align}\label{eq:bias-tower}
\mathcal{R}_T^{\text{bias}}
= 2\,\mathbb{E}\Bigg[\sum_{j=1}^{S} \sum_{t=\tau_j}^{\tau_{j+1}-1} \left(\frac{a_L^{(j)}}{a_L^{(j)}+ b_t}\right)^2 \bigl(\Delta_j^{(\mathrm{eff})}\bigr)^2 \cdot \mathbf{1}\{N_G^{r(j)}>t\}\Bigg]
\end{align}
The proof consists of three steps.

\paragraph{Step 1 - Decomposition into deterministic and stochastic terms.} 
First define, for each $t \in \mathcal{I}_j$,
\[
E_t(r(j))
\;:=\;
\Bigl\{\forall k \in \{r(j)+1,\dots,t-1\}:\ \hat D_{k,t}^{r(j)} < \gamma_t^{r(j)}\Bigr\}.
\]

By definition of the stopping time
\[
N_G^{r(j)} \;=\; \inf\bigl\{t>r(j):\ C_t^{r(j)} \ge \gamma_t^{r(j)}\bigr\},
\qquad C_t^{r(j)} := \max_{r(j)< k<t} \hat D_{k,t}^{r(j)},
\]
we have
\begin{equation*}
    \{N_G^{r(j)} > t\} \;\subseteq\; E_t(r(j)),
    \qquad t>r(j).
\end{equation*}
That is, if the alarm was not raised by time $t$, then none of the split-point statistics $\hat D_{k,t}^{r(j)}$  exceeds the threshold at time $t$.
For $t \in \mathcal{I}_j$, define
\begin{align}
    \mathcal{E}_t(r(j))
    &\;:=\;
    \Bigl\{\exists k\in\{r(j)+1,\dots,t-1\}:\ \hat D_{k,t}^{r(j)} < D_{k,t}^{r(j)} - \gamma_t^{r(j)}\Bigr\}, \\
    H_t(r(j))
    &\;:=\;
    \Bigl\{\forall k\in\{r(j)+1,\dots,t-1\}:\ D_{k,t}^{r(j)} < 2\gamma_t^{r(j)}\Bigr\}.
\end{align}
For every $t \in \mathcal{I}_j$,
\begin{equation}\label{eq:Et-subset-dev-or-pop}
    E_t(r(j)) \;\subseteq\; \mathcal{E}_t(r(j)) \,\cup\, H_t(r(j)).
\end{equation}
The idea here is that if an alarm is not raised at time $t$, this can only be due to one of two mechanisms. Either stochastic fluctuations cause the empirical statistic to underestimate the population statistic by at least $\gamma_t^{r(j)}$ (a rare “bad” event), or the population statistic itself is upper bounded by $2\gamma_t^{r(j)}$, which we will later show implies a small contribution to the regret.
Mathematically, to establish this inclusion, suppose $E_t(r(j))$ holds but $H_t(r(j))$ fails.
Then there is some $\tilde k\in(r(j),t)$ with $D_{\tilde k,t}^{r(j)} \ge 2\gamma_t^{r(j)}$,
while $E_t(r(j))$ implies $\hat D_{\tilde k,t}^{r(j)} < \gamma_t^{r(j)}$, so
\[
D_{\tilde k,t}^{r(j)} - \hat D_{\tilde k,t}^{r(j)}
\;>\;
D_{\tilde k,t}^{r(j)} - \gamma_t^{r(j)}
\;\ge\;
2\gamma_t^{r(j)} - \gamma_t^{r(j)}
= \gamma_t^{r(j)},
\]
i.e., $\hat D_{\tilde k,t}^{r(j)} < D_{\tilde k,t}^{r(j)} - \gamma_t^{r(j)}$, and therefore $\mathcal{E}_t({r(j)})$ holds. 

Substituting Eq.~\eqref{eq:Et-subset-dev-or-pop} into the corresponding term in Eq.~\eqref{eq:bias-tower} and using $\left(\frac{a_L^{(j)}}{a_L^{(j)}+ b_t}\right)^2 \leq 1$,
we obtain
\begin{align}\label{eq:R-bias-two-terms}
    \bigl(\Delta_j^{(\mathrm{eff})}\bigr)^2 
&\sum_{t =\tau_j}^{\tau_{j+1}-1}
\left(\frac{a_L^{(j)}}{a_L^{(j)}+ b_t}\right)^2
\mathbf{1} \{N_G^{r(j)} >t \}\nonumber \\
    &\le
        \bigl(\Delta_j^{(\mathrm{eff})}\bigr)^2 
        \sum_{t=\tau_j}^{\tau_{j+1}-1}
        \left(\frac{a_L^{(j)}}{a_L^{(j)}+ b_t}\right)^2
        \mathbf{1}\{H_t({r(j)})\}  
    \;+\;
        \bigl(\Delta_j^{(\mathrm{eff})}\bigr)^2 
        \sum_{t=\tau_j}^{\tau_{j+1}-1} 
    \mathbf{1} \{\mathcal{E}_t({r(j)})\,\}.
\end{align}
The first term is deterministic for a fixed $r(j)$  and will be bounded by analyzing the population statistic $D_{k,t}^{r(j)}$. To cope with the randomness of $r(j)$ we will upper bound the term inside the expectation path-wise by a quantity that does not depend on $r(j)$, which will allow us to remove the expectation operator.
The second term is stochastic, capturing deviations of $\hat D_{k,t}^{r(j)}$ from $D_{k,t}^{r(j)}$,  and will be bounded using Lemma~\ref{ATC-lemma:confidence}.
We therefore obtain:
\begin{align}\label{eq:bias-decomposition}
    \mathcal{R}_T^{\text{bias}}
\leq
2\,\mathbb{E}\Bigg[\sum_{j=1}^S          
\bigl(\Delta_j^{(\mathrm{eff})}\bigr)^2 
        \sum_{t=\tau_j}^{\tau_{j+1}-1}
        \left(\frac{a_L^{(j)}}{a_L^{(j)}+ b_t}\right)^2
        \mathbf{1}\{H_t({r(j)})\} \nonumber\\
    \quad+\;
    \sum_{j=1}^S 
    \bigl(\Delta_j^{(\mathrm{eff})}\bigr)^2 
        \sum_{t=\tau_j}^{\tau_{j+1}-1} \mathbf{1} \{\mathcal{E}_t({r(j)})\,\}\Bigg]. 
\end{align}

\paragraph{Step 2 - Bounding the deterministic term.}
This step highlights the main novelty of our analysis in explicitly accounting for prior missed detections. We control both the bias regret and the detection power in the presence of such missed detections. Their impact is captured by $\Delta_j^{(\text{eff})}$ and $a_L^{(j)}$, which may aggregate weighted contributions from earlier, undetected segments. The resulting bounds remain tractable despite this confounding.

In light of this, we bound the first term inside the expectation in Eq.~\eqref{eq:bias-decomposition} path-wise for each $j$, i.e., we bound
\begin{align}\label{ATC-eq:bias-det}
    \bigl(\Delta_j^{(\mathrm{eff})}\bigr)^2 
        \sum_{t=\tau_j}^{\tau_{j+1}-1}
        \left(\frac{a_L^{(j)}}{a_L^{(j)}+ b_t}\right)^2
        \mathbf{1}\{H_t({r(j)})\}.
\end{align}

Define the set of times 
\begin{align*}
    Y_j := \{t \in \{\tau_j, \tau_j+1, \ldots, \tau_{j+1}-1 \}: H_t({r(j)}) \text{  holds} \}.
\end{align*}
If $Y_j=\emptyset$ then Eq.~\eqref{ATC-eq:bias-det} is zero; hence assume $Y_j\neq\emptyset$.
Let
\begin{align*}
    t_{\text{max}} := \max Y_j \quad  (\text{largest time in the segment where $H_t({r(j)})$ holds}),
\end{align*}
and let $B_j$ be the corresponding $b_t$ value (i.e., the number of  post-change samples)
\begin{equation*}
    B_j := b_{t_{\text{max}}} = t_{\text{max}} -\tau_j.
\end{equation*}
Then $Y_j \subseteq \{\tau_j,\tau_{j}+1, \ldots ,t_{\text{max}}\}$, so the set of $b_t$ for which $\mathbf{1} \{H_t({r(j)}) \}=1$ is a subset of $\{0, \ldots, B_j\}$.
Using the integral test \footnote{Specifically, we use $\sum_{b=1}^m \frac{a^2}{(a+b)^2} \leq \int_{0}^m \frac{a^2}{(a+x)^2}dx = \frac{am}{a+m}$.} we get
\begin{align}
    \sum_{t \in Y_j} \left(\frac{a_L^{(j)}}{a_L^{(j)}+b_t}\right)^2 \leq  \sum_{b=0}^{B_j} \left(\frac{a_L^{(j)}}{a_L^{(j)}+b}\right)^2 \leq 1 + \frac{a_L^{(j)} \ B_j}{a_L^{(j)}+B_j},
\end{align}
therefore Eq.~\eqref{ATC-eq:bias-det} is upper bounded by
\begin{equation} \label{eq:bias-deterministic}
    \bigl(\Delta_j^{(\mathrm{eff})}\bigr)^2 \,
        \sum_{t=\tau_j}^{\tau_{j+1}-1}
        \left(\frac{a_L^{(j)}}{a_L^{(j)}+ b_t}\right)^2
        \mathbf{1}\{H_t({r(j)})\} \leq \bigl(\Delta_j^{(\mathrm{eff})}\bigr)^2 + \bigl(\Delta_j^{(\mathrm{eff})}\bigr)^2 \frac{a_L^{(j)} \ B_j}{a_L^{(j)} \ + B_j}.
\end{equation}
If \(B_j=0\), then \(t_{\max}=\tau_j\) and
\(Y_j\subseteq\{\tau_j\}\). In this case Eq.~\eqref{eq:bias-deterministic} gives directly
\[
\left(\Delta_j^{(\mathrm{eff})}\right)^2
\sum_{t=\tau_j}^{\tau_{j+1}-1}
\left(
\frac{a_L^{(j)}}{a_L^{(j)}+b_t}
\right)^2
\mathbf 1\{H_t(r(j))\}
\le
\left(\Delta_j^{(\mathrm{eff})}\right)^2
\le M^2.
\]
This is already bounded by the desired bound.
Hence, in the remainder of the argument, we may assume \(B_j\ge 1\).
Then \(t_{\max}>\tau_j\), and the split \(k^\star=\tau_j\) is
admissible at time \(t_{\max}\). The final step is to relate the above bias regret term to the ATC detection statistic, which allows us to control the bias through the detection threshold.
Let $k^{\star} = \tau_j$ be the split point \footnote{The contribution from the edge point \(b_t=0\), corresponding to
\(t=\tau_j\), is accounted for by the first term in Eq.~\eqref{eq:bias-deterministic}. The
CUSUM comparison below is used only when \(B_j\ge 1\), so that
\(k^\star=\tau_j<t_{\max}\).} aligned with the true change point $\tau_j$, comparing samples drawn on opposite sides of the change and thus reflecting the intrinsic signal induced by the shift \footnote{If $\tau_{j-1}$ was detected, the split $k^\star = \tau_j$ maximizes the population contrast between the two segments. When earlier changes are missed and the effective pre-change mean becomes a mixture, other split points may yield larger contrasts; nevertheless, the split at $k^{\star} = \tau_j$ remains a natural and interpretable reference for assessing the detectability of the change and bounding its contribution to the regret.}.
The statistic at the split $k^\star = \tau_j$  and time $t_{\text{max}}$ is
\begin{equation}    
(D_{k^\star, t_{\text{max}}}^{r(j)})^2 =\frac{1}{\sigma^2}\frac{a_L^{(j)} \ B_j}{a_L^{(j)}+B_j} \bigl(\Delta_j^{(\mathrm{eff})}\bigr)^2,
\end{equation}
therefore, Eq.~\eqref{eq:bias-deterministic} is further upper bounded by:
\begin{equation}
        \bigl(\Delta_j^{(\mathrm{eff})}\bigr)^2 \,
        \sum_{t=\tau_j}^{\tau_{j+1}-1}
        \left(\frac{a_L^{(j)}}{a_L^{(j)}+ b_t}\right)^2
        \mathbf{1}\{H_t({r(j)})\} \leq (\Delta_j^{(\mathrm{eff})}\bigr)^2 +\sigma^2 (D_{k^\star,t_{\text{max}}}^{r(j)})^2.
\end{equation}
Note that  $Y_j \subseteq \{t: \ D_{k^\star,t}^{r(j)} < 2 \gamma_t^{r(j)} \} $, and since $t_{\text{max}} \in Y_j$, we have $D_{k^\star,t_{\max}}^{r(j)} < 2 \gamma_{t_{\max}}^{r(j)}$ \footnote{These steps rely on arguments similar to those in Proposition~\ref{lem:confounding}. For clarity, we present the derivation explicitly here rather than invoking Proposition~\ref{lem:confounding} directly.}, hence
\begin{align}\label{ATC-eq:bias-det-2}
\bigl(\Delta_j^{(\mathrm{eff})}\bigr)^2 \,
        \sum_{t=\tau_j}^{\tau_{j+1}-1}
        \left(\frac{a_L^{(j)}}{a_L^{(j)}+ b_t}\right)^2
        \mathbf{1}\{H_t({r(j)})\} \leq (\Delta_j^{(\mathrm{eff})}\bigr)^2 + 4 \sigma^2(\gamma_{t_\text{max}}^{r(j)})^2
\end{align}
Finally, we bound the sum involving $\gamma_t^{r(j)}$.
Recall that
\[
\bigl(\gamma_t^{r(j)}\bigr)^2
= \Bigl( 6\log(t-{r(j)}) + 2\log(1/\alpha_{r(j)}) + 2\log(\pi^2/3)\Bigr).
\]
For $t\in\{\tau_j,\dots,\tau_{j+1}-1\}$ we have $t-{r(j)} \le \tau_{j+1}-{r(j)}$, so for some absolute constant $C_\gamma>0$,
\begin{equation}\label{eq:RGC-gamma-upper}
    \bigl(\gamma_t^{r(j)}\bigr)^2
    \;\le\;
    C_\gamma\, \log\Bigl(\frac{\tau_{j+1}-{r(j)}+1}{\alpha_{r(j)}}\Bigr),
    \qquad t = \tau_j,\dots,\tau_{j+1}-1.
\end{equation}
Combining this with Eq.~\eqref{ATC-eq:bias-det-2}, we conclude that there exists an absolute constant $\tilde C_B>0$ such that
\begin{align} \label{eq:deterministic-bias}
    \nonumber \bigl(\Delta_j^{(\mathrm{eff})}\bigr)^2 \,
        \sum_{t=\tau_j}^{\tau_{j+1}-1}
        \left(\frac{a_L^{(j)}}{a_L^{(j)}+ b_t}\right)^2
        \mathbf{1}\{H_t({r(j)})\}
    \;&\le\;
    (\Delta_j^{(\mathrm{eff})}\bigr)^2 + \tilde C_B\,\sigma^2 \log\Bigl(\frac{\tau_{j+1}-{r(j)}+1}{\alpha_{r(j)}}\Bigr) \\
    \;&\le\;
    M^2 + \tilde C_B\,\sigma^2 \log\Bigl(\frac{\tau_{j+1}}{\alpha_{r(j)}}\Bigr),
\end{align}
where in the last inequality we used $\tau_{j+1}-{r(j)} \le \tau_{j+1}$ and ${r(j)}\ge1$.
This bounds the deterministic CUSUM component of the bias regret for change $j$ from restart ${r(j)}$. We therefore have:
\[
\sum_{j=1}^S          
\bigl(\Delta_j^{(\mathrm{eff})}\bigr)^2 
        \sum_{t=\tau_j}^{\tau_{j+1}-1}
        \left(\frac{a_L^{(j)}}{a_L^{(j)}+ b_t}\right)^2
        \mathbf{1}\{H_t({r(j)})\}
\leq
\sum_{j=1}^S \left(M^2 + \tilde C_B\,\sigma^2 \log\Bigl(\frac{\tau_{j+1}}{\alpha_{r(j)}}\Bigr)\right).
\]
Since $1\le r(j) < \tau_j < \tau_{j+1}\le T+1$, and recall that
$\alpha_r = \tfrac{6\alpha}{\pi^2 r^2}$, so $\alpha_{r(j)} \ge c\,\alpha/T^2$
for all $j$ (for some absolute $c>0$). Therefore, there exists an absolute constant
$C_\alpha>0$ such that
\[
\log\Bigl(\frac{\tau_{j+1}}{\alpha_{r(j)}}\Bigr)
\;\le\;
C_\alpha \log\Bigl(\frac{T}{\alpha}\Bigr),
\qquad j=1,\dots,S.
\]
Thus, pathwise, and denoting $C_B := \tilde C_B \cdot C_\alpha$
\begin{align}\label{eq:deterministic-bias-upper}
\sum_{j=1}^S \left(M^2 + \tilde C_B\,\sigma^2 \log\Bigl(\frac{\tau_{j+1}}{\alpha_{r(j)}}\Bigr)\right)
\;\le\;
M^2 S + C_B\,\sigma^2\,S \log\Bigl(\frac{T}{\alpha}\Bigr).
\end{align}
Taking expectations preserves this upper bound and concludes the contribution of the deterministic bias term to the overall dynamic regret.

\paragraph{Step 3 - Bounding the stochastic term.}
We now bound the second term in Eq.~\eqref{eq:bias-decomposition}.
Using $\Delta_j^{(\mathrm{eff})}\le \Delta_{\mathrm{diam}}\le M$,
\begin{align}\label{eq:bias-stochastic}
\mathbb E\Bigg[ \sum_{j=1}^S 
\bigl(\Delta_j^{(\mathrm{eff})}\bigr)^2 
\sum_{t=\tau_j}^{\tau_{j+1}-1} \mathbf 1\{\mathcal E_t(r(j))\}\Bigg]
&\le
M^2 \sum_{j=1}^S \sum_{t=\tau_j}^{\tau_{j+1}-1} \mathbb P(\mathcal E_t(r(j))).
\end{align}
Define
\[
\mathcal E_t^{\mathrm{abs}}(r)
:=
\Bigl\{\exists k\in\{r+1,\dots,t-1\}:\ |\hat D_{k,t}^{r}-D_{k,t}^{r}|\ge \gamma_t^{r}\Bigr\}.
\]
Since $\mathcal E_t(r)\subseteq \mathcal E_t^{\mathrm{abs}}(r)$, we have
\[
\sum_{j=1}^S \sum_{t=\tau_j}^{\tau_{j+1}-1} \mathbb P(\mathcal E_t(r(j)))
\le
\sum_{j=1}^S \sum_{t=\tau_j}^{\tau_{j+1}-1} \mathbb P(\mathcal E_t^{\mathrm{abs}}(r(j))).
\]
We do not apply Lemma~\ref{ATC-lemma:confidence} conditionally on the random restart
index \(r(j)\). Instead, we use the following pathwise domination by
events indexed by deterministic restart times.
For each fixed $t$, the (random) index $r(j)$ appearing above satisfies $1\le r(j) \le t-1$,
hence pathwise
\[
\mathbf 1\{\mathcal E_t^{\mathrm{abs}}(r(j))\}
\le
\sum_{r=1}^{t-1}\mathbf 1\{\mathcal E_t^{\mathrm{abs}}(r)\},
\]
and therefore
\[
\mathbb P(\mathcal E_t^{\mathrm{abs}}(r(j)))
\le
\sum_{r=1}^{t-1}\mathbb P(\mathcal E_t^{\mathrm{abs}}(r)).
\]
Summing over $t$ and using that $\bigcup_{j=1}^S [\tau_j,\tau_{j+1}) \subseteq \{2,\dots,T\}$, we obtain
\[
\sum_{j=1}^S \sum_{t=\tau_j}^{\tau_{j+1}-1} \mathbb P(\mathcal E_t^{\mathrm{abs}}(r(j)))
\le
\sum_{t=2}^T \sum_{r=1}^{t-1}\mathbb P(\mathcal E_t^{\mathrm{abs}}(r)).
\]

Finally, for fixed $(r,t)$, the same calculation as in the proof of
Lemma~\ref{ATC-lemma:confidence} yields
\[
\mathbb P(\mathcal E_t^{\mathrm{abs}}(r))
\le
2(t-r)\exp\!\left(-\frac{(\gamma_t^{r})^2}{2}\right)
=
\frac{6}{\pi^2}\alpha_r (t-r)^{-2}.
\]
Therefore,
\begin{align*}
\sum_{t=2}^T \sum_{r=1}^{t-1}\mathbb P(\mathcal E_t^{\mathrm{abs}}(r))
&\le
\sum_{r=1}^{T-1}\sum_{t=r+2}^{\infty}\frac{6}{\pi^2}\alpha_r (t-r)^{-2}
\le
\sum_{r=1}^{\infty}\alpha_r
=
\alpha.
\end{align*}
Combining with Eq.~\eqref{eq:bias-stochastic}, we obtain
\[
\mathbb E\Bigg[ \sum_{j=1}^S 
\bigl(\Delta_j^{(\mathrm{eff})}\bigr)^2 
\sum_{t=\tau_j}^{\tau_{j+1}-1} \mathbf 1\{\mathcal E_t(r(j))\}\Bigg]
\le
M^2\alpha,
\]
and thus its contribution to $\mathcal R_T^{\mathrm{bias}}$ is at most $2M^2\alpha$.

Together with Eq.~\eqref{eq:deterministic-bias-upper}, this completes the proof.

\end{proof}

\subsubsection{Proof of Lemma \ref{ATC-lem:expected-FA}}\label{ATC-proofs-lemma:expected-FA}
\begin{proof}
In the proof below, we use a pathwise argument to replace the random restart times by a sum over all deterministic restart indices $r\in\{1,\dots,T\}$.
For fixed $r\in\{1,\dots,T\}$, let $\tau^+(r)$ be the first
change after $r$:
\[
\tau^+(r) \;:=\; \min\{t>r:\ t=\tau_j \text{ for some }j\} \wedge (T+1).
\]
Define the false-alarm event
\[
\mathsf{FA}_r
\;:=\;
\bigl\{ N_G^r \le \tau^+(r) \bigr\}.
\]
On the interval $(r,\tau^+(r))$ the mean is constant and hence the population GLR statistic is identically zero:
$D_{k,t}^r \equiv 0$ for all $r < k<t\le\tau^+(r)$.
If $\mathsf{FA}_r$ occurs, then by definition of $N_G^r$ there exists some $k$ with
$r < k<t\le\tau^+(r)$ such that $\hat D_{k,t}^r \ge \gamma_t^r$. Therefore
\[
\mathsf{FA}_r
\;\subseteq\;
\Bigl\{\exists\,r < k<t\le\tau^+(r):\ \hat D_{k,t}^r \ge \gamma_t^r\Bigr\}
\;\subseteq\;
\Bigl\{\exists\,r < k<t\le T:\ |\hat D_{k,t}^r - D_{k,t}^r| \ge \gamma_t^r\Bigr\}.
\]
By Lemma~\ref{ATC-lemma:confidence} we thus have, for each $r$,
\[
\mathbb{P}(\mathsf{FA}_r) \;\le\; \alpha_r.
\]

Let $N_{\mathrm{FA}}$ be the number of false alarms produced by the
algorithm. Since the restart times $r_m$ are strictly increasing, the algorithm
can use each time $r\in\{1,\dots,T\}$ as a restart at most once, and any false
alarm at a restart $r$ implies the event $\mathsf{FA}_r$ occurs. Hence, for
every realization,
\[
N_{\mathrm{FA}}
\;\le\;
\sum_{r=1}^T \mathbf{1}\{\mathsf{FA}_r\}.
\]
Using again the fact that $\alpha_r=\tfrac{6\alpha}{\pi^2 r^2}$ is summable over $r$, we have
\[
\mathbb{E}[N_{\mathrm{FA}}]
\;\le\;
\sum_{r=1}^T \mathbb{P}(\mathsf{FA}_r)
\;\le\;
\sum_{r=1}^T \alpha_r
\;\le\;
\sum_{r=1}^{\infty} \alpha_r
\;\le\;
\alpha.
\]

Next, decompose the restarts into detections and false alarms.
Each change point can generate at most one detection before the next change (subsequent alarms before the next change point are false alarms), so pathwise
\[
K - 1 = \#\{\text{detections}\} + N_{\mathrm{FA}} \le S + N_{\mathrm{FA}},
\]
and thus
\[
K \le S + 1 + N_{\mathrm{FA}}.
\]
Taking expectations and using the bound on $\mathbb{E}[N_{\mathrm{FA}}]$ gives
\[
\mathbb{E}[K] \;\le\; S + 1 + \mathbb{E}[N_{\mathrm{FA}}] \;\le\; S + 1 + \alpha.
\]
\end{proof}

\subsubsection{Proof of Lemma~\ref{ATC-lemma:variance-upper-bound}}
\label{ATC-proofs-lemma:var-upper-bound}

\begin{proof}
Let $Z_t := X_t-\mu_t$. By assumption, $(Z_t)_{t\ge1}$ are independent, mean-zero,
and $\sigma^2$-sub-Gaussian. In particular,
\begin{equation}\label{eq:var-subg-moments}
\mathbb{E}[Z_t^2]\le \sigma^2
\qquad\text{and}\qquad
\mathbb{P}(|Z_t|\ge x)\le 2\exp\!\left(-\frac{x^2}{2\sigma^2}\right)
\ \ \text{for all }x\ge 0.
\end{equation}

Recall the variance term
\[
\mathcal{R}_T^{\mathrm{var}}
=
2\,\mathbb{E}\Bigg[\sum_{m=0}^{K-1}\ \sum_{t\in B_m}
\bigl(\hat\mu_t^{r_m}-\bar\mu_t^{r_m}\bigr)^2\Bigg].
\]
For a restart time $r$ and $t>r$, we have
\[
\hat\mu_t^{r}-\bar\mu_t^{r}
=
\frac{1}{t-r}\sum_{i=r}^{t-1}(X_i-\mu_i)
=
\frac{1}{t-r}\sum_{i=r}^{t-1}Z_i.
\]

\paragraph{Step 1 - A blockwise bound.}
Fix a block index $m$ and write $r\equiv r_m$. Let $L_m:= |B_m|=r_{m+1}-r_m$ be the block length.
For $s\ge1$, define the partial sum
\[
M_s^r \;:=\; \sum_{i=r}^{r+s-1} Z_i,
\qquad\text{so that}\qquad
\hat\mu_{r+s}^{r}-\bar\mu_{r+s}^{r}=\frac{1}{s}M_s^r.
\]
Condition on $\mathcal{F}_r$ (note that the alarms are stopping times). Then $Z_r$ is $\mathcal{F}_r$-measurable and
$(Z_{r+1},Z_{r+2},\dots)$ are independent of $\mathcal{F}_r$ with mean zero \cite{williams1991probability}. Hence,
for any $s\ge1$,
\begin{align*}
\mathbb{E}\!\left[\left(\hat\mu_{r+s}^{r}-\bar\mu_{r+s}^{r}\right)^2\Bigm|\mathcal{F}_r\right]
&=
\frac{1}{s^2}\,\mathbb{E}\!\left[(M_s^r)^2\Bigm|\mathcal{F}_r\right] \\
&=
\frac{1}{s^2}\left(
Z_r^2
+
\mathbb{E}\!\left[\left(\sum_{u=1}^{s-1}Z_{r+u}\right)^2\right]
\right) \\
&=
\frac{1}{s^2}\left(
Z_r^2+\sum_{u=1}^{s-1}\mathbb{E}[Z_{r+u}^2]
\right)
\;\le\;
\frac{Z_r^2}{s^2}+\frac{\sigma^2(s-1)}{s^2}
\;\le\;
\frac{Z_r^2}{s^2}+\frac{\sigma^2}{s}.
\end{align*}
Using nonnegativity and $L_m\le T$, we bound
\begin{align*}
\mathbb{E}\!\left[\sum_{t\in B_m}\bigl(\hat\mu_t^{r}-\bar\mu_t^{r}\bigr)^2 \Bigm| \mathcal{F}_r\right]
&=
\mathbb{E}\!\left[\sum_{s=1}^{L_m}\left(\hat\mu_{r+s}^{r}-\bar\mu_{r+s}^{r}\right)^2 \Bigm| \mathcal{F}_r\right] \\
&=
\sum_{s=1}^{T}\mathbb{E}\!\left[\left(\hat\mu_{r+s}^{r}-\bar\mu_{r+s}^{r}\right)^2 \mathbf{1}\{L_m\ge s\}\Bigm|\mathcal{F}_r\right] \\
&\le
\sum_{s=1}^{T}\mathbb{E}\!\left[\left(\hat\mu_{r+s}^{r}-\bar\mu_{r+s}^{r}\right)^2\Bigm|\mathcal{F}_r\right] \\
&\le
Z_r^2\sum_{s=1}^{\infty}\frac{1}{s^2}
+
\sigma^2\sum_{s=1}^{T}\frac{1}{s}
\;\le\;
\frac{\pi^2}{6}\,Z_r^2+\sigma^2(1+\log T),
\end{align*}
where we used $\sum_{s\ge1}s^{-2}=\pi^2/6$ and $H_T=\sum_{s=1}^T \frac1s\le 1+\log T$.

Applying the tower property and summing over blocks yields
\begin{equation}\label{eq:var-main-reduction}
\frac{1}{2}\mathcal{R}_T^{\mathrm{var}}
\le
\sigma^2(1+\log T)\,\mathbb{E}[K]
+
\frac{\pi^2}{6}\,\mathbb{E}\!\left[\sum_{m=0}^{K-1} Z_{r_m}^2\right].
\end{equation}

\paragraph{Step 2 - Controlling the $Z_{r_m}$ term.}
For any deterministic $u>0$, since $\{r_0,\dots,r_{K-1}\}\subseteq\{1,\dots,T\}$,
we have pathwise
\[
\sum_{m=0}^{K-1} Z_{r_m}^2
\le
\sum_{m=0}^{K-1}\Bigl(u+(Z_{r_m}^2-u)_+\Bigr)
\le
uK+\sum_{t=1}^{T}(Z_t^2-u)_+.
\]
Taking expectations and using Eq.~\eqref{eq:var-subg-moments},
\[
\mathbb{E}\big[(Z_t^2-u)_+\big]
=
\int_{u}^{\infty}\mathbb{P}(Z_t^2\ge x)\,dx
\le
\int_{u}^{\infty}2\exp\!\left(-\frac{x}{2\sigma^2}\right)\,dx
=
4\sigma^2\exp\!\left(-\frac{u}{2\sigma^2}\right).
\]
Therefore,
\begin{equation}\label{eq:restart-term-bound-simple}
\mathbb{E}\!\left[\sum_{m=0}^{K-1} Z_{r_m}^2\right]
\le
u\,\mathbb{E}[K] + 4\sigma^2 T \exp\!\left(-\frac{u}{2\sigma^2}\right).
\end{equation}
Choose $u := 2\sigma^2\log(eT)$, so that
$\exp(-u/(2\sigma^2))=(eT)^{-1}$ and hence $4\sigma^2T\exp(-u/(2\sigma^2))=4\sigma^2/e$.
Plugging into \eqref{eq:restart-term-bound-simple} gives
\begin{equation}\label{eq:restart-term-final-simple}
\mathbb{E}\!\left[\sum_{m=0}^{K-1} Z_{r_m}^2\right]
\le
2\sigma^2\log(eT)\,\mathbb{E}[K]+\frac{4}{e}\sigma^2.
\end{equation}

\paragraph{Step 3 - Combining Lemma~\ref{ATC-lem:expected-FA}.}
Combining \eqref{eq:var-main-reduction} and \eqref{eq:restart-term-final-simple} yields
\[
\mathcal{R}_T^{\mathrm{var}}
\le
2\sigma^2(1+\log T)\,\mathbb{E}[K]
+
\frac{\pi^2}{3}\left(
2\sigma^2\log(eT)\,\mathbb{E}[K]+\frac{4}{e}\sigma^2
\right).
\]
Since $\log(eT)=1+\log T$, there exists an absolute constant $C_0>0$ such that
\[
\mathcal{R}_T^{\mathrm{var}}
\le
C_0\,\sigma^2\,(1+\log T)\,\mathbb{E}[K] + C_0\,\sigma^2.
\]
Finally, by Lemma~\ref{ATC-lem:expected-FA}, $\mathbb{E}[K]\le S+1+\alpha$.
Since $(S+1+\alpha)(1+\log T)\ge 1$ for $T\ge2$, we can absorb the additive constant into
the main term, yielding \eqref{ATC-eq:variance-upper-bound} for an absolute
constant $C_{\mathrm V}>0$.
\end{proof}

\subsection{Proof of Theorem~\ref{th:lower-bound}} \label{ATC-th-proof:regret-lower-bound}

The two main sources of the regret are the variance term (the cumulative estimation error due to the noisy nature of the observations) and the bias term (due to the bias in the estimator after a change, which persists until the policy raises an alarm).
In deriving the lower bound, we bound each term separately and then combine the results.
To keep notation simple, we prove the lower bound for deterministic policies. The same results also hold for randomized policies by representing the internal random seed of the learner by an auxiliary random variable (independent of the data under every environment), and defining the appropriate filtration. Throughout, we work with an unstructured piecewise-stationary model in which there is no information sharing across segments. Throughout the lower-bound proof we assume \(M\ge c_0\sigma\) for a sufficiently large universal constant \(c_0\). Under this
non-degeneracy condition, all constants below can be chosen universal.

We start with the bias error. For  ease of presentation, we first provide the lower bound for the case of a single change point. We then extend in App.~\ref{ssec:lower-bound-bias} to the case of a general number of change points $S$, and in App.~\ref{ssec:lower-bound-variance} we provide the lower bound for the variance error.


\subsubsection{Single change point: bias regret lower bound}\label{ssec:lb-one-change}
For lower bounds it suffices to consider a subclass of environments.
Fix, without loss of generality $\mu_1 > \mu_0$ and denote $\Delta:=|\mu_1-\mu_0|$ and $A:=(\mu_0+\mu_1)/2$.
Consider the Gaussian subclass in which $X_t\sim\mathcal N(\mu_t,\sigma^2)$ with piecewise-constant mean $\mu_t$.
This subclass satisfies the sub-Gaussian assumption with variance proxy $\sigma^2$.
Moreover, when proving a lower bound we may give the learner extra information:
in the arguments below we assume the learner knows $\{\mu_0,\mu_1\}$, which can only reduce risk.
Hence any lower bound proved in this setting applies to the original problem with unknown means.

Let $\nu_\infty$ denote the no-change instance ($\mu_t = \mu_0 \ \forall t \in [T]$), and for $\tau\in[T]$
let $\nu_\tau$ denote the single-change instance with $\mu_t=\mu_0$ for $t<\tau$ and $\mu_t=\mu_1$ for $t\ge \tau$.
Write $\mathbb P_\infty,\mathbb E_\infty$ and $\mathbb P_\tau,\mathbb E_\tau$ for the corresponding probability laws and expectations.
We next lower bound $\mathcal{R}_T(\pi, \nu_\tau)$ on this known-means subclass; since the means are revealed, this lower bound purely captures the bias incurred due to delayed adaptation.

\begin{lemma}\label{lem:lower-one-change}
For the single-change point case, there exists a universal constant $c>0$ such that for all $T\ge 3$,
\[
\inf_{\pi\in\Pi}\ \sup_{\tau\in [T]\cup\{\infty\}}\ \mathcal R_T(\pi,\nu_\tau)
\ \ge\ c\,\sigma^2\log T.
\]
\end{lemma}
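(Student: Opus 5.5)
We reduce the statement to a change-detection trade-off on the known-means subclass and then choose the gap $\Delta$ to balance the two sides. Since $\mu_0,\mu_1$ are known, the per-step loss satisfies $(\hat\mu_t-\mu_t)^2\ge (\Delta/2)^2\,\mathbf 1\{\hat\mu_t \text{ is on the wrong side of } A\}$. So any policy $\pi$ induces binary decisions $\psi_t:=\mathbf 1\{\hat\mu_t\ge A\}$, each $\mathcal F_{t-1}$-measurable, and the regret is at least $(\Delta^2/4)$ times the expected number of wrong decisions. We fix $\Delta^2 = c_1\sigma^2\log T/\ell$, with the window length $\ell$ chosen below. The non-degeneracy $M\ge c_0\sigma$ guarantees that this $\Delta$ is admissible, since we will only need $\Delta\lesssim\sigma$.

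First, we would handle policies that are frequently wrong under the null. Under $\nu_\infty$ the regret is at least $(\Delta^2/4)\,\mathbb E_\infty[\#\{t:\psi_t=1\}]$. If this count exceeds $c_2\log T\cdot \sigma^2/\Delta^2$ in expectation, then the bound $c\sigma^2\log T$ holds at once. Otherwise we assume $\mathbb E_\infty[\sum_t\psi_t]\le c_2\sigma^2\log T/\Delta^2 = (c_2/c_1)\ell$.

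Second, we partition $[T]$ into $m=\lfloor T/(2\ell)\rfloor$ disjoint windows $W_1,\dots,W_m$ of length $2\ell$, and we place candidate change points $\tau_i$ at the start of each window. By pigeonhole, there is a window $W_i$ with $\mathbb E_\infty[\sum_{t\in W_i}\psi_t]\le (c_2/c_1)\ell/m$, which is tiny when $m$ is large. Consider the statistic $Y:=\frac1\ell\sum_{t\in W_i,\ t\ge\tau_i+\ell}\psi_t$, the fraction of correct decisions in the second half of the window. Under $\nu_\infty$ its mean is at most $2c_2/(c_1 m)$. The laws $\mathbb P_\infty$ and $\mathbb P_{\tau_i}$ restricted to $\mathcal F_{\tau_i+2\ell}$ differ in KL by $2\ell\Delta^2/(2\sigma^2)=c_1\log T$. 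By the data-processing inequality applied to the Bernoulli-valued (or $[0,1]$-valued) $Y$, using $\mathrm{kl}(p,q)\ge q\log(q/p)-\log 2$-type bounds, we get the following. If $\mathbb E_{\tau_i}[Y]\ge 1/2$, then $c_1\log T\ge \tfrac12\log(c_1 m/(4c_2))-\log 2$. Choosing $\ell\asymp T^{1/2}$-ish, or more simply $\ell$ with $m\ge T^{1/2}$, makes the right side at least $\tfrac14\log T$. With $c_1<1/4$ small, this is a contradiction. Hence $\mathbb E_{\tau_i}[Y]<1/2$, so under $\nu_{\tau_i}$ at least $\ell/2$ expected wrong decisions occur in the second half-window. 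The regret is then at least $(\Delta^2/4)(\ell/2)=c_1\sigma^2\log T/8$.

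Third, we fix the parameters. We take $\ell=\lceil c_1\sigma^2\log T/\Delta^2\rceil$ with $\Delta=\sigma$, say, so $\ell\asymp\log T$ and $m\asymp T/\log T\ge T^{1/2}$ for $T\ge T_0$. Small $T\ge3$ is absorbed by adjusting $c$, using the trivial bound $\mathcal R_T\ge c'\sigma^2$ from a one-step two-point argument. The main obstacle is the change-of-measure step. We must make the KL/data-processing bound uniform over adaptive $\psi_t$ and obtain the $\log m$ gain from the small null-mean of $Y$. We would use the inequality $\mathrm{kl}(q,p)\ge q\log(1/p)-\log 2$ for $q=\mathbb E_{\tau_i}Y$ and $p=\mathbb E_\infty Y$. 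Note that KL only accumulates after $\tau_i$ because the laws agree before it.
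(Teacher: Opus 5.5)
Your proposal is correct and follows essentially the same route as the paper. Both arguments use windows of length $\ell\asymp(\sigma^2/\Delta^2)\log T$ with $\Delta\asymp\sigma$, split into the case where the policy is unstable under $\nu_\infty$ and the case where some window is quiet, insert a change at the start of that quiet window, and compare against the Gaussian KL of order $\ell\Delta^2/\sigma^2$ via data processing, with the small null probability supplying the $\log m$. The differences are only technical: you track expected wrong-side counts using the $[0,1]$-valued data-processing inequality and $\mathrm{kl}(x,y)\ge x\log(1/y)-\log 2$, where the paper uses the half-window event $A_j$ and the exact value of $\mathrm{kl}(1/2,q)$; and your one-step two-point bound for small $T$ is more explicit than the paper's appeal to adjusting constants.
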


\begin{proof}
The argument formalizes the stability-adaptivity trade-off. Intuitively, a policy that is too \emph{unstable} incurs regret under $\nu_\infty$ (by frequently behaving as if a change occurred), whereas a policy that is too \emph{stable} cannot reliably react "fast enough" when the change truly happens, and must then incur regret under a suitably chosen $\nu_\tau$.

\paragraph*{Step 1 - Partition of the horizon.}
Fix a window length $1 \leq \ell \leq T$ and define $m:=\lfloor T/\ell\rfloor$ disjoint windows
\[
n_i:=(i-1)\ell+1,
\qquad
W_i:=\{n_i,n_i+1,\dots,n_i+\ell-1\},
\qquad i=1,\dots,m,
\]
as illustrated in Figure~\ref{fig:partition}.
For each window $W_i$, define the event that the policy ``behaves as if the mean is $\mu_1$'' for at least half of the window length:
\begin{equation}\label{eq:def-Ai}
A_i
:=\Bigl\{
\hat\mu_t > A\ \text{for at least $\ell/2$ time steps in }W_i
\Bigr\}.
\end{equation}
This event allows us to lower bound regret in two complementary regimes:
\begin{align}
\label{eq:lower-no-change}
\text{On $A_i$:}\quad  
\sum_{t\in W_i}(\hat\mu_t-\mu_0)^2 \ \ge\ \frac{1}{8}\,\ell\Delta^2, 
\\
\label{eq:lower-change}
\text{On $A_i^c$:}\quad
\sum_{t\in W_i}(\hat\mu_t-\mu_1)^2 \ \ge\ \frac{1}{8}\,\ell\Delta^2.
\end{align}
Indeed, on $A_i$ there are at least $\ell/2$ indices $t\in W_i$ with $\hat\mu_t > A$, hence
$\hat\mu_t-\mu_0 \ge \Delta/2$ and $(\hat\mu_t-\mu_0)^2\ge \Delta^2/4$ on those indices, which yields
$\sum_{t\in W_i}(\hat\mu_t-\mu_0)^2 \ge (\ell/2)(\Delta^2/4)=\ell\Delta^2/8$.
The bound in Eq.~\eqref{eq:lower-change} is symmetric.

\begin{figure}[ht]
  \centering
  \includegraphics[width=0.85\linewidth]{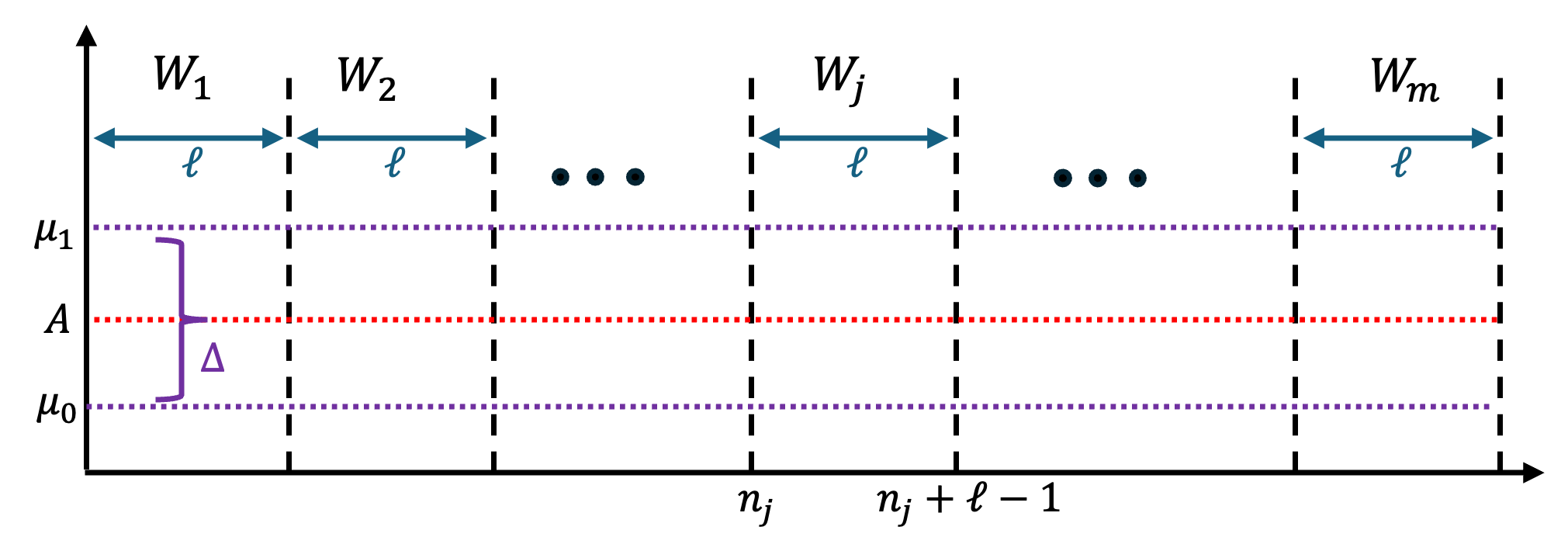}
  \caption{Partition of the horizon}
  \label{fig:partition}
\end{figure}

\paragraph*{Step 2 - Stability under no change.}
We next show that if the policy triggers $A_i$ with non-negligible probability \emph{in every window} under $\nu_\infty$, then it already suffers large regret even when no change occurs. Fix $\alpha\in(0,1)$ \footnote{Here $\alpha$ is a proof parameter we choose for the lower bound analysis, and not a property of the policy.}. If $\mathbb P_\infty(A_i)>\alpha/m$ for every $i\in[m]$, then as a direct consequence of Eq.~\eqref{eq:lower-no-change} we have:
\begin{equation}\label{eq:no-change-large}
\mathcal R_T(\pi,\nu_\infty)
\ \ge\ 
\sum_{i=1}^m \mathbb E_\infty\!\Big[\sum_{t\in W_i}(\hat\mu_t-\mu_0)^2\Big]
\ \ge\ 
\sum_{i=1}^m \mathbb P_\infty(A_i)\cdot\frac{\ell}{8}\Delta^2
\ >\ 
\frac{\alpha}{8}\,\ell\Delta^2.
\end{equation}

By Eq.~\eqref{eq:no-change-large}, if $\mathbb P_\infty(A_i)>\alpha/m$ for all windows, then choosing $\ell=\Theta((\sigma^2\log T)/\Delta^2)$ yields a regret lower bound of order $\sigma^2\log T$.
Hence, in what follows we restrict attention to policies for which there exists a window index $j\in[m]$ such that
\begin{equation}\label{eq:small-q}
q:=\mathbb P_\infty(A_j)\ \le\ \frac{\alpha}{m}.
\end{equation}
We view $q$ as quantifying the stability of the policy on window $W_j$ under the "no-change" instance.

\paragraph*{Step 3 - Information constraint (stability limits adaptivity).}
We now consider the environment in which the change is at the beginning of this conservative window, i.e., we consider $\nu_{n_j}$.
Let
\[
p:=\mathbb P_{n_j}(A_j).
\]
On $\nu_{n_j}$, the true mean on $W_j$ is $\mu_1$, hence by Eq.~\eqref{eq:lower-change},
\begin{equation}\label{eq:regret-window}
\mathcal R_T(\pi,\nu_{n_j})
\ \ge\
\mathbb E_{n_j}\!\Big[\sum_{t\in W_j}(\hat\mu_t-\mu_1)^2\Big]
\ \ge\
\mathbb P_{n_j}(A_j^c)\cdot\frac{\ell}{8}\Delta^2
\ =\
(1-p)\cdot\frac{\ell}{8}\Delta^2.
\end{equation}
Thus, to avoid large regret under $\nu_{n_j}$, the policy must make $p$ close to $1$; i.e., it must reliably behave as if the mean is $\mu_1$ within $W_j$.
However, since the policy is stable under $\nu_\infty$ on $W_j$ (meaning $q$ is small), distinguishing $\nu_{n_j}$ from $\nu_\infty$ within a window of length $\ell$ requires sufficient statistical information.
We now formalize this via the data-processing inequality.

\begin{claim}\label{lem:optimize-d}
If $\ell$ is chosen so that
\begin{equation}\label{eq:choose-d}
\ell\ \le\ \frac{\sigma^2}{\Delta^2}\log\!\frac{m}{4\alpha},
\end{equation}
then $p\le 1/2$.
\end{claim}

\begin{proof}
Since $A_j\in\mathcal F_{n_j+\ell-1}$, the data-processing inequality implies
\begin{equation}\label{eq:dpi-bern}
D\big(\mathrm{Bern}(p)\,\|\,\mathrm{Bern}(q)\big)
\ \le\
D\big(\mathbb P_{n_j}^{\le n_j+\ell-1}\,\|\,\mathbb P_{\infty}^{\le n_j+\ell-1}\big),
\end{equation}
where $\mathbb P^{\le n}$ denotes the law of the prefix $(X_1,\dots,X_n)$, and $\text{Bern}(\cdot)$ is the Bernoulli distribution.
Under $\nu_{n_j}$ and $\nu_\infty$, the first $n_j-1$ samples have the same law; the only difference on the prefix up to $n_j+\ell-1$ is the $\ell$ samples in $W_j$, whose mean is $\mu_1$ under $\nu_{n_j}$ and $\mu_0$ under $\nu_\infty$.
Using additivity of KL divergence for independent Gaussians yields
\begin{equation}\label{eq:kl-gauss}
D\big(\mathbb P_{n_j}^{\le n_j+\ell-1}\,\|\,\mathbb P_{\infty}^{\le n_j+\ell-1}\big)
\ =\ \frac{\Delta^2}{2\sigma^2}\,\ell.
\end{equation}
Combining Eq.~\eqref{eq:dpi-bern} and Eq.~\eqref{eq:kl-gauss} gives
\begin{equation}\label{eq:kl-master}
D\big(\mathrm{Bern}(p)\,\|\,\mathrm{Bern}(q)\big)\ \le\ \frac{\Delta^2}{2\sigma^2}\,\ell.
\end{equation}

We now relate this inequality to the desired conclusion $p\le 1/2$:
For fixed $q\in(0,1/2]$, the map $p\mapsto D(\mathrm{Bern}(p)\,\|\,\mathrm{Bern}(q))$ is convex, minimized at $p=q$, and non-decreasing on $[1/2,1]$.
Hence, if $p\ge 1/2$, then
\[
D(\mathrm{Bern}(p)\,\|\,\mathrm{Bern}(q))
\ \ge\
D(\mathrm{Bern}(1/2)\,\|\,\mathrm{Bern}(q)).
\]
Using Eq.~\eqref{eq:small-q}, we have $q\le \alpha/m\le 1/2$ (for $m$ large enough), and
\begin{equation}\label{eq:half-vs-q}
D\big(\mathrm{Bern}(1/2)\,\|\,\mathrm{Bern}(q)\big)
\ =\ \frac{1}{2}\log\!\frac{1}{4q(1-q)}
\ \ge\ \frac{1}{2}\log\!\frac{1}{4q}
\ \ge\ \frac{1}{2}\log\!\frac{m}{4\alpha}.
\end{equation}
Therefore, if Eq.~\eqref{eq:choose-d} holds, then Eq.~\eqref{eq:kl-master} forces
\[
D(\mathrm{Bern}(p)\,\|\,\mathrm{Bern}(q))
\ \le\
D(\mathrm{Bern}(1/2)\,\|\,\mathrm{Bern}(q)),
\]
which implies $p\le 1/2$.
\end{proof}

Plugging Claim~\ref{lem:optimize-d} into Eq.~\eqref{eq:regret-window} yields
\begin{equation}\label{eq:window-lb-final}
\mathcal R_T(\pi,\nu_{n_j})\ \ge\ \frac{1}{16}\,\ell\Delta^2.
\end{equation}

\paragraph*{Step 4 - Choosing $\ell$.}
For convenience, fix $\alpha = 1/2$, then Eq.~\eqref{eq:choose-d} is
\begin{equation}\label{eq:d-condition}
    \ell \leq \frac{\sigma^2}{\Delta^2} \log \frac{m}{2}.
\end{equation}
Since we are proving a minimax lower bound, it suffices to consider any hard subclass of environments. 
In particular, we may fix a constant jump size, e.g. $\Delta = 2\sigma$ (equivalently, $\mu_0=0$ and $\mu_1=2\sigma$), 
which yields universal constants and ensures the window length below satisfies $1\le \ell\le T$ for all $T\ge 3$.\footnote{See Remark \ref{remark:choosing-d} for more details.}
Choose
\begin{equation} \label{eq:optimal-spacing}
\ell\ :=\ \Bigl\lceil \frac{\sigma^2}{4\Delta^2}\,\log T\Bigr\rceil
\ =\ \Bigl\lceil \frac{1}{16}\log T\Bigr\rceil,
\end{equation}
and recall $m=\lfloor \frac{T}{\ell} \rfloor\geq \frac{T}{2\ell}$. We have
\[
\log\frac{m}{2}\ \ge\ \log T - \log(4\ell).
\]
With $\ell=\Theta(\log T)$, we have $\log(4\ell)=O(\log\log T)$, and hence
\[
\log\frac{m}{2}\ \ge\ \log T - O(\log\log T).
\]
Therefore, for $T$ large enough (and by adjusting constants for smaller $T$), the choice in Eq.~\eqref{eq:optimal-spacing} satisfies Eq.~\eqref{eq:d-condition}.

Finally note the following:
either Eq.~\eqref{eq:no-change-large} applies, giving $\mathcal R_T(\pi,\nu_\infty)\gtrsim \ell\Delta^2\asymp \sigma^2\log T$,
or there exists $j$ with $q\le \alpha/m$, in which case Eq.~\eqref{eq:window-lb-final} gives $\mathcal R_T(\pi,\nu_{n_j})\gtrsim \ell\Delta^2\asymp \sigma^2\log T$.
Taking the supremum over $\tau$ and then the infimum over $\pi$ proves Lemma \ref{lem:lower-one-change} (absorbing the $O(\log\log T)$ term into constants).
\end{proof}

\begin{remark}\label{remark:choosing-d}
In Step~4 we fixed a constant jump size (e.g., $\Delta=2\sigma$) to keep $\ell=\Theta(\log T)$ and thus ensure $1\le \ell\le T$ and $m=\lfloor T/\ell\rfloor\ge 1$.
This is without loss of generality for the minimax lower bound, since restricting to any subclass of environments can only decrease the supremum over $\nu$, and hence any lower bound proved for a fixed $\Delta=\Theta(\sigma)$ applies to the full class. Moreover, unlike other lower bounds (e.g., \cite{garivier2008upper}) where one optimizes a gap parameter to maximize the derived bound, here optimizing over $\Delta$ cannot improve the order of the worst-case bound in $T$, since the information constraint in Claim~\ref{lem:optimize-d} depends on the window length only through the KL term
$\frac{\Delta^2}{2\sigma^2}\ell$, while the window contribution to squared loss scales as $\ell\Delta^2$.
Thus taking $\ell\asymp (\sigma^2/\Delta^2)\log T$ makes the lower bound $\ell\Delta^2$ of order $\sigma^2\log T$, i.e., the dependence on $\Delta$ cancels up to constants (and lower-order $\log\log T$ terms).
Extremely small or extremely large $\Delta$ only make the problem easier (because the two means are nearly indistinguishable in squared loss, or because changes become quickly detectable), so the hardest regime is $\Delta=\Theta(\sigma)$.

\end{remark}


\subsubsection{Multiple change points (Bias regret lower bound)}
\label{ssec:lower-bound-bias}
We next extend the above results to the multiple change points case. 
Let $\mathcal E_{\le S, T}(\sigma^2, M)$ be the class of environments with at most $S$ change points\footnote{The construction uses environments with at most \(S\) changes, which is sufficient because the minimax risk over the full class is lower bounded by the risk over any subclass.} $1=\tau_0<\tau_1<\cdots<\tau_{S}<\tau_{S+1}=T+1$ with
\[
X_t \sim \mathcal N(\mu_j,\sigma^2)\qquad\text{for all }t\in[\tau_j,\tau_{j+1}),
\]
independently across $t$. Since this is a subclass of the sub-Gaussian model, any lower bound over $\mathcal E_{\le S, T}(\sigma^2, M)$
also applies to the original environment class.

As in Section \ref{ssec:lb-one-change}, we prove the bias regret lower bound by restricting $\mathcal E_{\le S, T}(\sigma^2, M)$ to a hard known-means subclass.
Specifically, we fix two means $\mu_0\neq\mu_1$, reveal them to the learner, and consider environments whose means take values in $\{\mu_0,\mu_1\}$
with at most one change per block (hence at most $S$ changes overall). Consider $\Delta$ and $A$ as defined before. We next lower bound $\mathcal{R}_T(\pi, \nu)$ on this class.

\begin{lemma}\label{lem:lower-multiple-change}
For the multiple-change point case, there exists a universal constant $c>0$ such that for all $T\ge 3$,
\[
\inf_{\pi\in\Pi}\ \sup_{\nu:\,S(\nu)\le S}\ \mathcal R_T(\pi,\nu)
\ \ge\ c\,\sigma^2 S \log \frac{T}{S}.
\]
\end{lemma}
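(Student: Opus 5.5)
We reduce to $S$ independent copies of the single-change construction of Lemma~\ref{lem:lower-one-change}. Partition $\{1,\dots,T\}$ into $S$ consecutive disjoint blocks $\mathcal B_1,\dots,\mathcal B_S$, each of length $T_0:=\lfloor T/S\rfloor$; any leftover times at the end are assigned mean $\mu_0$ and ignored, since they only add nonnegative regret. Within each block $\mathcal B_s$ we place the hard family used in the single-change lemma. The mean equals $\mu_0$ at the start of every block, and it either stays at $\mu_0$ or switches once to $\mu_1$ at some $\tau^{(s)}$ inside $\mathcal B_s$. If a block ends at $\mu_1$, we force the next block to start at $\mu_0$ again; this uses one extra change per block. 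To respect the budget $S$, we therefore use $\lfloor S/2\rfloor$ blocks of length $\lfloor T/\lfloor S/2\rfloor\rfloor$, which changes only constants (the case $S=1$ is exactly Lemma~\ref{lem:lower-one-change}). Equivalently, we can let the within-block baseline alternate between $\mu_0$ and $\mu_1$, so that the block boundary itself never introduces a change. The means are revealed to the learner as in Section~\ref{ssec:lb-one-change}, and we fix $\Delta=2\sigma$, which is admissible since $M\ge c_0\sigma$.

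The key step is a product-prior (Assouad-type) argument. Let the environment $\nu=(\theta_1,\dots,\theta_S)$ with $\theta_s\in\{\infty\}\cup\{n^{(s)}_i\}$ be chosen blockwise. Regret is additive over blocks, $\mathcal R_T(\pi,\nu)=\sum_s\mathbb E_\nu[\sum_{t\in\mathcal B_s}(\hat\mu_t-\mu_t)^2]$. The observations in different blocks are independent, and the laws of the data outside $\mathcal B_s$ do not depend on $\theta_s$. Fix the other coordinates $\theta_{-s}$. The restriction of $\pi$ to block $\mathcal B_s$ is then a (randomized) policy for a single-change problem of horizon $T_0$: its internal randomness is the data from earlier blocks, which is independent of the block-$s$ data and has the same law regardless of $\theta_s$. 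The proof of Lemma~\ref{lem:lower-one-change} goes through verbatim for randomized policies, because the data-processing step in Claim~\ref{lem:optimize-d} is applied to the joint law of (prefix, seed), whose KL divergence is unchanged. Hence for every $s$ and every $\theta_{-s}$ there is a choice $\theta_s^\ast(\theta_{-s})$ with block-$s$ regret at least $c\sigma^2\log T_0$.

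To obtain a single environment that is bad for all blocks simultaneously, we choose the coordinates sequentially, $s=1,\dots,S$. Block $s$'s data and regret depend only on $\theta_1,\dots,\theta_s$, because the policy is causal and later-block parameters affect only later data. So we pick $\theta_1^\ast$ from the lemma applied to block 1, then $\theta_2^\ast$ given $\theta_1^\ast$, and so on. Since block $s$'s regret does not depend on $\theta_{s+1},\dots,\theta_S$, the resulting $\nu^\ast$ satisfies $\mathcal R_T(\pi,\nu^\ast)\ge \sum_s c\,\sigma^2\log T_0 = c\,\sigma^2 S\log\lfloor T/S\rfloor$. For $T/S$ bounded below by a constant this is $\gtrsim\sigma^2 S\log(T/S)$. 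When $T/S$ is small, the $\log$ is $O(1)$ and the bound follows trivially from constant-size windows, or is absorbed into the $(1+\log)$ form of Theorem~\ref{th:lower-bound}.

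The main obstacle is justifying the per-block application of Lemma~\ref{lem:lower-one-change}. That lemma was stated for a block starting at time $1$ with a horizon-$T$ count of windows, whereas the reduced policy here is a randomized policy whose ``seed'' is the earlier data. I would handle this by rewriting Steps 2--3 of that proof with $\mathbb P_\infty,\mathbb P_{n_j}$ replaced by the conditional laws given $\theta_{-s}$. The stability/information dichotomy uses only the window events $A_i\subseteq\mathcal B_s$ and KL over the $\ell$ samples of $W_j$, so it holds for every fixed prefix and hence after averaging. A secondary check is the constraint $\ell=\lceil\frac1{16}\log T_0\rceil\le T_0$ with $m=\lfloor T_0/\ell\rfloor\ge 1$, which needs $T_0\ge3$. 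Blocks with $T_0<3$ are handled by adjusting constants.
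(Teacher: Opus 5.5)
Your proposal is correct and is essentially the paper's argument: split the horizon into $\Theta(S)$ blocks of length $\Theta(T/S)$, run the single-change stability-versus-information dichotomy inside each block under the law induced by the already-fixed earlier blocks (your \emph{earlier data as seed} view plays the role of the paper's no-change continuation law $\mathbb P_s^{\mathrm{nc}}$), and then choose the bad continuation sequentially, block by block, so that each block contributes $\Omega(\sigma^2\log(T/S))$ and the contributions add. The paper uses your alternating-baseline variant, defining the window event relative to the pre-block mean $\mu_s^{\mathrm{pre}}$; this keeps at most one change per block and avoids halving the number of blocks.
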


\begin{proof}
\textbf{Step 1 - Block decomposition.}
Fix $S\ge1$ and define the block length $H:=\lfloor T/S\rfloor$.
For $s=1,\dots,S$, define disjoint blocks
\[
B_s:=\{(s-1)H+1,\dots,sH\}.
\]
For each $s$, let $\nu^{(1:s-1)}$ denote a fixed (before observing data) specification of the environment on blocks $B_1,\ldots,B_{s-1}$.
We define $\mathbb{P}_s^{\mathrm{nc}}$ (respectively, $\mathbb{E}_s^{\mathrm{nc}}$) as the probability (expectation) law under the continuation environment that agrees with $\nu^{(1:s-1)}$ up to time $(s-1)H$ and has \emph{no change} on block $B_s$, i.e.,
\[
\mu_t=\mu_s^{\mathrm{pre}} \text{ for all } t\in B_s,
\]
which makes explicit that the probability $\mathbb P_s^{\mathrm{nc}}(A_{s,i})$ is computed under the history distribution induced by the previously fixed blocks.

Within each block $B_s$, we further partition time into windows of length $\ell$ (similarly to App.~\ref{ssec:lb-one-change}):
\[
m:=\Bigl\lfloor \frac{H}{\ell}\Bigr\rfloor,
\qquad
\tau_{s,i}:=(s-1)H+(i-1)\ell+1,
\qquad
W_{s,i}:=\{\tau_{s,i},\dots,\tau_{s,i}+\ell-1\},
\qquad i=1,\dots,m.
\]
Next, define the event
\begin{equation}\label{eq:def-Asi}
A_{s,i}
:=\Bigl\{
(\hat\mu_t-A)(\mu_s^{\mathrm{pre}}-A)<0
\ \text{for at least $\ell/2$ time steps in }W_{s,i}
\Bigr\}.
\end{equation}
Thus, $A_{s,i}$ indicates that during at least half of window $W_{s,i}$ the estimator behaves as if the mean has switched to the opposite side of $A$ relative to the pre-block mean. Exactly as in Eq.~\eqref{eq:lower-no-change} and Eq.~\eqref{eq:lower-change}, this event implies a window-wise squared-loss lower bound of order $\ell\Delta^2$ under the corresponding (pre- vs post-change) mean.

\paragraph{Step 2 (Per-block analysis)}
Fix $\alpha\in(0,1)$ and consider block $B_s$.
First, consider the no-change continuation in which $\mu_t\equiv \mu_s^{\mathrm{pre}}$ throughout $B_s$.
If
\begin{equation}
\mathbb P_s^{\mathrm{nc}}(A_{s,i})>\alpha/m \qquad \text{for all } i\in[m]
\end{equation}
under this "no-change" instance, then exactly as in the single-change analysis (Eq.~\eqref{eq:no-change-large}), the expected loss incurred on block $B_s$ is at least
\begin{equation}
\frac{\alpha}{8}\,\ell\Delta^2.
\end{equation}
Otherwise, there exists an index $j_s\in[m]$ such that
\begin{equation}
q_s:=\mathbb P_s^{\mathrm{nc}}(A_{s,j_s})\le \alpha/m
\end{equation}
under the no-change continuation.
In this case, we consider the continuation environment in which, on block $B_s$, a change is inserted at time $\tau_{s,j_s}$, flipping the mean from $\mu_s^{\mathrm{pre}}$ to the other value in $\{\mu_0,\mu_1\}$, and keeping the mean constant thereafter on $B_s$.
Denote the resulting continuation environment on block $B_s$ by $\nu^{(s)}$ (and let it agree with $\nu^{(1:s-1)}$ on earlier blocks).

Let
\[
p_s:=\mathbb P_{\nu^{(s)}}(A_{s,j_s}).
\]
On window $W_{s,j_s}$, the true mean is now the post-change value, and by the same argument as in Eq.~\eqref{eq:lower-change}, the expected loss contributed by $W_{s,j_s}$ is at least
\[
(1-p_s)\cdot \frac{\ell}{8}\Delta^2.
\]
\paragraph{Step 3 - Information-theoretic bound.}
Conditioned on the history up to time $\tau_{s,j_s}-1$, the distributions under the no-change continuation and under $\nu^{(s)}$ differ only on the $\ell$ observations in $W_{s,j_s}$.
Therefore, exactly as in Claim~\ref{lem:optimize-d}, the data-processing inequality yields
\[
D\!\left(\mathrm{Bern}(p_s)\,\|\,\mathrm{Bern}(q_s)\right)
\ \le\
\frac{\Delta^2}{2\sigma^2}\,\ell.
\]
Choosing $\ell$ to satisfy the analogue of Eq.~\eqref{eq:d-condition} with $m=\lfloor H/\ell\rfloor$ forces $p_s\le 1/2$, and hence the expected loss on block $B_s$ is at least
\[
\frac{1}{16}\,\ell\Delta^2.
\]

\paragraph{Step 4 - Choosing $\ell$. }
As in the single change case, take $\alpha=1/2$ and $\Delta=2\sigma$, and choose\footnote{We focus on the regime where \(H=\lfloor T/S\rfloor\) is larger than a sufficiently large universal constant; when \(H=O(1)\) (dense-change regime), the variance lower bound in Lemma A.10 already yields the claimed order in Theorem 4.2.}
\begin{equation}\label{eq:optimize-d-multiple}
\ell\ :=\ \Bigl\lceil \frac{\sigma^2}{4\Delta^2}\,\log H\Bigr\rceil
\ =\ \Bigl\lceil \frac{1}{16}\log H\Bigr\rceil,
\qquad
m:=\Bigl\lfloor\frac{H}{\ell}\Bigr\rfloor.
\end{equation}
Since $m=\lfloor H/\ell\rfloor \ge H/(2\ell)$, we have
\[
\log m\ \ge\ \log H - \log(2\ell)\ =\ \log H - O(\log\log H),
\]
and therefore the KL condition in Step~3 holds for $H$ large enough (and otherwise can be absorbed into universal constants).
Thus, in each block $B_s$, the construction guarantees an expected loss of at least $c_0\,\ell\Delta^2$ for some universal constant $c_0>0$, either due to instability under no change or due to detection delay after an inserted change.

Finally, define a single global environment $\nu$ by specifying it block-by-block as follows:
starting from block $s=1$ and proceeding to $s=S$, given the already fixed behavior on blocks $B_1,\ldots,B_{s-1}$, if the instability condition holds on block $s$ we set block $s$ to have no change; otherwise we insert a change at $\tau_{s,j_s}$ as above.
This produces at most one change in each block and therefore at most $S$ change points overall.

Since the blocks are disjoint and losses are nonnegative, summing over $s=1,\dots,S$ yields
\[
\inf_{\pi\in\Pi}\ \sup_{\nu:\,S(\nu)\le S}\ \mathcal R_T(\pi,\nu)
\ \ge\
c_0\,S\,\ell\Delta^2
\ \asymp\
\sigma^2\,S\log H
\ =\
\Omega\!\left(\sigma^2 S \log\frac{T}{S}\right),
\]
up to the same lower-order $O(\log\log (T/S))$ effects as in the single-change proof.

\end{proof}

\subsubsection{Variance regret lower bound}
\label{ssec:lower-bound-variance}

We lower bound the variance contribution (estimation error). 
We first consider the relaxation in which the learner is told all change points
$\tau_1,\dots,\tau_{S}$, but the segment means $\{\mu_j \}_{j=0}^S$ are unknown.
Conditioned on the change points, the horizon decomposes into $S+1$ stationary segments with lengths
\[
L_j := \tau_{j+1}-\tau_j,\qquad j=0,\dots,S,
\qquad\text{and}\qquad \sum_{j=0}^{S} L_j = T.
\]
Revealing the change points can only help the learner, hence any lower bound for this relaxed problem
also applies to the original minimax risk. The lower bound on this class is given in the next lemma.

\begin{lemma}\label{lem:lower-variance}
There exists a universal constant $c>0$ such that for all $T\ge 3$,
\begin{equation}
\inf_{\pi\in\Pi}\ \sup_{\{\mu_j\}_{j=0}^S}\ \mathcal R_T(\pi,\nu)
\ \ge\ c\,\sigma^2 (S+1)\left( 1+\log\frac{T}{S+1}
\right),
\end{equation}
where $\{\mu_j\}_{j=0}^S$ satisfy Eq.~\eqref{eq:diam}.
\end{lemma}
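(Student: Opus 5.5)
We will prove Lemma~\ref{lem:lower-variance} by a Bayesian reduction. The change points are revealed, and we choose an equal-length partition, so every segment length satisfies $L_j\ge \lfloor T/(S+1)\rfloor$. Because the change points are known and no information is shared across segments, the regret decomposes into a sum over segments of the cumulative squared error of sequential mean estimation on each segment. The minimax risk over $\{\mu_j\}$ is at least the Bayes risk under any prior, so we place independent Gaussian priors $\mu_j\sim\mathcal N(\mu^\circ,\rho^2)$, truncated or restricted so that Eq.~\eqref{eq:diam} holds, and take Gaussian observations. Under independent priors, the data from other segments carry no information about $\mu_j$. Hence the Bayes risk on segment $j$ is the sum over within-segment ages $s=1,\dots,L_j-1$ of the posterior variance of $\mu_j$ given $s$ observations, which is $(1/\rho^2+s/\sigma^2)^{-1}$.

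Summing this posterior variance gives
\[
\sum_{s=1}^{L_j-1}\frac{\sigma^2}{s+\sigma^2/\rho^2}\;\ge\;\sigma^2\log\frac{L_j+\sigma^2/\rho^2}{1+\sigma^2/\rho^2}.
\]
Choosing $\rho=\Theta(\sigma)$ makes this $c\,\sigma^2\log(L_j+1)$ for $L_j\ge 2$. The additive ``$1+$'' term comes from the first prediction of each segment. At time $\tau_j$ the learner has no sample from segment $j$, so its error is at least the prior variance $\rho^2\asymp\sigma^2$. Summing over the $S+1$ segments and using $L_j\ge\lfloor T/(S+1)\rfloor$ gives $c\,\sigma^2(S+1)\bigl(1+\log(T/(S+1))\bigr)$. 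We also note that the regret sum starts at $t=2$, which only affects segment $0$ by a constant.

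The main obstacle is compatibility with the bounded diameter $M$. Gaussian priors are unbounded, so we would instead use a prior supported on $[\mu^\circ-c_0\sigma/2,\mu^\circ+c_0\sigma/2]$, which satisfies Eq.~\eqref{eq:diam} since $M\ge c_0\sigma$. For this prior we need a lower bound on the Bayes risk of order $\sigma^2/(s+1)$. The first thing we would try is the van Trees (Bayesian Cram\'er--Rao) inequality with a smooth compactly supported prior, such as a $\cos^2$ density on an interval of half-width $h=c_0\sigma/2$. Its Fisher information is $\pi^2/h^2$, so the inequality gives
\[
\mathbb E(\hat\mu-\mu_j)^2\ \ge\ \frac{1}{s/\sigma^2+\pi^2/h^2}\ \ge\ \frac{c\,\sigma^2}{s+1},
\]
with a universal constant when $h\asymp\sigma$. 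This bound holds for any $\mathcal F_{t-1}$-measurable estimator and hence for any policy, including randomized ones via an auxiliary seed.

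Independence across segments under the product prior lets us apply van Trees segment-by-segment. Other segments' data act as independent side randomness, so they do not increase the Fisher information about $\mu_j$. Summing $c\sigma^2/(s+1)$ over $s=0,\dots,L_j-1$ yields $c\,\sigma^2(1+\log L_j)$. Summing over the equal partition completes the proof.
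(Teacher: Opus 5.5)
Your proposal is correct and follows essentially the paper's argument. You reveal the change points, place an independent $\cos^2$ prior on each segment mean, apply van Trees at each within-segment age to get a bound of order $\sigma^2/(s+a)$ with $a$ a universal constant, sum the resulting harmonic series, and then take the equal-length partition. The only differences are cosmetic: you use a prior of width $c_0\sigma$ rather than $M$, which gives the same universal constant, and your Gaussian-prior computation is only a heuristic that you rightly discard in favor of van Trees.
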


\begin{proof} 
\textbf{Step 1 - Sequential estimation lower bound.}
We first lower bound the variance contribution incurred within a single stationary segment of length $n\ge 2$, in which the mean is constant and equal to $\mu$.
That is, the observations are i.i.d.
\[
X_1,\ldots,X_n \sim \mathcal N(\mu,\sigma^2).
\]
Our goal is to obtain a minimax lower bound on the expected cumulative squared estimation error, i.e., on
\[
\inf_{\{\hat\mu_t\}_{t=1}^n}\ \sup_{\mu\in[-M/2,M/2]} 
\mathbb E_\mu \!\left[\sum_{t=1}^n (\hat\mu_t-\mu)^2\right],
\]
where each $\hat\mu_t$ is measurable with respect to
$\sigma(X_1,\ldots,X_{t-1})$. Throughout this step, we restrict 
$\mu$ to lie in $[-M/2,M/2]$; this restriction defines a valid subclass of the original problem and will later be used to ensure that the diameter constraint in Eq.~\eqref{eq:diam} is satisfied.

By the Bayes-minimax inequality, for any prior $\rho$ supported on $[-M/2,M/2]$ we have
\[
\inf_{\{\hat\mu_t\}_{t=1}^n}\ \sup_{\mu\in[-M/2,M/2]}\ 
\mathbb{E}_\mu\!\left[\sum_{t=1}^n (\hat\mu_t-\mu)^2\right]
\ \ge\
\inf_{\{\hat\mu_t\}_{t=1}^n}\ 
\mathbb{E}_{\mu\sim\rho}\,\mathbb{E}_\mu\!\left[\sum_{t=1}^n (\hat\mu_t-\mu)^2\right].
\]
We therefore choose a convenient prior $\rho=\rho_M$ supported on $[-M/2,M/2]$, and show that there exists a constant
$c>0$ (universal when $M \ge c_0\sigma$) such that
\begin{equation}\label{eq:lower-bound-variance-rev}
\inf_{\{\hat\mu_t\}_{t=1}^n} 
\mathbb{E}_{\mu\sim\rho_M}\,\mathbb{E}_\mu\!\left[\sum_{t=1}^n (\hat\mu_t-\mu)^2\right]
\ \ge\
c\,\sigma^2 \log(n+1).
\end{equation}

\paragraph*{Proof of Eq.~\eqref{eq:lower-bound-variance-rev}.}
Consider the Bayes model in which $\mu \sim \rho_M$ has density
\[
p_M(\mu)=\frac{2}{M}\cos^2\!\Big(\frac{\pi\mu}{M}\Big)\mathbf 1\{|\mu|\le M/2\},
\]
and conditionally on $\mu$ the observations are i.i.d. $X_1,\ldots,X_n\sim\mathcal N(\mu,\sigma^2)$.
Since $\rho_M$ is supported on $[-M/2,M/2]$ and satisfies $p_M(\pm M/2)=0$, van Trees applies.
Let $I(\rho_M)$ denote the Fisher information of the prior. A direct computation gives
\[
\frac{d}{d\mu}\log p_M(\mu)= -\frac{2\pi}{M}\tan\!\Big(\frac{\pi\mu}{M}\Big),
\qquad
I(\rho_M)=\int\Big(\frac{p'_M(\mu)}{p_M(\mu)}\Big)^2p_M(\mu)\,d\mu=\frac{4\pi^2}{M^2}.
\]
We first write
\[
\inf_{\{\hat\mu_t\}_{t=1}^n}
\mathbb{E}_{\mu\sim\rho_M}\,\mathbb{E}_\mu\!\left[\sum_{t=1}^n (\hat\mu_t-\mu)^2\right]
=
\inf_{\{\hat\mu_t\}_{t=1}^n}
\sum_{t=1}^n\mathbb{E}_{\mu\sim\rho_M}\,\mathbb{E}_\mu\!\left[(\hat\mu_t-\mu)^2\right].
\]
For the Gaussian model, the Fisher information in $t-1$ samples equals $(t-1)/\sigma^2$.
Thus, for any estimator $\hat\mu_t$ based on $t-1$ samples, van Trees yields
\[
\mathbb E_{\mu\sim\rho_M}\mathbb E_\mu\!\big[(\hat\mu_t-\mu)^2\big]
\ \ge\
\frac{1}{(t-1)/\sigma^2+I(\rho_M)}
=
\frac{\sigma^2}{t-1+\sigma^2 I(\rho_M)}
=
\frac{\sigma^2}{t-1+a},
\qquad
a:=\frac{4\pi^2\sigma^2}{M^2}.
\]
Summing over $t$ and using the harmonic series lower bound gives
\[
\inf_{\{\hat\mu_t\}_{t=1}^n}\ \mathbb E_{\mu\sim\rho_M}\mathbb E_\mu\!\left[\sum_{t=1}^n(\hat\mu_t-\mu)^2\right]
\ \ge\
\sigma^2\sum_{t=1}^n\frac{1}{t-1+a}
\ \ge\
\sigma^2\log\!\Big(\frac{n-1+a}{a}\Big)
\ =\ \Omega(\sigma^2\log(n+1)).
\]
If $M\ge c_0\sigma$, then $a\le a_0$ is an absolute constant. By the Bayes--minimax inequality,
this implies that
\[
\inf_{\{\hat\mu_t\}_{t=1}^n}\ \sup_{\mu\in[-M/2,M/2]}\ 
\mathbb{E}_\mu\!\left[\sum_{t=1}^n (\hat\mu_t-\mu)^2\right]
\ \ge\ c \sigma^2 \log (n+1).
\]


\paragraph*{Step 2 - Summing over segments.}
Place an independent product prior on segment means:
\[
\mu_0,\ldots,\mu_S \ \stackrel{\text{i.i.d.}}{\sim}\ \rho_M,
\qquad
\text{so that } \mu_j\in[-M/2,M/2]\ \text{a.s.}
\]
This construction is consistent with the restriction imposed in Step~1 and defines a valid hard subclass
satisfying the diameter constraint, since under this prior we have
$\max_{u,v}|\mu_u-\mu_v|\le M$ almost surely.

Recall $\mathcal I_j=[\tau_j,\tau_{j+1})$ and let $L_j=|\mathcal I_j|=\tau_{j+1}-\tau_j$.
For $t\in\mathcal I_j$, write \(s:=t-\tau_j+1\). At prediction time \(t\),
the learner has observed only the first \(s-1\) samples from segment
\(j\). Since the prior factorizes across segments and observations
outside segment \(j\) are independent of \(\mu_j\), data outside
segment \(j\) carries no information about \(\mu_j\). Thus the
prediction at time \(t\) is no easier than the \(s\)-th prediction in
the single-segment estimation problem of Step 1.

Therefore, applying Step~1 with $n=s\le L_j$ and summing over segments yields
\[
\inf_{\pi\in\Pi}\ \mathbb E\!\left[\sum_{t=2}^T(\hat\mu_t-\mu_t)^2\right]
\ \ge\
\sigma^2\sum_{j=0}^S \log\!\Big(\frac{L_j+1+a}{1+a}\Big)
\ \ge\
c_2\,\sigma^2\sum_{j=0}^S \log(L_j+1),
\]
for a constant $c_2>0$ that is universal when $M\ge c_0\sigma$ (since then $a\le a_0$ is an absolute constant).
By the Bayes--minimax inequality, the same lower bound applies to the minimax risk under this fixed segmentation.

\paragraph*{Step 3 - Choosing a hard partition.}
Choose change points so that the $S+1$ segments have (approximately) equal length,
e.g. $L_j\in\{\lfloor T/(S+1)\rfloor,\lceil T/(S+1)\rceil\}$ for all $j$.
Then $\log(L_j+1)\ge \log(\lfloor T/(S+1)\rfloor+1)$ for all $j$, and hence
\[
\sum_{j=0}^S \log(L_j+1)
\ \ge\
(S+1)\log\!\Big(\Big\lfloor\frac{T}{S+1}\Big\rfloor+1\Big)
\ \gtrsim\
(S+1)\left(1+\log\frac{T}{S+1}\right),
\]
up to absolute constants.
Plugging into the previous display yields Lemma~\ref{lem:lower-variance}.
\end{proof}


Since Lemma~\ref{lem:lower-multiple-change} is proved on a subclass (revealed means) and Lemma~\ref{lem:lower-variance} is proved under a relaxation (revealed change points), both bounds apply to $R_T^\star(\mathcal{E}_{\leq S,T}(\sigma^2,M))$; taking the maximum yields Theorem~\ref{th:lower-bound}.

\begin{remark}
The argument above provides a minimax lower bound by combining two relaxations, and hence guarantees at least the larger of the detection and estimation costs.
Constructing a single instance in which both mechanisms contribute additively is possible but results in the same lower bound rate; thus, we omit it.
\end{remark}

\subsection{Proof of Proposition~\ref{lem:confounding}} \label{proof:confounding}

We first show a deterministic algebraic inequality. This is the key
calculation showing that the loss of SNR caused by using a contaminated
pre-change reference is controlled by the population evidence of the
previous missed change.

\begin{claim}\label{claim:snr-reduction-two-changes}
Fix \(n_0,n_1,n_2>0\) and three means \(\mu_0,\mu_1,\mu_2\). Let
\[
\mu_{01}:=\frac{n_0\mu_0+n_1\mu_1}{n_0+n_1}.
\]
Define
\[
S^\star
:=
\frac{n_1n_2}{n_1+n_2}
\frac{(\mu_2-\mu_1)^2}{\sigma^2},
\]
\[
S^{\mathrm{eff}}
:=
\frac{(n_0+n_1)n_2}{(n_0+n_1)+n_2}
\frac{(\mu_2-\mu_{01})^2}{\sigma^2},
\]
and
\[
S^{\mathrm{prev}}
:=
\frac{n_0n_1}{n_0+n_1}
\frac{(\mu_1-\mu_0)^2}{\sigma^2}.
\]
Then
\[
\left(S^\star-S^{\mathrm{eff}}\right)_+
\le
S^{\mathrm{prev}}.
\]
\end{claim}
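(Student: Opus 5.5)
The plan is to read all three quantities as weighted "merging costs" in a between-group sum-of-squares decomposition. Then the inequality reduces to the fact that the same total can be split in two different orders, and every piece is nonnegative. Throughout, multiply everything by $\sigma^2$; this changes nothing.

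\textbf{Step 1 (merging identity).} For two weighted points $(n_a,\mu_a)$ and $(n_b,\mu_b)$, let $\mu_{ab}$ be their weighted mean. Then for any constant $c$,
\[
n_a(\mu_a-c)^2+n_b(\mu_b-c)^2=(n_a+n_b)(\mu_{ab}-c)^2+\frac{n_an_b}{n_a+n_b}(\mu_a-\mu_b)^2 .
\]
This is the parallel-axis (bias--variance) identity and can be checked by direct expansion.

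\textbf{Step 2 (merging in the order $0,1$ then $2$).} Let $\bar\mu$ be the weighted mean of all three groups and define the total between-group dispersion
\[
W:=\sum_{i=0}^2 n_i(\mu_i-\bar\mu)^2 .
\]
Apply Step 1 with $c=\bar\mu$, first to groups $0$ and $1$. This gives
\[
W=\sigma^2S^{\mathrm{prev}}+(n_0+n_1)(\mu_{01}-\bar\mu)^2+n_2(\mu_2-\bar\mu)^2 .
\]
Apply Step 1 again to the two remaining points $(n_0+n_1,\mu_{01})$ and $(n_2,\mu_2)$. Their weighted mean is $\bar\mu$, so the combined-mean term vanishes and the two remaining terms equal exactly $\sigma^2 S^{\mathrm{eff}}$. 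Hence
\[
W=\sigma^2\bigl(S^{\mathrm{prev}}+S^{\mathrm{eff}}\bigr).
\]

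\textbf{Step 3 (merging in the order $1,2$ then $0$).} Now merge groups $1$ and $2$ first. With $\mu_{12}$ their weighted mean, the same two applications of Step 1 give
\[
W=\sigma^2S^\star+\frac{n_0(n_1+n_2)}{n_0+n_1+n_2}(\mu_0-\mu_{12})^2\;\ge\;\sigma^2S^\star .
\]

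\textbf{Conclusion.} Combining Steps 2 and 3 gives $S^\star\le S^{\mathrm{prev}}+S^{\mathrm{eff}}$, that is, $S^\star-S^{\mathrm{eff}}\le S^{\mathrm{prev}}$. Since $S^{\mathrm{prev}}\ge 0$, the positive part satisfies the same bound.

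The only delicate point is to recognize that $S^{\mathrm{eff}}$ is exactly the second-stage merging cost. This holds because the merged point $(n_0+n_1,\mu_{01})$ together with $(n_2,\mu_2)$ has weighted mean $\bar\mu$, so no residual term remains in Step 2. Once this is seen, the rest is routine algebra; a brute-force expansion would also work but obscures the structure.
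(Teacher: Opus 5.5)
Your proof is correct, and it reaches the claim by a different route from the paper. The paper normalizes $\sigma=1$ and writes $\delta=\mu_1-\mu_0$, $\Delta=\mu_2-\mu_1$, $w=n_0/(n_0+n_1)$. With $A=\frac{n_1n_2}{n_1+n_2}$, $B=\frac{(n_0+n_1)n_2}{n_0+n_1+n_2}$ and $P=A/B$, it expresses $S^\star-S^{\mathrm{eff}}=B\bigl(P\Delta^2-(\Delta+w\delta)^2\bigr)$. It then completes the square in $\Delta$, discards the nonpositive square, and computes $1-P=\frac{n_0n_2}{(n_0+n_1)(n_1+n_2)}$ to show that the remainder is exactly $S^{\mathrm{prev}}$. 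You instead read $\sigma^2(S^{\mathrm{prev}}+S^{\mathrm{eff}})$ and $\sigma^2S^\star+\frac{n_0(n_1+n_2)}{n_0+n_1+n_2}(\mu_0-\mu_{12})^2$, where $\mu_{12}:=\frac{n_1\mu_1+n_2\mu_2}{n_1+n_2}$, as two merge orders of the same between-group dispersion $W$. The two arguments actually prove the same exact identity: with $\sigma=1$, the square the paper discards, $B(1-P)\bigl(\Delta+\frac{w\delta}{1-P}\bigr)^2$, simplifies to precisely your final merge cost $\frac{n_0(n_1+n_2)}{n_0+n_1+n_2}(\mu_0-\mu_{12})^2$. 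The paper's computation is mechanical, and it presents $S^{\mathrm{prev}}$ as the maximum of $S^\star-S^{\mathrm{eff}}$ over the new jump $\Delta$, which fits its role as a worst-case bound. Your decomposition explains why the constant is exactly $S^{\mathrm{prev}}$ rather than merely of that order. It also gives the equality case in interpretable form ($\mu_0=\mu_{12}$: the stale regime's mean coincides with the pooled mean of the two later regimes), and it needs neither the auxiliary ratio $P$ nor the division by $1-P$.
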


\begin{proof}
It suffices to prove the claim with \(\sigma=1\), since all three
quantities are divided by the same factor \(\sigma^2\).

Let
\[
\delta:=\mu_1-\mu_0,\qquad
\Delta:=\mu_2-\mu_1,\qquad
n_L:=n_0+n_1,\qquad
w:=\frac{n_0}{n_L}.
\]
Then
\[
\mu_{01}=\mu_1-w\delta,
\qquad
\mu_2-\mu_{01}=\Delta+w\delta.
\]
Define
\[
A:=\frac{n_1n_2}{n_1+n_2},
\qquad
B:=\frac{n_Ln_2}{n_L+n_2},
\qquad
P:=\frac{A}{B}\in(0,1).
\]
Thus
\[
S^\star=A\Delta^2,
\qquad
S^{\mathrm{eff}}=B(\Delta+w\delta)^2,
\]
and therefore
\[
S^\star-S^{\mathrm{eff}}
=
B\left(P\Delta^2-(\Delta+w\delta)^2\right).
\]
Let \(b:=w\delta\). Then
\[
P\Delta^2-(\Delta+b)^2
=
(P-1)\Delta^2-2b\Delta-b^2.
\]
Since \(P<1\), set \(c:=1-P>0\). Completing the square gives
\[
(P-1)\Delta^2-2b\Delta-b^2
=
-c\left(\Delta+\frac{b}{c}\right)^2
+
\frac{b^2P}{c}
\le
\frac{b^2P}{1-P}.
\]
Multiplying by \(B\) and using \(BP=A\), we get
\[
S^\star-S^{\mathrm{eff}}
\le
\frac{Ab^2}{1-P}.
\]
A direct calculation gives
\[
1-P
=
1-\frac{A}{B}
=
\frac{n_0n_2}{(n_0+n_1)(n_1+n_2)}.
\]
Therefore,
\[
\frac{Aw^2}{1-P}
=
\frac{
\frac{n_1n_2}{n_1+n_2}
\cdot
\frac{n_0^2}{(n_0+n_1)^2}
}{
\frac{n_0n_2}{(n_0+n_1)(n_1+n_2)}
}
=
\frac{n_0n_1}{n_0+n_1}.
\]
Since \(b=w\delta\), this implies
\[
S^\star-S^{\mathrm{eff}}
\le
\frac{n_0n_1}{n_0+n_1}\delta^2
=
S^{\mathrm{prev}}
\]
when \(\sigma=1\). Restoring the common normalization by \(\sigma^2\)
gives the claim. Finally, applying \((\cdot)_+\) to the left-hand side
completes the proof.
\end{proof}

We now prove Proposition~\ref{lem:confounding}. Fix a deterministic restart
index \(r\). Define the fixed-restart confidence event
\[
\mathcal G_r
:=
\left\{
\forall k,t\in[T]\text{ with }r<k<t:\ 
\left|\widehat D_{k,t}^r-D_{k,t}^r\right|
<
\gamma_t^r
\right\}.
\]
By Lemma~A.2,
\[
\mathbb P_\nu(\mathcal G_r)\ge 1-\alpha_r.
\]

We show that on \(\mathcal G_r\), the claimed implication holds
simultaneously for all relevant \(j\) and \(t\). Fix a change index \(j\)
such that \(r<\tau_{j-1}<\tau_j\), and fix
\[
t\in\{\tau_j+1,\ldots,\tau_{j+1}-1\}.
\]
Assume that \(N_G^r>t\). Since \(t>\tau_j\), this implies in particular
that \(N_G^r>\tau_j\). Hence the detector initialized at \(r\) has not
raised an alarm by time \(\tau_j\), and therefore
\[
C_{\tau_j}^r<\gamma_{\tau_j}^r.
\]
Since \(r<\tau_{j-1}<\tau_j\), the split \(k=\tau_{j-1}\) is admissible
at time \(\tau_j\). Thus,
\[
\widehat D_{\tau_{j-1},\tau_j}^r
\le
C_{\tau_j}^r
<
\gamma_{\tau_j}^r.
\]
On the event \(\mathcal G_r\), we also have
\[
\left|
\widehat D_{\tau_{j-1},\tau_j}^r
-
D_{\tau_{j-1},\tau_j}^r
\right|
<
\gamma_{\tau_j}^r.
\]
Therefore,
\[
D_{\tau_{j-1},\tau_j}^r
\le
\widehat D_{\tau_{j-1},\tau_j}^r
+
\left|
\widehat D_{\tau_{j-1},\tau_j}^r
-
D_{\tau_{j-1},\tau_j}^r
\right|
<
2\gamma_{\tau_j}^r,
\]
and hence
\[
\left(D_{\tau_{j-1},\tau_j}^r\right)^2
\le
4(\gamma_{\tau_j}^r)^2.
\]

It remains to relate this population CUSUM statistic to the SNR
degradation. Let
\[
n_0:=\tau_{j-1}-r,\qquad
n_1:=\tau_j-\tau_{j-1},\qquad
n_2:=t-\tau_j.
\]
Since \(t>\tau_j\), we have \(n_2>0\). Define
\[
\mu_0
:=
\frac{1}{\tau_{j-1}-r}
\sum_{i=r}^{\tau_{j-1}-1}\mu_i,
\qquad
\mu_1:=\mu_{j-1},
\qquad
\mu_2:=\mu_j.
\]
The effective pre-change mean for detecting \(\tau_j\) from restart \(r\)
is
\[
\mu_{01}
=
\frac{n_0\mu_0+n_1\mu_1}{n_0+n_1}.
\]
With these definitions,
\[
\mathrm{SNR}_j^\star(t)
=
\frac{n_1n_2}{n_1+n_2}
\frac{(\mu_2-\mu_1)^2}{\sigma^2},
\]
and
\[
\mathrm{SNR}_j^{\mathrm{eff}}(t;r)
=
\frac{(n_0+n_1)n_2}{(n_0+n_1)+n_2}
\frac{(\mu_2-\mu_{01})^2}{\sigma^2}.
\]
Applying Claim~\ref{claim:snr-reduction-two-changes} gives
\[
\left(
\mathrm{SNR}_j^\star(t)
-
\mathrm{SNR}_j^{\mathrm{eff}}(t;r)
\right)_+
\le
\frac{n_0n_1}{n_0+n_1}
\frac{(\mu_1-\mu_0)^2}{\sigma^2}.
\]
The right-hand side is exactly the squared population CUSUM statistic at
time \(\tau_j\) and split \(k=\tau_{j-1}\):
\[
\frac{n_0n_1}{n_0+n_1}
\frac{(\mu_1-\mu_0)^2}{\sigma^2}
=
\left(D_{\tau_{j-1},\tau_j}^r\right)^2.
\]
Consequently, on \(\mathcal G_r\), whenever \(N_G^r>t\),
\[
\left(
\mathrm{SNR}_j^\star(t)
-
\mathrm{SNR}_j^{\mathrm{eff}}(t;r)
\right)_+
\le
\left(D_{\tau_{j-1},\tau_j}^r\right)^2
\le
4(\gamma_{\tau_j}^r)^2.
\]

Finally, by the definition of the threshold,
\[
(\gamma_{\tau_j}^r)^2
=
6\log(\tau_j-r)
+
2\log(1/\alpha_r)
+
2\log(\pi^2/3)
\le
C\log\left(\frac{\tau_j-r+1}{\alpha_r}\right)
\]
for a universal constant \(C>0\). Therefore, on \(\mathcal G_r\),
\[
N_G^r>t
\quad\Longrightarrow\quad
\left(
\mathrm{SNR}_j^\star(t)
-
\mathrm{SNR}_j^{\mathrm{eff}}(t;r)
\right)_+
\le
C\log\left(\frac{\tau_j-r+1}{\alpha_r}\right).
\]
Since \(\mathbb P_\nu(\mathcal G_r)\ge 1-\alpha_r\), the proof is complete.

\subsection{Extension to vector-valued observations}
\label{app:ATC-vector}

This subsection summarizes how the ATC algorithm and the upper-bound analysis extend to the case
$X_t\in\bb{R}^d$ with piecewise-constant mean $\mu_t\in\bb{R}^d$. We focus on the key changes
(statistic, threshold, confidence bound, and the resulting bias/variance scaling), and omit the full proofs.

\paragraph{Vector-valued model and regret.}
Assume the same change-point structure as in Section~\ref{sec:problem-formulation}, except that
$X_t,\mu_t\in\bb{R}^d$. We measure performance by the squared Euclidean dynamic regret
\begin{equation}\label{eq:ATC-vector-regret}
\mathcal{R}_T(\pi,\nu)
\;=\;
\bb{E}_{\pi,\nu}\Big[\sum_{t=2}^T \|\hat\mu_t^\pi-\mu_t\|_2^2\Big].
\end{equation}
We assume an isotropic vector sub-Gaussian noise model with proxy $\sigma^2$: for all $t\ge 1$,
all $\lambda\in\bb{R}$, and all $u\in\bb{R}^d$ with $\|u\|_2=1$,
\begin{equation}\label{eq:ATC-vector-subG}
\bb{E}\Big[\exp\big(\lambda\,u^\top (X_t-\mu_t)\big)\Big]
\;\le\;
\exp\!\big(\tfrac{\sigma^2\lambda^2}{2}\big).
\end{equation}

\paragraph{Algorithmic structure.}
ATC maintains restart times $r$ and outputs the running mean since the last restart:
\begin{equation}\label{eq:ATC-vector-output}
\hat\mu_t
\;=\;
\frac{1}{t-r}\sum_{i=r}^{t-1} X_i
\;\in\;\bb{R}^d.
\end{equation}
Implementation uses vector prefix sums $G_t=\sum_{i=1}^t X_i\in\bb{R}^d$, so that
$\hat\mu_t=(G_{t-1}-G_{r-1})/(t-r)$.

\paragraph{Vector CUSUM/GLR statistic.}
For $r < k<t$, define the block averages
\[
\bar X_{r:k-1} := \frac{1}{k-r}\sum_{i=r}^{k-1} X_i,
\qquad
\bar X_{k:t-1} := \frac{1}{t-k}\sum_{i=k}^{t-1} X_i,
\]
and the (standardized) two-sample mean-difference statistic
\begin{equation}\label{eq:ATC-vector-Dstat}
\hat D_{k,t}^r
\;:=\;
\frac{1}{\sigma}\sqrt{\frac{(k-r)(t-k)}{(t-r)}}\;
\big\|\bar X_{r:k-1} - \bar X_{k:t-1}\big\|_2.
\end{equation}
The scan statistic and stopping time remain
\[
C_t^r \coloneqq \max_{r < k<t}\hat D_{k,t}^r,
\qquad
N_G^r \coloneqq \inf\{t>r:\ C_t^r \ge \gamma_t^r\}.
\]
Computationally, each candidate evaluation is now $O(d)$, so the naive scan is $O(d(t-r))$ per time step, and the multiscale approximation yields $O(d\log(t-r))$ candidates per time step.

\paragraph{Population statistic and effective gaps.}
Define the population analogue
\begin{equation}\label{eq:ATC-vector-Dpop}
D_{k,t}^r
\;:=\;
\frac{1}{\sigma}\sqrt{\frac{(k-r)(t-k)}{(t-r)}}\;
\big\|\bar \mu_{r:k-1} - \bar \mu_{k:t-1}\big\|_2.
\end{equation}
All ``effective mean'' definitions in Appendix~\ref{app:proofs-lem-ATC-regret-decomposition}
extend by replacing absolute values by $\|\cdot\|_2$; in particular, for a change $\tau_j$ inside a
block started at $r$, the effective gap becomes
\begin{equation}\label{eq:ATC-vector-effective-gap}
\Delta_j^{(\mathrm{eff})}
\;:=\;
\big\|\mu_L^{(j)}-\mu_j\big\|_2.
\end{equation}

\paragraph{Confidence bound and threshold.}
The key modification is a uniform concentration inequality for the vector statistic.
Let $A_{k,t}^r\in\bb{R}^d$ be the vector analogue of App.~\ref{ATC-app:confidence-proof}:
\[
A_{k,t}^r
\;:=\;
\frac{1}{\sigma}\left(\frac{t-k}{(k-r)(t-r)} \right)^{\!1/2} \sum_{i=r}^{k-1} (X_i - \mu_i)
\;-\;
\frac{1}{\sigma}\left(\frac{k-r}{(t-k)(t-r)}\right)^{\!1/2} \sum_{i=k}^{t-1} (X_i-\mu_i).
\]
Then by the reverse triangle inequality,
\begin{equation}\label{eq:ATC-vector-dev-relation}
\bigl|\hat D_{k,t}^r - D_{k,t}^r\bigr|
\;\le\;
\|A_{k,t}^r\|_2.
\end{equation}
Under \eqref{eq:ATC-vector-subG}, $A_{k,t}^r$ is a $1$-sub-Gaussian vector (in the standard sense
that all one-dimensional projections are $\sigma^2$-sub-Gaussian), which implies a tail bound of the form
$\bb{P}(\|A_{k,t}^r\|_2 \ge C(\sqrt{d}+\sqrt{x}))\le e^{-x}$ (for some universal constant C).
Combining this tail bound with the same union bound over $(k,t)$ as in
App.~\ref{ATC-app:confidence-proof} yields the following threshold:
\begin{equation}\label{eq:ATC-vector-threshold}
\gamma_t^r
\;:=\;
C_0\left(
\sqrt{d}
+
\sqrt{
6 \log (t-r)
+
2\log (1/\alpha_r)
+
2 \log(\pi^2/3)
}
\right),
\qquad
\alpha_r=\frac{6}{\pi^2}\frac{\alpha}{r^2},
\end{equation}
for some universal constant $C_0>0$.
Following the same arguments as in App.~\ref{ATC-app:confidence-proof}, we can show that for each fixed deterministic $r$, with $\gamma_t^r$ as in Eq.~\eqref{eq:ATC-vector-threshold},
\begin{equation}\label{eq:ATC-vector-confidence-event}
\bb{P}\!\left(
\exists\,k,t \in [T],\ r < k<t:\ 
\bigl|\hat D_{k,t}^r - D_{k,t}^r \bigr|\ge \gamma_t^r
\right)
\;\le\; \alpha_r.
\end{equation}
As in the scalar case, this implies
$\bb{E}[N_{\mathrm{FA}}]\le \sum_r \alpha_r \le \alpha$ and hence $\bb{E}[K]\le S+1+\alpha$
(where $K$ is the number of restart blocks).

\paragraph{Effect on variance regret.}
Within a block, $\hat\mu_t-\bar\mu_t^r$ is an average of independent centered sub-Gaussian vectors.
Under Eq.~\eqref{eq:ATC-vector-subG}, one obtains a pointwise bound
\begin{equation}\label{eq:ATC-vector-pointwise-var}
\bb{E}\big[\|\hat\mu_t-\bar\mu_t^r\|_2^2\big]
\;\le\;
c\,\frac{d\sigma^2}{t-r},
\end{equation}
for a universal constant $c>0$. Summing over time within each block (using arguments similar to those in the $d=1$ case) yields harmonic series terms as in
App.~\ref{ATC-proofs-lemma:var-upper-bound}, and using $\bb{E}[K]\le S+1+\alpha$ gives
\begin{equation}\label{eq:ATC-vector-var-bound}
\mathcal{R}_T^{\mathrm{var}}
\;\le\;
C_{V}^{(d)}\, d\sigma^2\,(S+1+\alpha)
\left(1+\log T\right),
\end{equation}
for some absolute constant $C_{V}^{(d)}>0$ (absorbing universal constants).

\paragraph{Effect on bias regret.}
The regret decomposition in Lemma~\ref{app:lem-ATC-regret-decomoposition} carries over with
$(\cdot)^2$ replaced by $\|\cdot\|_2^2$ and effective gaps as in  Eq.~\eqref{eq:ATC-vector-effective-gap}.
The deviation event contribution is controlled as before, using the vector confidence event $M^2 \alpha$ (where here $M$ is defined as the natural extension to the $d$-dimensional setting).
The deterministic drift-threshold argument is unchanged
except that it now compares $D_{k,t}^r$ to the larger threshold in Eq.~\eqref{eq:ATC-vector-threshold}.
Since \((\gamma_t^r)^2 \lesssim d+\log((t-r)/\alpha_r)\), the
corresponding regret contribution scales as
\(\sigma^2(\gamma_t^r)^2\lesssim
\sigma^2(d+\log((t-r)/\alpha_r))\) (beyond the scalar $O(\sigma^2\log(T/\alpha))$ term). Concretely,
for some absolute constant $C_B^{(d)}>0$,
\begin{equation}\label{eq:ATC-vector-bias-bound}
\mathcal{R}_T^{\mathrm{bias}}
\;\le\;
M^2\,(\alpha+S)
\;+\;
C_B^{(d)}\,\sigma^2\,S\left(d+\log\frac{T}{\alpha}\right).
\end{equation}

\paragraph{Resulting regret upper bound in $\bb{R}^d$.}
Combining Eq.~\eqref{eq:ATC-vector-var-bound} and Eq.~\eqref{eq:ATC-vector-bias-bound} yields the
vector analogue of Theorem~\ref{th:upper-bound}: there exist constants
$C_V^{(d)},C_B^{(d)}>0$ such that
\begin{equation}\label{eq:ATC-vector-regret-bound}
\mathcal{R}_T^{\mathrm{ATC}}(d)
\;\le\;
C_V^{(d)}\, d\sigma^2\,(S+1+\alpha)\left(1+\log T\right)
\;+\;
C_B^{(d)}\,\sigma^2\,S\left(d+\log\frac{T}{\alpha}\right)
\;+\;
M^2\,(\alpha+S).
\end{equation}
For fixed $\alpha\in(0,1)$, the dominant scaling is
$\mathcal{R}_T^{\mathrm{ATC}}=O\!\big(d\sigma^2(S+1)\log(T)\big)$, up to constants.

\end{document}